\documentclass[11pt]{article}
\usepackage{setspace}
\usepackage{cprotect}
\usepackage{amsmath,amssymb,amsthm}
\usepackage[noend]{algorithmic}
\usepackage[ruled,vlined]{algorithm2e}
\usepackage{hyperref}
\usepackage{fullpage}

\usepackage{makeidx}
\usepackage{enumerate}
\usepackage{graphicx,float,psfrag,epsfig}
\usepackage{epstopdf}
\usepackage{color}
\usepackage{enumitem}
\usepackage{subfig}
\usepackage{caption}
\usepackage{bigints}
\usepackage{mathtools}
\usepackage[mathscr]{euscript}

\DeclareSymbolFont{rsfs}{U}{rsfs}{m}{n}
\DeclareSymbolFontAlphabet{\mathscrsfs}{rsfs}

\numberwithin{equation}{section}

\newtheoremstyle{myexample} 
    {\topsep}                    
    {\topsep}                    
    {\rm }                   
    {}                           
    {\bf }                   
    {.}                          
    {.5em}                       
    {}  

\newtheoremstyle{myremark} 
    {\topsep}                    
    {\topsep}                    
    {\rm}                        
    {}                           
    {\bf}                        
    {.}                          
    {.5em}                       
    {}  

\newtheorem{claim}{Claim}[section]
\newtheorem{lemma}[claim]{Lemma}

\newtheorem{theorem}{Theorem}
\newtheorem{proposition}[claim]{Proposition}

\theoremstyle{myremark}
\newtheorem{remark}{Remark}[section]

\theoremstyle{myremark}

\theoremstyle{myexample}

\def\hx{\hat{x}}
\def\hu{\hat{u}}

\def\hbx{\hat{\boldsymbol x}}

\def\op{\mbox{\tiny\rm op}}

\def\sT{{\sf T}}
\def\<{\langle}
\def\>{\rangle}

\def\prob{{\mathbb P}}

\def\E{{\mathbb E}} 

\newcommand\norm[1]{\left\lVert{#1}\right\rVert}

\def\de{{\rm d}}

%
%

\def\reals{\mathbb{R}}

\def\normal{{\sf N}}
\def\cnormal{{\sf CN}}

\def\sb{{\sf b}}
\def\bsb{\bar{\sf b}}

%
%

\def\cT{{\mathcal{T}}}

%
%

\def\bg{{\boldsymbol g}}

\def\by{{\boldsymbol y}}

\def\bD{{\boldsymbol D}}

\def\bI{{\boldsymbol I}}

\def\be{{\boldsymbol e}}

\def\bx{{\boldsymbol x}}
\def\bxs{\hat{\boldsymbol x}^{\rm s}}
\newcommand{\bxl}{\hat{\boldsymbol x}^{\rm L}}
\def\b0{{\boldsymbol 0}}

\def\ba{{\boldsymbol a}}

\def\olL{\bar{L}}
\def\olDel{\bar{\Delta}}

\def\bZ{{\boldsymbol Z}}

\def\tbu{\tilde{\boldsymbol u}}

\def\bu{{\boldsymbol u}}
\def\bn{{\boldsymbol n}}
\def\bw{{\boldsymbol w}}

\def\hbx{\hat{\boldsymbol x}}
\def\hbu{\hat{\boldsymbol u}}

\def\tbx{\tilde{\boldsymbol x}}
\def\tX{\tilde{X}}
\def\tU{\tilde{U}}
\def\tx{\tilde{x}}
\def\tu{\tilde{u}}
\def\tf{\tilde{f}}

\def\th{\tilde{h}}

\def\id{{\boldsymbol I}}
\def\bzero{{\boldsymbol 0}}

\def\bA{{\boldsymbol A}}

\def\bg{{\boldsymbol g}}

\def\U{{\rm U}}
\def\U1{{\rm U}(1)}

\def\eps{{\varepsilon}}

\def\tow2{\stackrel{W_2}{\Rightarrow}}
\def\PL{{\rm PL}}

\newcommand\abs[1]{\left\lvert{#1}\right\rvert}

\def\sfp{{\sf p}}
\def\sc{{\sf c}}
\def\bsc{\bar{\sf c}}
\def\tsc{\tilde{\sf c}}
\def\tsb{\tilde{\sf b}}

\newcommand{\beq}{\begin{equation}}
\newcommand{\eeq}{\end{equation}}
\newcommand{\diag}{{\rm diag}}


\title{Approximate Message Passing with Spectral Initialization\\ for Generalized Linear Models}

\author{Marco Mondelli\thanks{Institute of Science and Technology (IST) Austria. Email: \texttt{marco.mondelli@ist.ac.at}.}\;\;\;and\;\;\;Ramji Venkataramanan\thanks{Department of Engineering, University of Cambridge. Email: \texttt{ramji.v@eng.cam.ac.uk}.}}

\begin{document}

\maketitle

\begin{abstract}
 We consider the problem of estimating a signal from measurements obtained via a generalized linear model.  We focus on estimators based on approximate message passing (AMP), a family of iterative algorithms with many appealing features: the performance of AMP in the high-dimensional limit can be succinctly characterized under suitable model assumptions; AMP can also be tailored to the empirical distribution of the signal entries, and for a wide class of estimation problems, AMP is  conjectured to be optimal among all polynomial-time algorithms. 

However, a major issue of AMP is that in many models (such as phase retrieval),  it requires an initialization correlated with the ground-truth signal and independent from the measurement matrix. Assuming that such an initialization is available is typically not realistic.  In this paper, we solve this problem by proposing an AMP algorithm initialized with a spectral estimator. With such an initialization, the standard AMP analysis fails since the spectral estimator depends in a complicated way on the design matrix.   Our main  contribution is a rigorous characterization of the performance of AMP with spectral initialization in the high-dimensional limit. The key technical idea is to  define and analyze a two-phase artificial AMP algorithm that first produces the spectral estimator, and then closely approximates the iterates of the true AMP.   We also provide numerical results that demonstrate the validity of the proposed approach. 
\end{abstract}

\section{Introduction}

We consider the problem of estimating a $d$-dimensional signal $\bx \in\mathbb R^d$ from $n$ i.i.d. measurements $y_i\sim p(y\mid \langle\bx, \ba_i \rangle)$, $i\in\{1, \ldots, n\}$, where $\langle\cdot, \cdot\rangle$ is the scalar product, $\{\ba_i\}_{1\le i\le n}$ are given sensing vectors, and the (stochastic) output function $p(\cdot \mid \langle\bx, \ba_i \rangle)$ is a given probability distribution. This is known as a \emph{generalized linear model}  \cite{mccullagh2018generalized}, and it encompasses many settings of interest in statistical estimation and signal processing \cite{noiseRangan, boufounos20081, photonVetterli, eldar2012compressed}. One notable example is the problem of phase retrieval \cite{phFienup, shechtman2015phase}, where
$y_i = |\langle\bx, \ba_i\rangle|^2+w_i$,
with $w_i$ being noise. Phase retrieval appears in several areas of science and engineering, see e.g. \cite{fienup1987phase, millane1990phase, demanet2017convex}, and the last few years have witnessed a surge of interest in the design and analysis of efficient algorithms; see the review \cite{fannjiang2020numerics} and the discussion at the end of this section. 

Here, we consider generalized linear models (GLMs) in the high-dimensional setting where $n,d\to\infty$, with their ratio tending to a fixed constant, i.e., $n/d\to \delta\in \mathbb R$. We focus on a family of iterative algorithms known as approximate message passing (AMP). AMP algorithms are derived via approximations of  belief propagation on the factor graph representing the estimation problem. AMP algorithms were first proposed for estimation in linear models \cite{DMM09, BM-MPCS-2011}, and  for estimation in GLMs  \cite{RanganGAMP}. AMP has since been applied to a wide range of high-dimensional statistical estimation problems including compressed sensing \cite{krzakala2012, BayatiMontanariLASSO, maleki2013asymptotic}, low rank matrix estimation \cite{rangan2012iterative,deshpande2014information, kabashima2016phase}, group synchronization \cite{perry2018message}, and specific instances of GLMs such as logistic regression \cite{sur2019modern} and phase retrieval \cite{schniter2014compressive,ma2019optimization, maillard2020phase}. 

Starting from an initialization $\bx^0 \in \reals^d$, the AMP algorithm for GLMs produces iteratively refined estimates of the signal, denoted by $\bx^t$, for $t \geq 1$.
An appealing feature of AMP is that, under suitable model assumptions, its performance in the high-dimensional limit can be precisely characterized by a succinct deterministic recursion called \emph{state evolution} \cite{BM-MPCS-2011, bolthausen2014iterative, javanmard2013state}.  Specifically, in the high-dimensional limit, the empirical distribution of the estimate $\bx^t$ converges to the law of the random variable $\mu_t X  + \sigma_t W_t$, for $t \ge 1$. Here $X \sim P_X$ (the signal prior), and $W_t \sim  \normal(0,1)$ is independent of $X$. The state evolution recursion specifies how the constants $(\mu_t, \sigma_t)$ can be computed from  $(\mu_{t-1}, \sigma_{t-1})$ (see Sec. \ref{sec:GAMP_spec} for details).

Using the state evolution analysis, it has been shown that AMP provably achieves Bayes-optimal performance in some special cases \cite{DonSpatialC13,deshpande2014information, montanari2017estimation}. Indeed, a conjecture from statistical physics posits that AMP is optimal among all polynomial-time algorithms. The optimality of AMP for generalized linear models is discussed in \cite{barbier2019optimal}.

 However,  when used for estimation in GLMs,  a major issue in current AMP theory is that in many problems (including phase retrieval) we require an initialization $\bx^0$ that is correlated with the unknown signal $\bx$ but independent of the sensing vectors $\{ \ba_i \}$. It is often not realistic to assume that such a realization is available.
 For such GLMs,  without a correlated initialization, asymptotic state evolution analysis predicts that the AMP estimates will be uninformative, i.e., their normalized correlation with the signal vanishes in the large system limit. Thus, developing an AMP theory that does not rely on unrealistic assumptions about the initialization is an important open problem.
  
 In this paper, we solve this open problem for a wide class of GLMs by rigorously analyzing the AMP algorithm with a \emph{spectral estimator}. The idea of using a spectral estimator for GLMs was introduced in  \cite{li1992principal}, and its performance in the high-dimensional limit was recently characterized in \cite{lu2017phase,mondelli2017fundamental}. It was shown that the normalized correlation of the spectral estimator with the signal undergoes a phase transition, and for  the special case of phase retrieval, the  threshold for strictly positive correlation with the signal matches the information-theoretic threshold \cite{mondelli2017fundamental}.

 Our main technical contribution is a novel analysis of AMP with spectral initialization for GLMs, under the assumption that the sensing vectors $\{\ba_i\}$ are i.i.d. Gaussian. This yields a rigorous  characterization of the performance in the high-dimensional limit (Theorem \ref{thm:GLM_spec}).  
 The analysis of AMP with spectral initialization is far from obvious since the spectral estimator depends in a non-trivial way on the sensing vectors $\{\ba_i\}$. The existing state evolution analysis for GLMs \cite{RanganGAMP, javanmard2013state} crucially depends on the AMP initialization being independent of the sensing vectors, and therefore cannot be directly applied.

 At the center of our approach is the design and analysis of an \emph{artificial AMP} algorithm.  The artificial AMP operates in two phases: in the first phase, it performs a power method, so that its iterates approach the spectral initialization of the true AMP; in the second phase, its iterates are designed to remain close to the iterates of the true AMP. The initialization of the artificial AMP is correlated with $\bx$, but independent of the sensing vectors $\{\ba_i\}$, which allows us to apply the standard state evolution analysis.  Note that the initialization of the artificial AMP is impractical (it requires the knowledge of the unknown signal $\bx$!). However, this is not an issue, since the artificial AMP is employed solely as a proof technique: we prove a state evolution result for the true AMP by showing that its iterates are close to those in the second phase of the artificial AMP.

 Initializing AMP with a (different) spectral method has been recently shown to be  effective for low-rank matrix estimation \cite{montanari2017estimation}.  However,  our proof technique for analyzing spectral initialization for GLMs is different from the approach in  \cite{montanari2017estimation}. The argument in that paper is specific to the spiked random matrix model and relies on a delicate decoupling argument between the outlier eigenvectors and the bulk. Here, we follow an approach developed in \cite{mondelli2020optimal}, where a specially designed AMP is used to establish the joint empirical distribution of the  signal, the spectral estimator, and the linear estimator.

 For the case of phase retrieval, in \cite{MaXM18} it is provided a heuristic argument for the validity of spectral initialization, and it is stated that establishing a rigorous proof is an open problem. Our paper not only solves this open problem, but it also gives a provable initialization method valid for a class of GLMs. 
 
 We note that, for some GLMs, AMP does not require a special  initialization that is correlated with the signal $\bx$. In Section \ref{sec:GAMP_spec}, we give a condition on the GLM output function that specifies precisely when such a correlated initialization  is required  (see \eqref{eq:ncond}).  This condition is satisfied by several popular GLMs, including phase retrieval. It is in these cases that  AMP with spectral initialization is most useful.

\vspace{1em}

\noindent\textbf{Other related work.} For the problem of phase retrieval,  several algorithmic solutions have been proposed and analyzed in recent years. An inevitably non-exhaustive list includes semi-definite programming relaxations \cite{candes2013phaselift, candes2015phase,
  candes2015phase2,  waldspurger2015phase}, a convex relaxation operating in the natural domain of the signal \cite{goldstein2018phasemax, bahmani2017phase}, alternating minimization \cite{netrapalli2013phase}, Wirtinger Flow \cite{candes2015wirt, chen2017solving, ma2020implicit}, iterative projections \cite{li2015phase}, the Kaczmarz method
\cite{wei2015solving, tan2019phase}, and mirror descent \cite{WuRebes2020}. A generalized AMP (GAMP) algorithm was introduced in \cite{schniter2014compressive}, and an AMP to solve the non-convex problem with $\ell_2$ regularization was proposed and analyzed in \cite{ma2019optimization}. Most of the algorithms mentioned above require an initialization correlated with the signal $\bx$ and, to obtain such an initialization, spectral methods are widely employed. 


Beyond the Gaussian setting, spectral methods for phase retrieval with random orthogonal matrices are analyzed in \cite{dudeja2020analysis}.
Statistical and computational phase transitions in phase retrieval for a large class of correlated real and complex random sensing matrices are investigated in \cite{maillard2020phase}, and a general AMP algorithm for rotationally invariant matrices is studied in \cite{fan2020approximate}. In \cite{emami2020generalization}, it is characterized the generalization error of GLMs via the multi-layer vector AMP (ML-VAMP) of \cite{fletcher2018inference,pandit2020inference}. Thus, the extension of our techniques to more general sensing models represents an interesting avenue for future research.



%
%
%
%



\section{Preliminaries}\label{sec:prel}

\noindent \textbf{Notation and definitions.} Given $n\in \mathbb N$, we use the shorthand $[n]=\{1, \ldots, n\}$. Given a vector $\bx$, we denote by $\| \bx \|_2$  its Euclidean norm. The \emph{empirical distribution} of a vector $\bx = (x_1, \ldots, x_d)^{\sT}$ is given by $ \frac{1}{d}\sum_{i=1}^d \delta_{x_i}$, where $\delta_{x_i}$ denotes a Dirac delta mass on $x_i$. Similarly, the empirical joint distribution of vectors $\bx, \bx'\in \reals^d$ is $\frac{1}{d} \sum_{i=1}^d \delta_{(x_i, x'_i)}$.

\vspace{1em}

\noindent \textbf{Generalized linear models.} Let $\bx \in \reals^d$ be the signal of interest, and assume that $\|\bx\|^2_2=d$.  The signal is observed via inner products with $n$ sensing vectors $(\ba_i)_{i \in [n]}$, with each $\ba_i  \in \reals^d$  having independent Gaussian entries with mean zero and variance $1/d$, i.e., $( \ba_i) \ \sim_{\rm i.i.d.} \ \normal(0, \bI_d/d)$. Given $g_i = \< \bx, \ba_i \>$, the components of the observed vector $\by=(y_1, \ldots, y_n) \in \reals^n$ are independently generated according to a conditional distribution $p_{Y| {G}}$, i.e., $y_i \ \sim \ p_{Y | G}(y_i \mid {g}_i)$.
We stack the sensing vectors as rows to define the $n  \times d$ sensing matrix $\bA$, i.e., $\bA = [\ba_1, \ldots, \ba_n]^{\sT}$.
For the special case of phase retrieval, the model is $\by = \abs{\bA \bx}^2 + \bw$, where $\bw$ is a noise vector with independent entries. We consider a sequence of problems of growing dimension $d$, and assume that, as $d \to \infty$, the sampling ratio $n/d \to \delta$, for some constant $\delta \in (0, \infty)$. We remark that, as $d\to\infty$, the empirical distribution of $\bg=(g_1, \ldots, g_n)$ converges in Wasserstein distance ($W_2$) to $G\sim\normal(0, 1)$.

\vspace{1em}

\noindent \textbf{Spectral initialization.} The spectral estimator $\bxs$ is the principal eigenvector of the $d \times d$ matrix $\bD_n$, defined as 
\beq
\label{eq:Dn_def}
\bD_n =   \bA^\sT \bZ_s \bA, \mbox{ with }\bZ_s = \diag(\cT_s(y_1), \ldots, \cT_s(y_n)),
\eeq
where $\cT_s: \reals \to \reals$ is a preprocessing function.  We now review some results from \cite{mondelli2017fundamental,lu2017phase} on the performance of the spectral estimator  in the high-dimensional limit.

Let $G\sim \normal(0, 1)$, $Y\sim p(\cdot \mid G)$, and $Z_s=\mathcal T_s(Y)$. We will make the following assumptions on $Z_s$. 

\begin{enumerate}
    \item[\textbf{(A1)}] $\mathbb P(Z_s=0)<1$. \label{assump:A1}
    
    \item[\textbf{(A2)}] $Z_s$ has bounded support and $\tau$ is the supremum of this support:\label{assump:A2}
\begin{equation}
  \tau = \inf\{ z : \mathbb P (Z_s\le z) = 1\}.  
\end{equation}

    \item[\textbf{(A3)}] As $\lambda$ approaches $\tau$ from the right, we have
\begin{equation}\label{eq:hplemmaub1}
\lim_{\lambda\to \tau^+}{\mathbb E}\left\{\frac{Z_s}{(\lambda-Z_s)^2}\right\}=\lim_{\lambda\to \tau^+}{\mathbb E}\left\{\frac{Z_s\cdot G^2}{\lambda-Z_s}\right\}=\infty.
\end{equation}
\end{enumerate}

For $\lambda\in (\tau, \infty)$ and $\delta\in (0, \infty)$, define
\begin{equation}\label{eq:newdef}
\phi(\lambda) = \lambda {\mathbb E}\left\{\frac{Z_s\cdot G^2}{\lambda-Z_s}\right\}, \hspace{1em}
\psi_{\delta}(\lambda) = \frac{\lambda}{\delta}+\lambda{\mathbb E}\left\{\frac{Z_s}{\lambda-Z_s}\right\}.
\end{equation}
Note that $\phi(\lambda)$ is a monotone non-increasing function and that $\psi_{\delta}(\lambda)$ is a convex function. Let $\bar{\lambda}_{\delta}$ be the point at which $\psi_{\delta}$ attains its minimum, i.e.,
$\bar{\lambda}_\delta = \arg\min_{\lambda\ge \tau} \psi_{\delta}(\lambda)$. For $\lambda\in (\tau, \infty)$, also define
\begin{equation}\label{eq:defzeta}
\zeta_{\delta}(\lambda) = \psi_{\delta}(\max(\lambda, \bar{\lambda}_\delta)).
\end{equation}
We remark that $\zeta_{\delta}$ is an increasing function and, from Lemma 2 in \cite{mondelli2017fundamental}, we have that the equation $\zeta_{\delta}(\lambda) = \phi(\lambda)$ admits a unique solution for $\lambda > \tau$.

The following result characterizes the performance of the spectral estimator $\bxs$. Its proof follows directly from Lemma 2 in \cite{mondelli2017fundamental}.

\begin{lemma}\label{lemma:pt}
Let $\bx$ be such that $\|\bx\|^2_2=d$, $\{\ba_i\}_{i\in [n]}\sim_{\rm i.i.d.}\normal({\b0}_d,\id_d/d)$, and $\by=(y_1, \ldots, y_n)$ with $\{y_i\}_{i\in [n]}\sim_{\rm i.i.d.} p_{Y | G}$. Let $n/d\to \delta$, $G\sim \normal(0, 1)$ and define $Z_s=\mathcal T_s(Y)$
for $Y\sim p_{Y|G}$. Assume that $Z_s$ satisfies the assumptions \textbf{(A1)}-\textbf{(A2)}-\textbf{(A3)}. Let $\bxs$ be the principal eigenvector of the matrix $\bD_n$ defined  in \eqref{eq:Dn_def}, and let $\lambda_{\delta}^*$ be the unique solution of $\zeta_{\delta}(\lambda) = \phi(\lambda)$ for $\lambda > \tau$. Then, as $n \to \infty$,
	\begin{equation}\label{eq:numpred}
\frac{|\langle\bxs, \bx\rangle|^2}{\norm{\bxs}_2^2 \, \norm{\bx}_2^2}\hspace{-.1em} \stackrel{\mathclap{\mbox{\footnotesize a.s.}}}{\longrightarrow}\hspace{-.1em}  a^2 
\hspace{-.1em}\triangleq \hspace{-.1em} \left\{\begin{array}{ll}
\vspace{1em}
0, & \hspace{-.75em}\mbox{if }\, \psi_{\delta}'(\lambda_{\delta}^*)\le 0,\\
\hspace{-.5em}\displaystyle\frac{\psi_{\delta}'(\lambda_{\delta}^*)}{\psi_{\delta}'(\lambda_{\delta}^*)-\phi'(\lambda_{\delta}^*)}, &\hspace{-.75em} \mbox{if } \, \psi_{\delta}'(\lambda_{\delta}^*)> 0,\\
\end{array}\right.
	\end{equation}		
where $\psi_{\delta}'$ and $\phi'$ are the derivatives of the respective functions. 
\end{lemma}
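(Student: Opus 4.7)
The plan is to view $\bD_n = \bA^\sT \bZ_s \bA$ as a low-rank deformation of a weighted sample covariance matrix and invoke a Baik--Ben Arous--P\'ech\'e style outlier phase transition analysis. Using the Gaussian rotational invariance of $\bA$, I would first write each row as $\ba_i = (g_i/d)\bx + \tilde{\ba}_i$, where $\tilde{\ba}_i$ is Gaussian on the hyperplane orthogonal to $\bx$ and independent of $g_i$. Substituting gives
\begin{equation*}
\bD_n \;=\; \frac{\bx\bx^\sT}{d^2}\sum_{i=1}^n Z_{s,i}\, g_i^2 \;+\; \frac{1}{d}\bigl(\bx\bv^\sT + \bv\bx^\sT\bigr) \;+\; \tilde{\bD}_n,
\end{equation*}
where $\bv = \sum_{i=1}^n Z_{s,i}\, g_i\, \tilde{\ba}_i$ and $\tilde{\bD}_n = \sum_{i=1}^n Z_{s,i}\,\tilde{\ba}_i\tilde{\ba}_i^\sT$. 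Conditionally on $\{g_i\}$, $\tilde{\bD}_n$ is a weighted Gram matrix whose randomness is confined to the hyperplane orthogonal to $\bx$, while the remaining pieces form a rank-two perturbation supported along $\bx$. By the law of large numbers the scalar in front of $\bx\bx^\sT/d$ concentrates around $\delta\,\E\{Z_s G^2\}$.

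Next, I would characterize the bulk of $\tilde{\bD}_n$. Since $\{\tilde{\ba}_i\}$ are i.i.d.\ Gaussian (asymptotically) in a codimension-one subspace and the weights $Z_{s,i}$ are i.i.d., the limiting empirical spectral distribution obeys a Marchenko--Pastur-type fixed-point equation whose Stieltjes transform, inverted on the branch above the spectrum, is precisely the function $\psi_\delta$ of \eqref{eq:newdef}. Assumption \textbf{(A1)} rules out a trivial bulk, \textbf{(A2)} bounds its support by $\tau$, and \textbf{(A3)} forces both $\psi_\delta$ and $\phi$ to diverge as $\lambda \to \tau^+$, which in particular implies $\bar{\lambda}_\delta = \arg\min_{\lambda\ge \tau}\psi_\delta(\lambda) > \tau$.

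I would then locate a possible outlier of $\bD_n$ via the secular equation attached to the rank-two perturbation. A standard resolvent identity shows that $\lambda > \bar{\lambda}_\delta$ is an eigenvalue of $\bD_n$ if and only if a $2\times 2$ determinant involving the projections of $\bx$ and $\bv$ onto the resolvent $(\lambda\bI - \tilde{\bD}_n)^{-1}$ vanishes. Using concentration of the resolvent and the exchangeability of $\tilde{\ba}_i$, this determinantal equation reduces above the edge to $\zeta_\delta(\lambda) = \phi(\lambda)$; Lemma~2 of \cite{mondelli2017fundamental} guarantees that this equation has a unique root $\lambda_\delta^*$ on $(\tau,\infty)$. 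By the definition \eqref{eq:defzeta} of $\zeta_\delta$, the outlier detaches from the bulk exactly when $\lambda_\delta^* > \bar{\lambda}_\delta$, which is equivalent to $\psi_\delta'(\lambda_\delta^*) > 0$. The squared cosine of the angle between $\bxs$ and $\bx$ is then computed via the residue of the resolvent at $\lambda_\delta^*$, producing the ratio $\psi_\delta'(\lambda_\delta^*)/(\psi_\delta'(\lambda_\delta^*) - \phi'(\lambda_\delta^*))$ in \eqref{eq:numpred}; in the subcritical regime the top eigenvector merges into the bulk and the overlap vanishes.

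The main technical obstacle is the coupling between the diagonal weights $Z_{s,i} = \mathcal{T}_s(y_i)$ and the sensing matrix $\bA$: since $y_i$ depends on $g_i = \langle \bx,\ba_i\rangle$, the weights are not independent of $\bA$. I would handle this by conditioning on $\{g_i\}$ and treating $\tilde{\bD}_n$ as a weighted sample covariance matrix with quenched bounded random weights, then establishing uniform resolvent concentration and stability of the Stieltjes equation up to the edge. A secondary subtlety is showing that the cross terms involving $\bv$ do not create a spurious outlier but only renormalize the secular equation to involve $\phi$ (which carries the factor $G^2$). All of this heavy machinery is carried out in Lemma~2 of \cite{mondelli2017fundamental}, so the present statement follows by a direct translation of that result into the notation used here.
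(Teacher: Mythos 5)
Your proposal is correct and ultimately coincides with the paper's own (one-line) proof: the paper simply observes that the statement follows directly from Lemma~2 of \cite{mondelli2017fundamental}, which is exactly the reduction you make in your final paragraph. The preceding BBP-style sketch of how that cited lemma is established (rank-two decomposition along $\bx$, Marchenko--Pastur bulk, secular equation $\zeta_\delta=\phi$, residue computation for the overlap) is a faithful outline of the underlying argument, apart from the minor slip that \textbf{(A3)} forces $\psi_\delta'$ (not $\psi_\delta$ itself) to diverge as $\lambda\to\tau^+$, which is what guarantees $\bar{\lambda}_\delta>\tau$.
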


\begin{remark}[Equivalent characterization]
Using the definitions \eqref{eq:newdef}-\eqref{eq:defzeta}, the conditions $\zeta_{\delta}(\lambda_{\delta}^*) = \phi(\lambda_{\delta}^*)$ and 
$\psi_{\delta}'(\lambda_{\delta}^*) > 0$ are equivalent to
\begin{equation}
\E\left\{ \frac{Z_s (G^2-1)}{\lambda_{\delta}^* - Z_s} \right\} = \frac{1}{\delta}, \,\,\mbox{and}\,\,\,\,    \E\left\{ \frac{Z_s^2}{(\lambda_{\delta}^* - Z_s)^2} \right\} < \frac{1}{\delta} \,.
\label{eq:corr_cond}
\end{equation}
When these conditions are satisfied, the limit of the normalized correlation in \eqref{eq:numpred} can be expressed as 
\begin{equation}
    a^2 = \frac{\frac{1}{\delta} -  \E\left\{ \frac{Z_s^2}{(\lambda_{\delta}^* - Z_s)^2} \right\} }{ \frac{1}{\delta} + \E\left\{ \frac{Z_s^2(G^2-1)}{(\lambda_{\delta}^* - Z_s)^2} \right\} }.
    \label{eq:a2_explicit}
\end{equation}
\label{rem:a2_explicit}
\end{remark}

\begin{remark}[Optimal preprocessing function]\label{rmk:opt}
In \cite{mondelli2017fundamental} it is derived the  preprocessing function minimizing the value of $\delta$ necessary to achieve weak recovery, i.e., a strictly positive correlation between $\bxs$ and $\bx$. In particular, let $\delta_{\rm u}$ be defined as 
\begin{equation}\label{eq:defdeltaur}
\delta_{\rm u} = \Bigg( \displaystyle\bigintssss_{\mathbb R}\frac{\left({\mathbb E}_{G}\left\{p(y\mid G)(G^2-1)\right\}\right)^2}{{\mathbb E}_{G}\left\{p(y\mid G)\right\}} \,{\rm d}y \Bigg)^{-1},
\end{equation}
with $G\sim \normal(0, 1)$. Furthermore, let us also define
\begin{equation}\label{eq:deftydeltar}
\bar{\mathcal T}(y) = \frac{\sqrt{\delta_{\rm u}}\cdot\mathcal T^*(y)}{\sqrt{\delta}-(\sqrt{\delta}-\sqrt{\delta_{\rm u}})\mathcal T^*(y)},
\end{equation}
where
\begin{equation}\label{eq:deftyr}
\mathcal T^*(y) = 1-\frac{{\mathbb E}_{G}\left\{p(y\mid G)\right\}}{{\mathbb E}_{G}\left\{p(y\mid G)\cdot G^2\right\}}.
\end{equation}
Then, by taking $\cT_s=\bar{\mathcal T}$, for any $\delta>\delta_{\rm u}$, we  almost surely have
\begin{equation}\label{eq:corrub}
\lim_{n\to \infty}\frac{|\langle\bxs, \bx\rangle|}{\norm{\bxs}_2 \, \norm{\bx}_2} > \epsilon,
\end{equation}
for some $\epsilon >0$. Furthermore, for any $\delta < \delta_{\rm u}$, there is no pre-processing function $\mathcal T$ such that, almost surely, \eqref{eq:corrub} holds. For a more formal statement of this result, see Theorem 4 of \cite{mondelli2017fundamental}.
The preprocessing function that, at a given $\delta>\delta_{\rm u}$, maximizes the correlation between $\bxs$ and $\bx$ is also related to $\mathcal T^*(y)$ as defined in \eqref{eq:deftyr},  and it is derived in \cite{luo2019optimal}.
\end{remark}
\section{Generalized Approximate Message Passing with Spectral Initialization} \label{sec:GAMP_spec}

We make the following additional assumptions on the signal $\bx$, the output distribution $p_{Y\mid G}$, and the preprocessing function $\cT_s$ used for the spectral estimator.

\begin{enumerate}
    \item[\textbf{(B1)}] Let $\hat{P}_{X,d}$ denote the \emph{empirical distribution} of $\bx \in \reals^d$. As $d \to \infty$, $\hat{P}_{X,d}$ converges weakly to a distribution $P_X$ such that $\lim_{d \to \infty} \E_{\hat{P}_{X,d}}\{ \abs{X}^{2} \} = \E_{P_X}\{ \abs{X}^{2} \}$. We note that $\E_{P_X}\{\abs{X}^2\} =1$, since we assume  $\|\bx\|_2^2=d$. \label{assump:xdist}
    
    \item[\textbf{(B2)}] We have $\E\{ \abs{Y}^{2} \} < \infty$, for $Y\sim p_{Y|G}(\,\cdot\,|\,G)$ and $G \sim \normal(0,1)$. Furthermore, there exists a function  $q: \reals \times \reals \to \reals$  and a random variable $V$ independent of  $G$ such that $Y= q(G, V)$. More precisely, 
for any measurable set $A \subseteq \mathcal{Y}$ and almost every $g$, we have $\prob( Y \in A \mid G =g ) = \prob(q(g, V) \in A)$. We also assume that $\E\{ \abs{V}^{2} \} < \infty$. This is without loss of generality due to the functional representation lemma, see p. 626 of \cite{el2011network}. 
\label{assump:h_zv}

    \item[\textbf{(B3)}] The function $\cT_s: \reals \to \reals$ is bounded and Lipschitz.
\end{enumerate}

Following the terminology of \cite{RanganGAMP}, we refer to the AMP for generalized linear models as GAMP.
In each iteration $t$, the proposed GAMP algorithm produces an estimate $\bx^t$ of the signal $\bx$. The algorithm is defined in terms of a sequence of Lipschitz functions $f_t:\mathbb R\to \mathbb R$ and $h_t:\mathbb R \times \reals \to \mathbb R$, for $t \geq 0$. We initialize using the spectral estimator $\bxs$:
\begin{align}
& \bx^0= \sqrt{d} \, \, \frac{1}{\sqrt{\delta}}\bxs, \label{eq:x0_spec} \\
& \bu^0 = \frac{1}{\sqrt{\delta}} \bA f_{0}(\bx^0) - \sb_0 \frac{\sqrt{\delta}}{\lambda_\delta^*} \,  \bZ_s  \bA \bx^0, \label{eq:u0_spec} 
\end{align}
where $\sb_0 = \frac{1}{n}\sum_{i=1}^d f_{0}'(x_i^0)$,  the diagonal matrix $\bZ_s$ is defined in \eqref{eq:Dn_def}, and $\lambda_\delta^*$ is given by \eqref{eq:corr_cond}.  Then, for $t \geq 0$, the algorithm computes:
\begin{align}
\bx^{t+1} &= \frac{1}{\sqrt{\delta}}\bA^\sT h_t(\bu^t; \by) - \sc_t f_t(\bx^t),  \label{eq:xt_update} \\
\bu^{t+1} &= \frac{1}{\sqrt{\delta}}\bA f_{t+1}(\bx^{t+1})-\sb_{t+1} h_t(\bu^{t}; \by). \label{eq:ut_update} 
\end{align}
 Here the functions $f_t$ and $h_t$ are understood to be applied component-wise, i.e., $f_t(\bx^t)=(f_t(x^t_1)$, $\ldots, f_t(x^t_d))$ and $h_t(\bu^t; \by)=(h_t(u^t_1; y_1), \ldots, h_t(u^t_n; y_n))$. The scalars $\sb_t, \sc_t$ are defined as
\begin{equation}
\sc_t = \frac{1}{n}\sum_{i=1}^n h_t'(u_i^t; y_i), \qquad \sb_{t+1} =\frac{1}{n}\sum_{i=1}^d f_{t+1}'(x_i^{t+1}),
\label{eq:GAMP_onsager}
\end{equation}
where $h_t'(\cdot, \cdot)$ denotes the derivative with respect to the first argument.

 The asymptotic empirical distribution of the GAMP iterates $\bx^t, \bu^{t}$, for $t\ge 0$, can be succinctly characterized via a deterministic recursion, called \emph{state evolution}.  Our main result, Theorem \ref{thm:GLM_spec}, shows that for $t \geq 0$, the empirical distributions of $\bu^t$ and $\bx^t$ converge in Wasserstein distance $W_2$ to the laws of the random variables $U_t$ and $X_t$, respectively, with 
\begin{align}
    X_t & \equiv \mu_{X,t} X + \sigma_{X,t} W_{X,t},    \label{eq:Xt_def} \\
        U_t & \equiv \mu_{U,t} G + \sigma_{U,t} W_{U,t},  \label{eq:Ut_def}
\end{align}
where  $(G, W_{U,t}) \sim_{\rm i.i.d.} \normal(0,1)$. Similarly,  $X \sim P_X$ and $W_{X,t} \sim \normal(0,1)$ are independent. The deterministic  parameters $(\mu_{U,t}, \sigma_{U,t}$, $\mu_{X,t}, \sigma_{X,t} )$ are recursively computed as follows, for $t \ge 0$: 
\begin{align}
    \mu_{U,t} & = \frac{1}{\sqrt{\delta}} \E\{ X f_t(X_t) \}, \nonumber \\
     \sigma_{U,t}^2 
     %
     & = 
     \frac{1}{\delta} \E\big\{ f_t(X_t)^2 \big\} - \mu_{U,t}^2, \label{eq:SE_def} \\
     \mu_{X,t+1}&\hspace{-0.15em} =\hspace{-0.15em}\sqrt{\delta} \E\{ G h_t(U_t; Y) \} \hspace{-0.1em}-\hspace{-0.1em}  \E\{ h_t'(U_t;Y) \} \E\{ X f_t(X_t) \}, 
     \nonumber \\
     \sigma_{X,t+1}^2 & = \E\{ h_t(U_t; Y)^2 \}. \nonumber
\end{align}

For the spectral initialization in \eqref{eq:x0_spec}-\eqref{eq:u0_spec},  with $a$ as defined in \eqref{eq:numpred}, the recursion is initialized with
\beq
\label{eq:SE_init}
\mu_{X,0} = {a}/{\sqrt{\delta}}, \qquad \sigma^2_{X,0} = {(1 -a^2)}/{\delta}.
\eeq

We state the main result in terms of \emph{pseudo-Lipschitz} test functions.
A function $\psi: \reals^m \to \reals$ is pseudo-Lipschitz of order $2$, i.e., $\psi \in \PL(2)$, if there is a constant $C > 0$  such that 
\beq
\norm{\psi(\bx)-\psi(\by)}_2 \le C (1 + \| \bx \|_2 + \| \by \|_2 )\norm{\bx - \by}_2,
\label{eq:PL2_prop}
\eeq
for all $\bx,\by \in \mathbb{R}^m$.
Examples of test functions in $\PL(2)$ with $m=2$ include $\psi(a,b) = (a-b)^2$, $\psi(a,b) =ab$. 
\begin{theorem}\label{thm:GLM_spec}
Let $\bx$ be such that $\|\bx\|^2_2=d$, $\{\ba_i\}_{i\in [n]}\sim_{\rm i.i.d.}\normal({\b0}_d,\id_d/d)$, and $\by=(y_1, \ldots, y_n)$ with $\{y_i\}_{i\in [n]}\sim_{\rm i.i.d.} p_{Y | G}$. Let $n/d\to \delta$, $G\sim \normal(0, 1)$, and $Z_s=\mathcal T_s(Y)$ for $Y\sim p_{Y|G}(\,\cdot\,|\,G)$. Assume that \textbf{(A1)}-\textbf{(A2)}-\textbf{(A3)} and \textbf{(B1)}-\textbf{(B2)}-\textbf{(B3)} hold. Assume further that $\psi_{\delta}'(\lambda_{\delta}^*)> 0$, and let $\bxs$ be the principal eigenvector of $\bD_n$, defined as in \eqref{eq:Dn_def}, with the sign of $\bxs$ chosen so that $\langle\bxs, \bx\rangle\ge 0$. 

Consider the GAMP iteration in Eqs.~\eqref{eq:ut_update}--\eqref{eq:xt_update}  with initialization in Eqs. \eqref{eq:x0_spec}-\eqref{eq:u0_spec}. Assume that for $t \geq 0$, the functions $f_t, h_t$ are Lipschitz with derivatives that are continuous almost everywhere.
 Then, the following limits hold almost surely for any \PL($2$) function $\psi:\reals\times\reals\to\reals$ and $t$ such that $\sigma_{X,k}^2$ is  strictly positive for $0 \leq k \leq t$:
\begin{align}
\label{eq:SE_main}
& \lim_{d \to \infty}  \frac{1}{d} \sum_{i=1}^d \psi (x_{i},x^{t+1}_i) \\
& \quad  = \E\{ \psi( X, \, \mu_{X,t+1} X   + \sigma_{X,t+1}  W_{X,t+1}) \}, \nonumber  \\
& \lim_{n \to \infty} \frac{1}{n} \sum_{i=1}^n \psi (y_{i},u^{t}_i) 
 = \E \left\{ \psi( Y, \,  \mu_{U,t} G   + \sigma_{U,t}  W_{U,t}) \right\}. \label{eq:SE_main_U}  
\end{align}
The result \eqref{eq:SE_main} also holds for $(t+1)=0$.
In \eqref{eq:SE_main} (resp. \eqref{eq:SE_main_U}), the expectation is over the independent random variables  $X \sim P_X$ and $W_{X,t} \sim \normal(0,1)$ (resp. $(G, W_{U,t}) \sim_{\rm i.i.d.} \normal(0,1)$). The scalars $(\mu_{X, t}, \mu_{U, t}, \sigma_{X, t}^2, \sigma_{U, t}^2)_{t \geq 0}$ are given by the  recursion \eqref{eq:SE_def} with the initialization \eqref{eq:SE_init}. 
\end{theorem}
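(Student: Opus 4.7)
The plan is to prove Theorem~\ref{thm:GLM_spec} by constructing a two-phase \emph{artificial} GAMP whose initialization is correlated with $\bx$ but independent of $\bA$, so that the standard state evolution theorem for GAMP (as established in~\cite{RanganGAMP, javanmard2013state}) applies directly to the whole run. In Phase~I, the artificial AMP emulates a power iteration on the matrix $\bD_n = \bA^\sT \bZ_s \bA$, so that after $K$ rounds its iterate $\hbv^K$ approximates the rescaled spectral estimator $\sqrt{d/\delta}\,\bxs$ appearing in~\eqref{eq:x0_spec}. In Phase~II, starting from $\hbv^K$, the artificial AMP runs the same nonlinear updates~\eqref{eq:xt_update}--\eqref{eq:ut_update} as the true GAMP. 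One then proves Theorem~\ref{thm:GLM_spec} by (a) applying standard state evolution to the artificial AMP to obtain a joint Gaussian limit for its iterates, and (b) showing that the true GAMP iterates stay close in normalized $\ell_2$ distance to the artificial ones, so that pseudo-Lipschitz test functions agree in the limit.

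For Phase~I, I would realize the power method as an AMP with a bipartite structure alternating between $\reals^d$ and $\reals^n$: updates of the form $\hbv^{k+1} = \bA^\sT \bZ_s \bA \hbv^k - (\text{Onsager})$, decomposed through an auxiliary $n$-dimensional iterate, with linear denoisers and Onsager coefficients tuned so that the scalar state evolution recursion for the pair $(\mu_k, \sigma_k)$ describing $\hbv^k \approx \mu_k \bx + \sigma_k \bw_k$ has the fixed point $(\mu_\infty, \sigma_\infty^2) = (a/\sqrt{\delta},(1-a^2)/\delta)$ predicted by Lemma~\ref{lemma:pt} and~\eqref{eq:SE_init}. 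The initialization $\hbv^0$ is taken to be a small perturbation of $\bx$ that is independent of $\bA$ (this is the step that is unavailable to an actual algorithm but fine for a proof device). Convergence of the scalar recursion to its fixed point as $K \to \infty$ relies on the assumption $\psi_\delta'(\lambda_\delta^*) > 0$, which plays the role of a spectral gap and is precisely the condition under which Lemma~\ref{lemma:pt} predicts nontrivial correlation. A quantitative bound on $\|\hbv^K - \sqrt{d/\delta}\,\bxs\|_2^2/d$ in the high-dimensional limit, vanishing as $K\to\infty$, is the crucial output of this phase; it follows by combining the scalar SE limit with the characterization of $\bxs$ as the top eigenvector of $\bD_n$ and standard bulk/outlier results for $\bD_n$ used in the proof of Lemma~\ref{lemma:pt}.

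For Phase~II, writing $\tbx^0, \tbu^0, \tbx^{t+1}, \tbu^{t+1}$ for the artificial iterates defined by the same formulas~\eqref{eq:x0_spec}--\eqref{eq:ut_update} but with $\bxs$ replaced by $\sqrt{\delta/d}\,\hbv^K$, state evolution (which applies because $\hbv^K$ is a deterministic function of a Phase~I trajectory initialized independently of $\bA$) yields limits of the form~\eqref{eq:Xt_def}--\eqref{eq:Ut_def} for $\tbx^t, \tbu^t$ whose parameters solve exactly~\eqref{eq:SE_def}--\eqref{eq:SE_init}, up to a $K$-dependent error that vanishes as $K\to\infty$. To transfer the result to the true GAMP, I would bound the differences $\|\bx^t - \tbx^t\|_2/\sqrt{d}$ and $\|\bu^t - \tbu^t\|_2/\sqrt{n}$ inductively in $t$, using the Lipschitz property of $f_t, h_t$, the operator-norm bound $\|\bA\|_\op = O(1)$ almost surely, and the Phase~I bound on $\|\hbv^K - \sqrt{d/\delta}\,\bxs\|_2$. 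A Grönwall-type induction shows that for each fixed $t$, these differences can be made arbitrarily small by taking $K$ large; combining this with the PL$(2)$ property~\eqref{eq:PL2_prop} and the uniform integrability of $\|\bx^t\|_2^2/d, \|\tbx^t\|_2^2/d$ (which follows from state evolution for the artificial AMP) yields~\eqref{eq:SE_main} and~\eqref{eq:SE_main_U}.

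The main obstacle is the joint control of the two asymptotic regimes: first $d\to\infty$ with $K$ fixed, then $K\to\infty$. One must show that the $K$-dependent error in the Phase~I approximation of $\bxs$ and the $K$-dependent propagation of error through Phase~II can both be made $o(1)$ uniformly in $t$ in the iterated limit. This requires (i) a non-asymptotic closeness estimate for the power-method AMP to the true top eigenvector of $\bD_n$ that depends explicitly on the spectral gap quantified by $\psi_\delta'(\lambda_\delta^*)$, in line with the approach of~\cite{mondelli2020optimal}, and (ii) showing that the Onsager corrections in~\eqref{eq:u0_spec} and in subsequent steps continue to cancel the correct mean bias when $\bxs$ is replaced by an approximation, so that the limiting parameters $(\mu_{X,t}, \sigma_{X,t}, \mu_{U,t}, \sigma_{U,t})$ are insensitive to the Phase~I truncation. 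Once these two ingredients are in place, the theorem follows by a standard diagonal argument taking $d\to\infty$ before $K\to\infty$.
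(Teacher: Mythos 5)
Your proposal is correct and follows essentially the same route as the paper: a two-phase artificial GAMP initialized with a signal-correlated vector independent of $\bA$, whose first phase emulates the power method for $\bD_n$ (with convergence to the spectral estimator controlled via the condition $\psi_\delta'(\lambda_\delta^*)>0$, following \cite{mondelli2020optimal}) and whose second phase matches the true updates, followed by an inductive $\ell_2$-closeness argument and the iterated limit $d\to\infty$ then $K\to\infty$. The only cosmetic difference is that the paper inserts an intermediate ``modified GAMP'' with deterministic Onsager coefficients to organize the comparison, a step you absorb into your point (ii) about the insensitivity of the Onsager corrections.
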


We give a sketch of the proof in Section \ref{sec:sketch} and defer the technical details to the appendices.


We now comment on some of the assumptions in the theorem.
The assumption $\psi_{\delta}'(\lambda_{\delta}^*)> 0$ is required to ensure that the spectral initialization $\bx^0$ has non-zero correlation with the signal $\bx$ (Lemma \ref{lemma:pt}). From Remark \ref{rmk:opt}, we also know that for any sampling ratio $\delta>\delta_{\rm u}$ there exists a choice of $\cT_s$ such that $\psi_{\delta}'(\lambda_{\delta}^*)> 0$. We also note  that, for $\delta<\delta_{\rm u}$, GAMP converges to the ``un-informative fixed point'' (where the estimate has vanishing correlation with signal) even if the initial condition has non-zero correlation with the  signal, see Theorem 5 of 
\cite{mondelli2017fundamental}.

There is no loss of generality in assuming the sign of $\bxs$ to be such that $\< \bxs, \bx \> \geq 0$. Indeed, if the sign were chosen otherwise, the theorem would hold with the state evolution initialization in \eqref{eq:SE_init} being $\mu_{X,0} = -a /\sqrt{\delta}, \, \sigma^2_{X,0} = {(1 -a^2)}/{\delta}$.

The assumption that $\sigma^2_{X,k}$ is positive for $k\leq t$ is natural. Indeed, if $\sigma^2_{X,k} = 0$, then the state evolution result for iteration $k$ implies that $\| \bx - \mu_{X,k}^{-1} \bx^k  \|^2/d \to 0$ as $d \to \infty$. That is, we can perfectly estimate $\bx$ from $\bx^k$, and thus terminate the algorithm after iteration $k$.

Let us finally remark that the result in \eqref{eq:SE_main} is equivalent to the statement that the empirical joint distribution of  $(\bx, \bx^{t+1})$ converges almost surely in Wasserstein  distance ($W_2$) to the joint law of $(X, \, \mu_{X, t+1} X + \sigma_{X, t+1} W)$.
This follows from the fact that a sequence of distributions $P_n$ with finite second moment converges in $W_2$  to $P$ if and only if   $P_n$ converges weakly to $P$ and
 $\int \| a \|^2_2 \, \de P_n(a)  \to \int \| a \|^2_2 \, \de P(a)$,  see Definition 6.7 and Theorem 6.8 of
 \cite{villani2008optimal}. 

\vspace{1em}

\noindent \textbf{When does GAMP require spectral initialization?}
For the GAMP to give meaningful estimates, we need  either $\bx^0$ or $\bx^1$ to have strictly non-zero asymptotic  correlation with $\bx$. To see when this can be arranged without a special initialization, consider the \emph{linear} estimator  $\bxl(\xi) :=\bA^\sT \xi(\by)$, for some function $\xi: \reals \to \reals$ that acts component-wise on $\by$. If there exists a function $\xi$ such that the asymptotic normalized correlation between $\bxl(\xi)$ and $\bx$ is strictly non-zero, then AMP does not require a special initialization (spectral or otherwise) that is correlated with $\bx$. Indeed, in this case we can replace the initialization in \eqref{eq:x0_spec}-\eqref{eq:u0_spec} by $\bx^0=\bzero$, $\bu^0 = \bzero$ (by taking $f_0=0$), and let $h_0(\bu^0; \by) =\sqrt{\delta} \xi(\by)$. This gives $\bx^1 = \bA^{\sT} \xi (\by) = \bxl(\xi)$, which has strictly non-zero asymptotic correlation with $\bx$. This  ensures that $| \mu_{X,1}| >0$, and the standard AMP analysis  \cite{javanmard2013state} directly yields a state evolution result similar to Theorem \ref{thm:GLM_spec}.

The output function  $p_{Y|G}$ determines whether a non-trivial linear estimator exists for the GLM. From Appendix C.1 in \cite{mondelli2020optimal}, we have that, if
\begin{equation}\label{eq:ncond}
\displaystyle\bigintssss_{\mathbb R}\frac{\left({\mathbb E}_{G\sim \normal(0, 1)}\left\{G\,p_{Y\mid G}(y\mid G)\right\}\right)^2}{{\mathbb E}_{G\sim \normal(0, 1)}\left\{p_{Y\mid G}(y\mid G)\right\}} \,{\rm d}y=0,
\end{equation}
then the correlation between $\bA^{\sT}\xi(\by)$ and $\bx$ will asymptotically vanish for any choice of $\xi$. The condition \eqref{eq:ncond} holds for many output functions of interest, including all distributions $p_{Y\mid G}$ that are even in $G$ (and, therefore, including phase retrieval). It is for these models that spectral initialization is particularly useful.

We remark that our analysis covers not only the (Wirtinger flow) phase retrieval model $\by =|\bA \bx|^2$, but also the  amplitude flow phase retrieval model $\boldsymbol y = |\boldsymbol A  \boldsymbol x|$. In fact, one can analyze the approximate model $\boldsymbol y = \sqrt{|\boldsymbol A\boldsymbol x|^2 +\boldsymbol \epsilon}$ and then let $\|\boldsymbol \epsilon\|_2 \to 0$. This is similar to the approach taken e.g. by \cite{MaXM18} and \cite{luo2020phase}. Since the functions used in each AMP iteration are Lipschitz, state evolution holds as $\|\boldsymbol \epsilon\|_2 \to 0$. For other GLMs with non-differentiable output functions, we can use a  similar approach to construct a smooth approximation to the output function and obtain the state evolution result.


\vspace{1em}

\noindent \textbf{Bayes-optimal GAMP.} Applying Theorem \ref{thm:GLM_spec} to the \PL(2) function $\psi(x,y)=(x-f_t(y))^2$, we obtain the asymptotic 
 mean-squared error (MSE) of the GAMP estimate $f_t(\bx^t)$. In formulas, for $t \geq 0$, almost surely,
 \beq
 \lim_{d \to \infty} \, \frac{1}{d} \| \bx - f_t(\bx^t) \|_2^2 = 
 \E\{ (X - f_t(\mu_{X,t} X + \sigma_{X,t}W))^2 \}.
 \label{eq:MSE_t}
 \eeq
 If the limiting empirical distribution $P_X$ of the signal is known, then the choice of $f_t$ that minimizes the MSE in \eqref{eq:MSE_t} is 
 \beq
f^*_t(s) = \E\{ X \mid \mu_{X,t} X + \sigma_{X,t}W = s \}. 
\label{eq:ft_opt}
\eeq

Similarly, applying the theorem to the \PL(2) functions $\psi(x,y)=x f_t(y)$ and $\psi(x,y) = f_t(y)^2$, we obtain the asymptotic normalized correlation with the signal. In formulas, for $t \geq 0$, almost surely
\beq
 \lim_{d \to \infty}  \frac{\abs{\< \bx, f_t(\bx^t) \>}}{\| \bx \|_2 \| f_t(\bx^t) \|_2} = 
 \frac{\abs{\E\{ X f_t(\mu_{X,t} X + \sigma_{X,t}W) \}}}
 { \sqrt{\E\{ f_t(\mu_{X,t} X + \sigma_{X,t}W)^2\}}}.
 \label{eq:norm_corr_t}
\eeq
For fixed $(\mu_{X,t}, \sigma_{X,t}^2)$, the normalized correlation in \eqref{eq:norm_corr_t} is maximized by taking $f_t = c f_t^*$ for any $c \neq 0$.
 This choice  also maximizes the ratio 
$\mu_{U,t}^2/\sigma_{U,t}^2$ in \eqref{eq:SE_def}.
For $f_t = cf_t^*$, from \eqref{eq:SE_def} we have
\begin{equation}
\mu_{U,t} = \frac{c}{\sqrt{\delta}} \E\{ f_t^*(X_t)^2 \},
\quad \sigma_{U,t}^2 = \frac{c}{\sqrt{\delta}} \mu_{U,t} -  
\mu_{U,t}^2.
\label{eq:muU_Bopt}
\end{equation}

We now specify the choice of $h_t(u; y)$ that maximizes the ratio $\mu_{X,t+1}^2/\sigma_{X,t+1}^2$ for fixed $(\mu_{U,t}, \sigma^2_{U,t})$.

\begin{proposition}
Assume the setting of Theorem \ref{thm:GLM_spec}. For a given $(\mu_{U,t}, \sigma^2_{U,t})$, the ratio $\mu_{X,t+1}^2/\sigma_{X,t+1}^2$ is maximized when $h_t(u; \, y) =  c \, h_t^*(u; \, y)$  where $c \neq 0$ is any constant, and 
\begin{align}
&  h^*_t(u; \, y)  \triangleq  \frac{\E\{ G \mid U_t= u, \, Y=y \} - 
\E\{ G \mid U_t= u \}}{{\sf Var}(G \mid U_t =u)}
\label{eq:opt_gt} \\
& =   \frac{\E_W\{ W p_{Y|G}( y \, | \, \rho_t u + \sqrt{1-\rho_t\, \mu_{U,t}}\, W) \}}{\sqrt{1-\rho_t\, \mu_{U,t}} \ \E_W\{ p_{Y|G}( y \, | \, \rho_t u + \sqrt{1-\rho_t \, \mu_{U,t}}\, W)\}}, \label{eq:opt_gt_alt2}
\end{align}
where  $\rho_t= \mu_{U,t}/(\mu_{U,t}^2 + \sigma_{U,t}^2)$ and $W \sim \normal(0,1)$.  In \eqref{eq:opt_gt}, the random variables $ U_t$ and $Y$ are conditionally independent given $G$ with 
\beq 
\begin{split}
& Y \sim p_{Y|G}(\, \cdot \, | \, G), \qquad 
U_t = \mu_{U,t} G + \sigma_{U,t} W_{U,t}, \\
& \quad (G, W_{U,t}) \sim_{\rm i.i.d.} \normal(0,1).
\end{split}
\label{eq:GYU_joint}
\eeq
%
\label{prop:opt_gt}
\end{proposition}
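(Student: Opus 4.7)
The plan is to rewrite $\mu_{X,t+1}$ as a single inner product against $h_t$ and then apply Cauchy--Schwarz. Two simplifications feed into this: a Stein-type integration by parts that eliminates the derivative term $\E\{h_t'(U_t;Y)\}$, and the identity $\E\{X f_t(X_t)\}=\sqrt{\delta}\,\mu_{U,t}$, which follows directly from the first line of the state evolution in \eqref{eq:SE_def}. Using the representation $U_t=\mu_{U,t}G+\sigma_{U,t}W_{U,t}$ from \eqref{eq:Ut_def}, together with the fact that $W_{U,t}$ is independent of $(G,Y)$ (by assumption (B2), $Y=q(G,V)$ with $V\perp G$, and $W_{U,t}$ is independent of both), Gaussian integration by parts in the $W_{U,t}$ direction yields $\E\{W_{U,t}h_t(U_t;Y)\}=\sigma_{U,t}\E\{h_t'(U_t;Y)\}$. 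Since $W_{U,t}=(U_t-\mu_{U,t}G)/\sigma_{U,t}$, this gives
\[
\E\{h_t'(U_t;Y)\}=\frac{1}{\sigma_{U,t}^{2}}\,\E\{(U_t-\mu_{U,t}G)\,h_t(U_t;Y)\}.
\]

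Substituting this into \eqref{eq:SE_def}, replacing $\E\{X f_t(X_t)\}$ by $\sqrt{\delta}\,\mu_{U,t}$, and using the tower property to swap $G$ for $m(u,y):=\E\{G\mid U_t=u,Y=y\}$ inside each expectation, the two terms combine into
\[
\mu_{X,t+1}=\sqrt{\delta}\,\frac{\mu_{U,t}^{2}+\sigma_{U,t}^{2}}{\sigma_{U,t}^{2}}\,\E\bigl\{(m(U_t,Y)-\rho_t U_t)\,h_t(U_t;Y)\bigr\},
\]
with $\rho_t=\mu_{U,t}/(\mu_{U,t}^{2}+\sigma_{U,t}^{2})$. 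Because $G\mid U_t\sim\normal(\rho_t U_t,\,1-\rho_t\mu_{U,t})$, one has $\E\{G\mid U_t\}=\rho_tU_t$ and $\Var(G\mid U_t)=1-\rho_t\mu_{U,t}=\sigma_{U,t}^{2}/(\mu_{U,t}^{2}+\sigma_{U,t}^{2})$, so the prefactor simplifies to $\sqrt{\delta}/\Var(G\mid U_t)$ and
\[
\mu_{X,t+1}=\frac{\sqrt{\delta}}{\Var(G\mid U_t)}\,\E\bigl\{(\E\{G\mid U_t,Y\}-\E\{G\mid U_t\})\,h_t(U_t;Y)\bigr\}.
\]

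Since $\sigma_{X,t+1}^{2}=\E\{h_t(U_t;Y)^{2}\}$, Cauchy--Schwarz applied to the above display yields
\[
\frac{\mu_{X,t+1}^{2}}{\sigma_{X,t+1}^{2}}\;\le\;\frac{\delta}{\Var(G\mid U_t)^{2}}\,\E\bigl\{(\E\{G\mid U_t,Y\}-\E\{G\mid U_t\})^{2}\bigr\},
\]
with equality exactly when $h_t(u;y)$ is a nonzero scalar multiple of $\E\{G\mid U_t=u,Y=y\}-\E\{G\mid U_t=u\}$. Because $\Var(G\mid U_t)$ is a constant independent of $u$ under the Gaussian joint law \eqref{eq:GYU_joint}, dividing by this constant preserves the equality condition and yields the form \eqref{eq:opt_gt}.

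Finally, to obtain \eqref{eq:opt_gt_alt2} I would write $G\mid U_t=u$ explicitly as $\rho_t u+\sqrt{1-\rho_t\mu_{U,t}}\,W$ with $W\sim\normal(0,1)$ and apply Bayes' rule with the likelihood $p_{Y\mid G}$, expressing $\E\{G\mid U_t=u,Y=y\}$ as a ratio of one-dimensional Gaussian integrals. Subtracting the prior mean $\rho_t u$ and dividing by $\Var(G\mid U_t)=1-\rho_t\mu_{U,t}$ produces the displayed ratio. The main obstacle is the algebraic collapse in the second paragraph: one must keep careful track of the conditional expectations so that the Stein substitution combined with the $\sqrt{\delta}\mu_{U,t}$ cancellation produces exactly the factor $1/\Var(G\mid U_t)$ and not a spurious constant. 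Regularity is not an issue since $h_t$ is assumed Lipschitz and $\E\{Y^{2}\},\E\{V^{2}\}<\infty$, which suffices for both the Stein identity and all the integrals above.
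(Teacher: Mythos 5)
Your proof is correct, and it reaches the paper's key identity
\[
\mu_{X,t+1}=\frac{\sqrt{\delta}}{{\sf Var}(G\mid U_t)}\,\E\bigl\{h_t(U_t;Y)\,(\E\{G\mid U_t,Y\}-\E\{G\mid U_t\})\bigr\}
\]
by a genuinely different route before the (shared) Cauchy--Schwarz finish. The paper applies Stein's lemma in the $G$-direction to the composite $e_t(g,w,v)=h_t(\mu_{U,t}g+\sigma_{U,t}w;\,q(g,v))$, so that the chain-rule term $\mu_{U,t}\E\{h_t'\}$ cancels the Onsager term and leaves the intermediate quantity $\sqrt{\delta}\,\E\{\partial_g h_t(U_t;q(G,V))\}$ (the derivative of $h_t$ through the channel); a second, conditional Stein identity for $G\mid U_t$ then converts this back into an expression linear in $h_t$ itself. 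You instead integrate by parts in the $W_{U,t}$-direction to write $\E\{h_t'(U_t;Y)\}=\sigma_{U,t}^{-2}\E\{(U_t-\mu_{U,t}G)h_t(U_t;Y)\}$, after which everything is linear algebra on the bivariate Gaussian $(G,U_t)$ plus the tower property. Your version has two minor advantages: it only ever differentiates $h_t$ in its first argument (where Lipschitz continuity is assumed), whereas the paper's first Stein step implicitly needs $g\mapsto h_t(u;q(g,v))$ to be regular enough for integration by parts, i.e.\ some smoothness of $q$ in $g$; and it never introduces $\partial_g h_t$ as an intermediate object. Two small points you should make explicit: the independence $W_{U,t}\perp(G,Y)$ that licenses your Stein step follows from the conditional independence of $U_t$ and $Y$ given $G$ in \eqref{eq:GYU_joint} together with $W_{U,t}\perp G$ (you assert it, correctly, but it is worth one line); and the equality case of Cauchy--Schwarz gives $h_t=c\,(\E\{G\mid U_t,Y\}-\E\{G\mid U_t\})$ only almost surely under the law of $(U_t,Y)$, which suffices since the ratio $\mu_{X,t+1}^2/\sigma_{X,t+1}^2$ depends on $h_t$ only through that law (the division by the constant ${\sf Var}(G\mid U_t)$ then just rescales $c$, as you note). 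Your treatment of \eqref{eq:opt_gt_alt2} coincides with the paper's.
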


The optimal choice for $h^*_t$ in Proposition \ref{prop:opt_gt} was derived by \cite{RanganGAMP} by approximating the belief propagation equations. For completeness, we provide a self-contained proof in Appendix \ref{app:proof_opt_gt}. The proof also shows that with $h_t=c h^*_t$,
\begin{equation*}
\mu_{X,t+1} = c \,  \sqrt{\delta} \, \E\{ \abs{h_t^*(U_t; Y)}^2 \}, \quad 
\sigma_{X,t+1}^2 = c \, \frac{\mu_{X,t+1}}{\sqrt{\delta}}.
\end{equation*}

As the choices $f_t^*, h_t^*$  maximize the signal-to-noise ratios $\mu_{U,t}^2/\sigma_{U,t}^2$ and $\mu_{X,t+1}^2/\sigma_{X,t+1}^2$, respectively, we  refer to this algorithm as Bayes-optimal GAMP. We note that to apply Theorem \ref{thm:GLM_spec} to the Bayes-optimal GAMP, we need $f_t^*$, $h_t^*$ to be Lipschitz. This holds under relatively mild conditions on $P_X$ and $p_{Y|G}$, see Lemma F.1 in \cite{montanari2017estimation}.

\begin{figure}[t!]
    \centering    
    \includegraphics[width=0.65\linewidth]{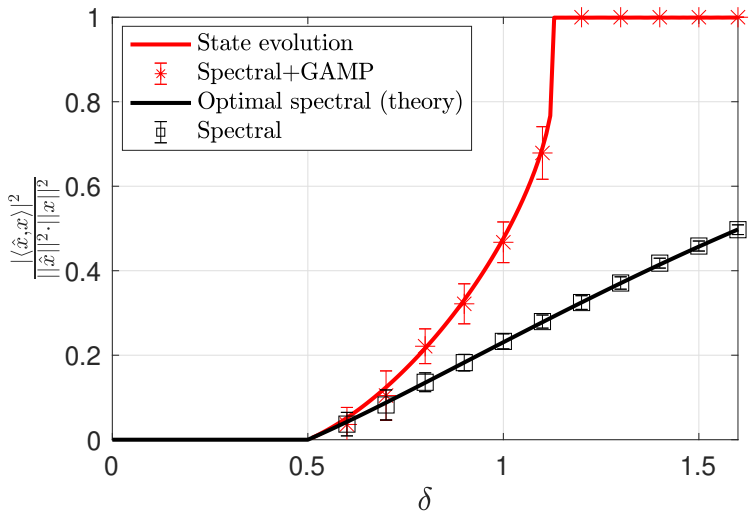}
\caption{Performance comparison between GAMP with spectral initialization (in red) and the spectral method alone (in black) for a Gaussian prior $P_X \sim \normal(0,1)$. The solid lines are the theoretical predictions of Theorem \ref{thm:GLM_spec} for GAMP with spectral initialization, and of Lemma \ref{lemma:pt} for the spectral method. Error bars indicate one standard deviation around the empirical mean.} 
\label{fig:real}
\end{figure}


\section{Numerical Simulations} \label{sec:simulations}

We now illustrate the performance of the GAMP algorithm with spectral initialization via numerical examples. For concreteness, we focus on  noiseless phase retrieval, where $y_i = \abs{\langle \ba_i, \bx\rangle}^2$, $i\in [n]$. 

\vspace{1em}

\noindent \textbf{Gaussian prior.}
In Figure \ref{fig:real}, $\bx$ is chosen uniformly at random on the $d$-dimensional sphere with radius $\sqrt{d}$ and  $\{ \ba_i\}_{i\in [n]} \ \sim_{\rm i.i.d.} \ \normal(0, \bI_d/d)$. Note that, as $d\to \infty$, the limiting empirical distribution $P_X$ of $\bx$ is a standard Gaussian. We take $d=8000$, and the numerical simulations are averaged over $n_{\rm sample} = 50$ independent trials. The performance of an estimate $\hat{\bx}$ is measured via its normalized squared scalar product with the signal $\bx$.
The black points are obtained by  estimating $\bx$ via the spectral method, using the optimal pre-processing function $\mathcal T_s$ reported in Eq. (137) of \cite{mondelli2017fundamental}. The empirical results  match the black curve, which gives the best possible squared correlation in the high-dimensional limit, as given by Theorem 1 of \cite{luo2019optimal}. The red points are obtained by running the GAMP algorithm \eqref{eq:xt_update}-\eqref{eq:ut_update} with the spectral initialization \eqref{eq:x0_spec}-\eqref{eq:u0_spec}. The function $f_t$ is chosen to be the identity, and $h_t =\sqrt{\delta} h_t^*$, for $h_t^*$ given by Proposition \ref{prop:opt_gt}. The algorithm is run until the normalized squared difference between successive iterates  is small.  
As predicted by Theorem \ref{thm:GLM_spec}, the numerical simulations agree well with the state evolution curve in red, which is obtained by computing the fixed point of the recursion \eqref{eq:SE_def} initialized with \eqref{eq:SE_init}. We also remark that the threshold for exact recovery can be obtained from the fixed points of state evolution, see e.g. \cite{barbier2019optimal}.

\begin{figure}[t!]
\centering
\includegraphics[width=0.65\linewidth]{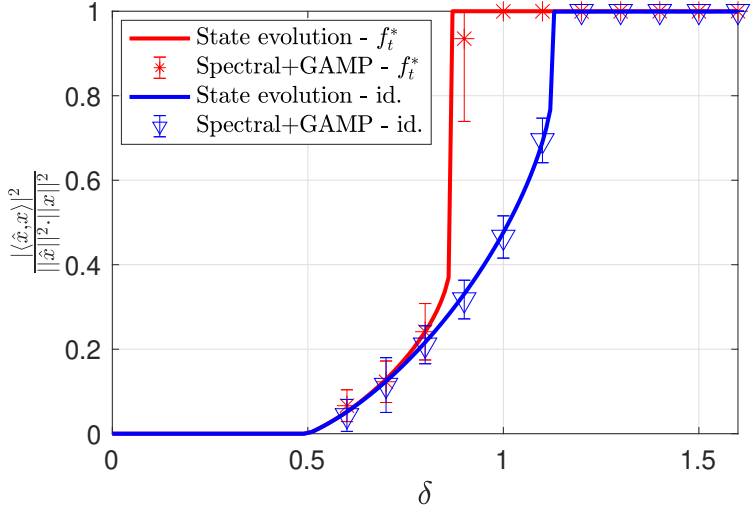}
\caption{Performance comparison between two different choices of $f_t$ for a binary prior $P_X(1) =P_X(-1) =\frac{1}{2}$. The Bayes-optimal choice $f_t=f_t^*$ (in red) has a lower threshold compared to $f_t$ equal to identity (in blue).}\label{fig:Bernprior}
\end{figure}

\begin{figure}[p]
    \centering
    \subfloat[Original image.\label{fig:ven}]{\includegraphics[width=.45\columnwidth]{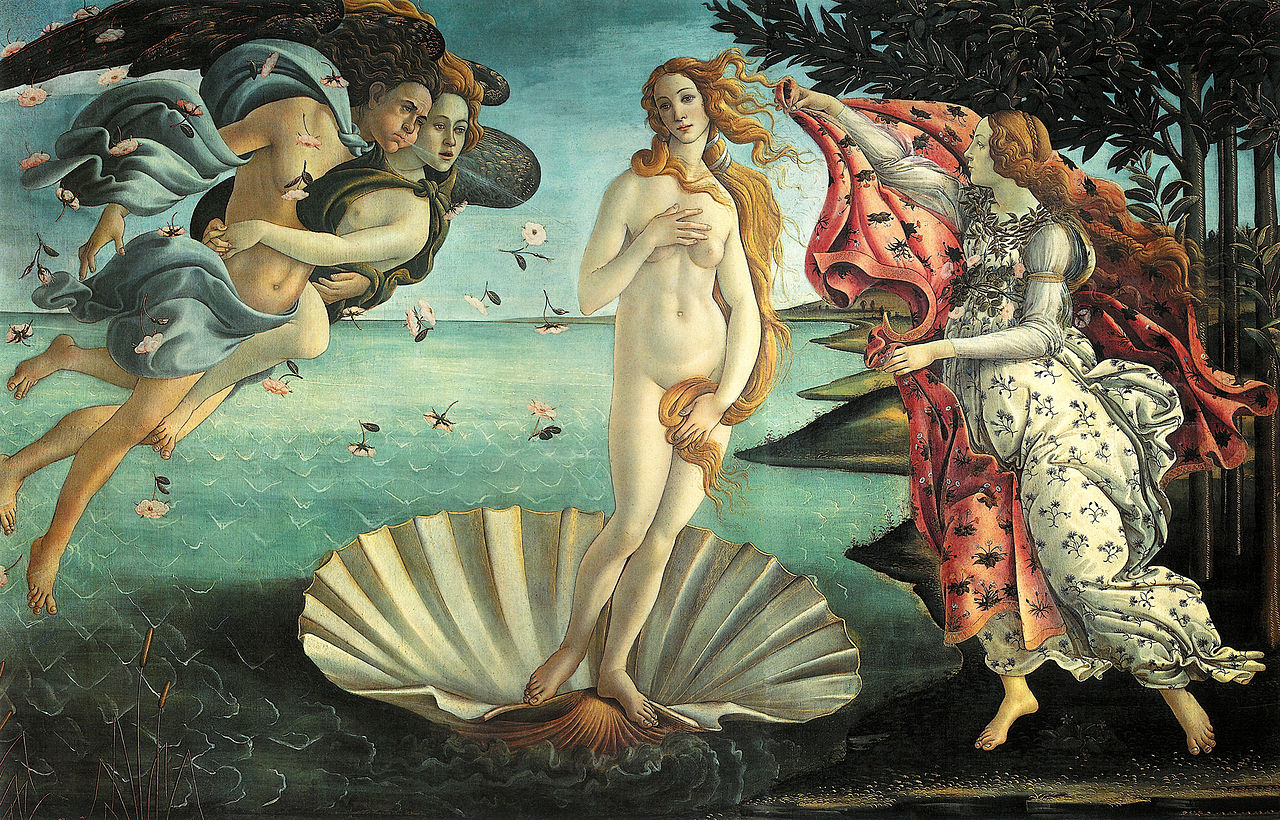}}\\
    \subfloat[Proposed, $\delta=2.2$.\label{fig:venAMP2}]{\includegraphics[width=.45\columnwidth]{./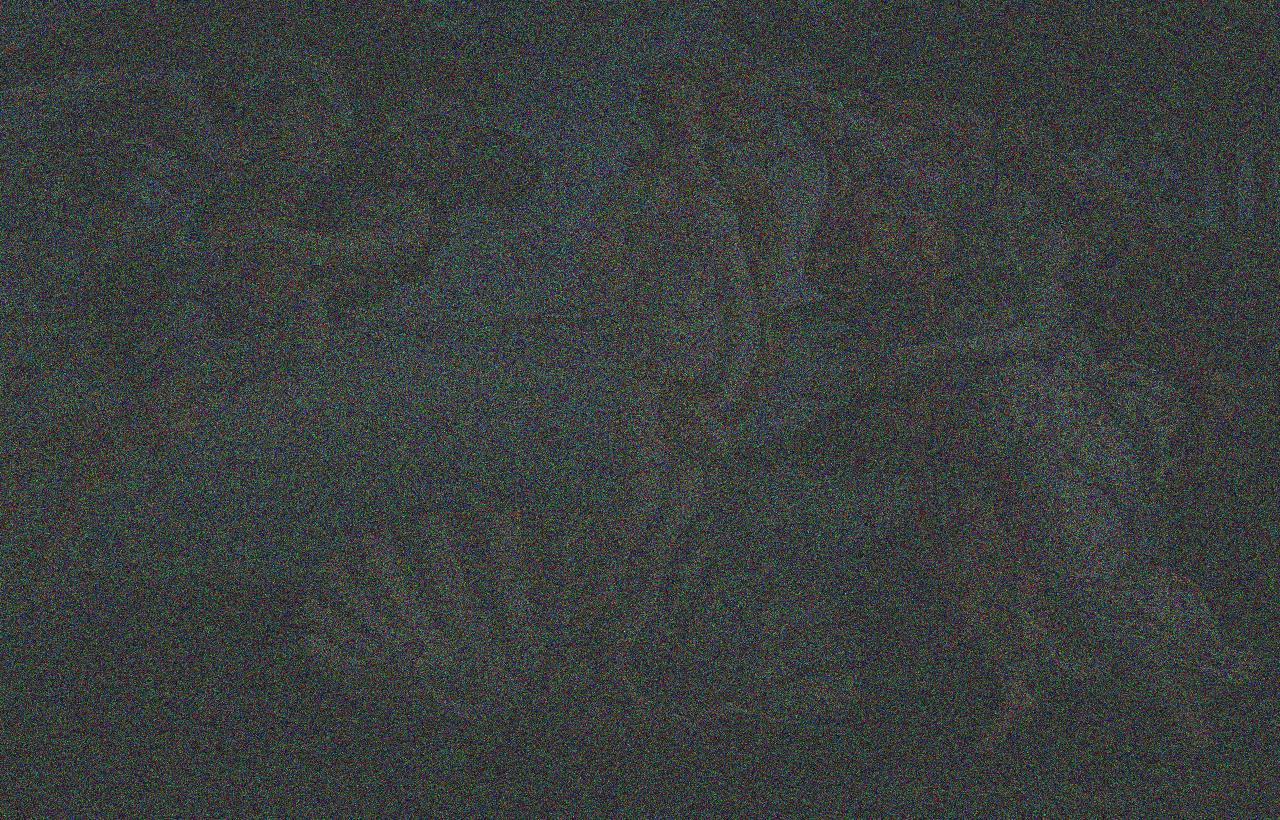}} \hspace{2em}
    \subfloat[Spectral, $\delta=2.2$.\label{fig:veninit2}]{\includegraphics[width=.45\columnwidth]{./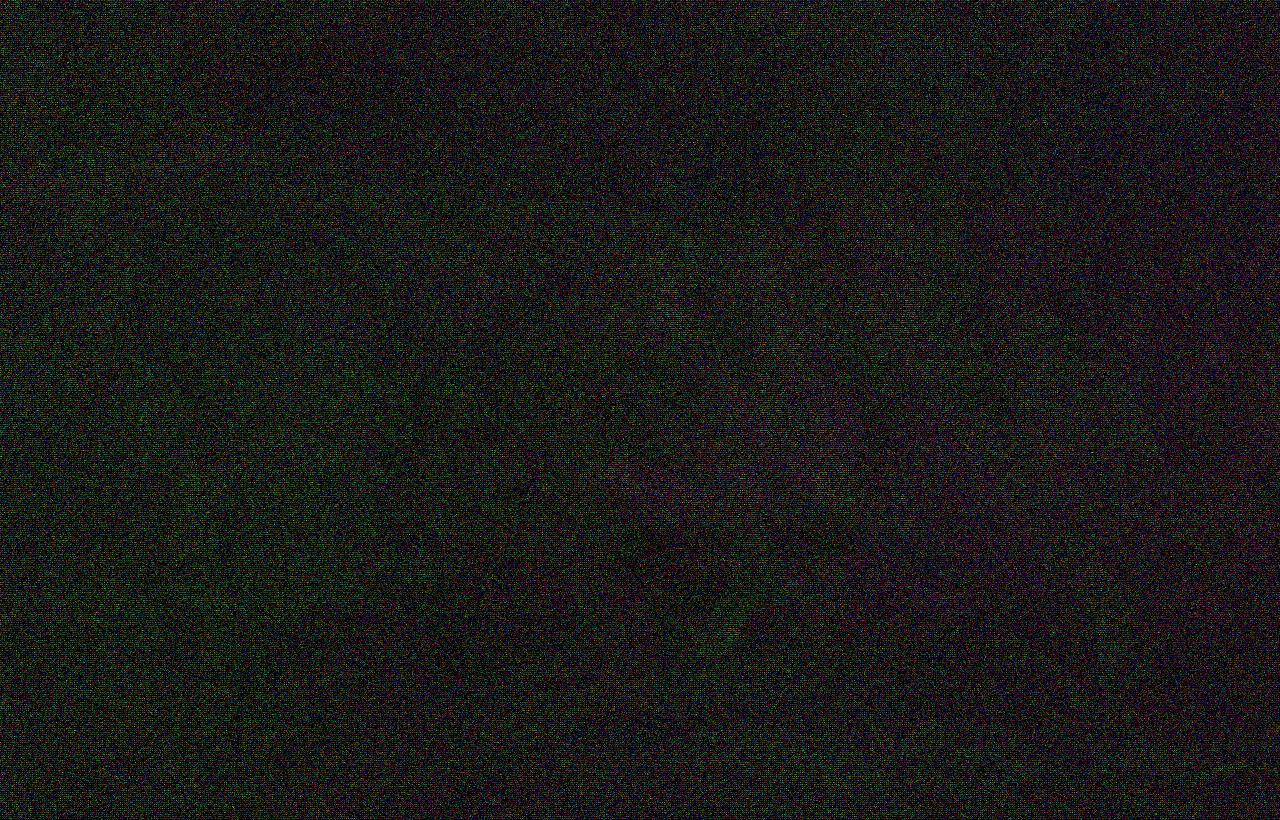}}\\
\subfloat[Proposed, $\delta=2.4$.\label{fig:venAMP4}]{\includegraphics[width=.45\columnwidth]{./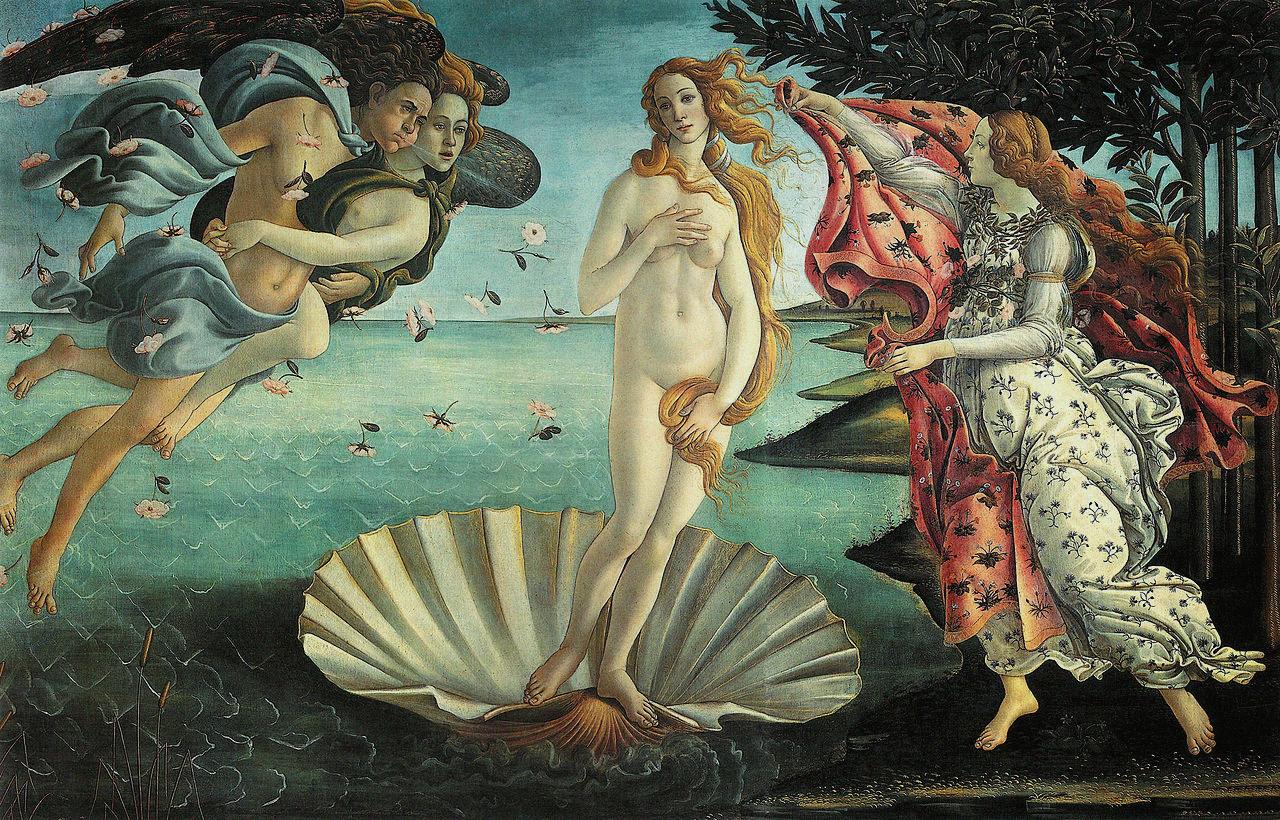}} \hspace{2em}
    \subfloat[Spectral, $\delta=2.4$.\label{fig:veninit4}]{\includegraphics[width=.45\columnwidth]{./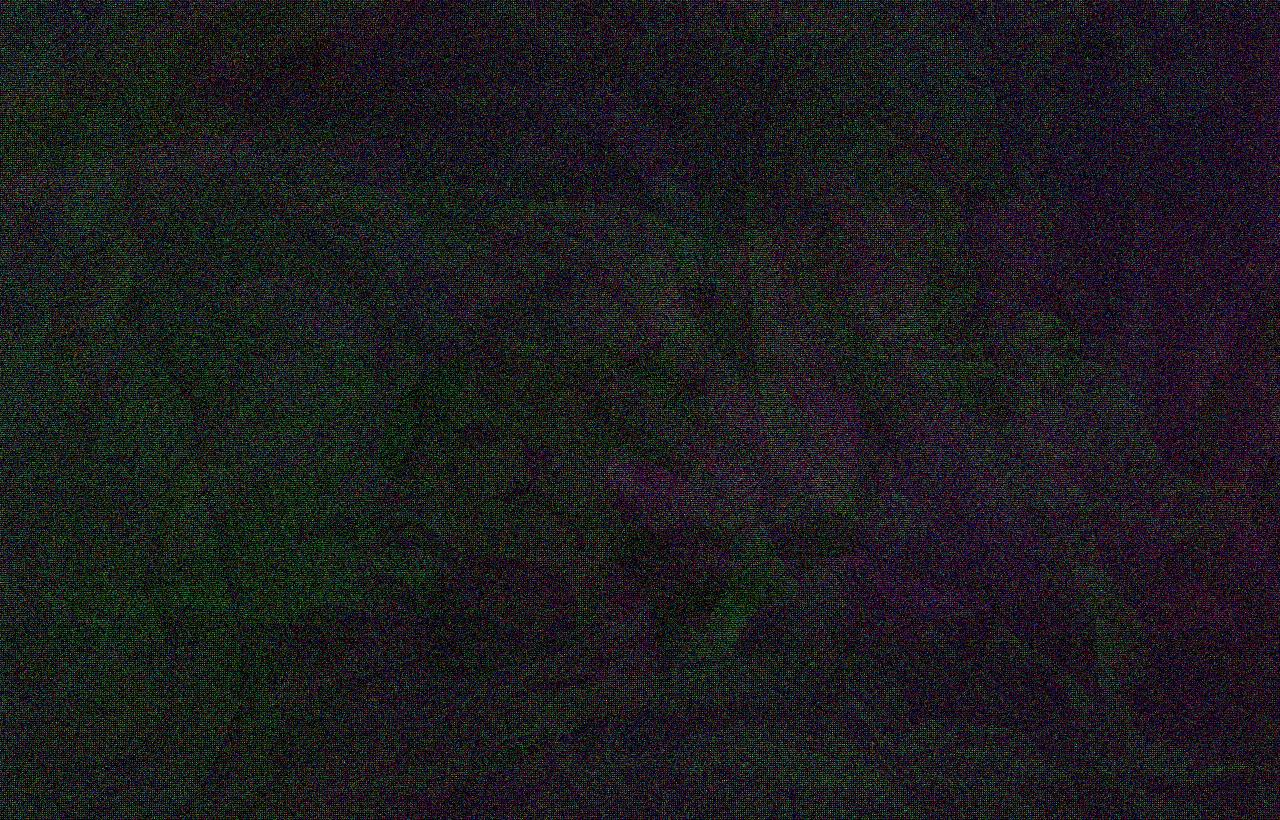}}
\caption{Visual comparison between the reconstruction of the GAMP algorithm with spectral initialization and that of the spectral method alone for measurements given by coded diffraction patterns.}
\label{fig:venrec}
\end{figure}

\vspace{1em}

\noindent \textbf{Bayes-optimal GAMP for a binary-valued prior.} Assume now that each entry of the signal $\bx$ takes value in $\{-1, 1\}$, with $P_X(1) = 1- p_X(-1)= \sfp$. In Figure \ref{fig:Bernprior}, we take $\sfp= \frac{1}{2}$, and compare the performance of the GAMP algorithm with spectral initialization for two different choices of the function $f_t$: $f_t$ equal to identity (in blue) and $f_t = f^*_t$ (in red), where $f_t^*$ is the Bayes-optimal choice \eqref{eq:ft_opt}. By computing the conditional expectation, we have 
\begin{align}
f^*_t(s) & = 2 \prob( X =1 \mid \mu_{X,t} X + \sigma_{X,t} W =s ) -1   = \frac{2 }{1 + \frac{1-\sfp}{\sfp} \exp\big(\frac{-2 s \mu_{X,t}}{\sigma_{X,t}^2} \big)} -1.
\label{eq:ft_binary}
\end{align}
The rest of the setting is analogous to that of Figure \ref{fig:real}. There is a significant performance gap between the Bayes-optimal choice $f_t=f_t^*$ and the choice $f_t(x)=x$. As in the previous experiment, we observe very good agreement between the GAMP algorithm and the state evolution prediction of Theorem \ref{thm:GLM_spec}. 
We  remark that for this setting, the information-theoretically optimal overlap  (computed using the formula  in \cite{barbier2019optimal}) is 1 for all $\delta>0$. Since the components of $\bx$ are in $\{-1,1 \}$, there are  $2^d$ choices for $\bx$. The information-theoretically optimal estimator  picks the choice that is consistent with $y_i = \< \bx, \ba_i \>$, $i \in [n]$. (Since $\bA$ is Gaussian, with high probability this solution is unique.)

\vspace{1em}

\noindent \textbf{Coded diffraction patterns.} \label{subsec:CDP}
We consider the model of coded diffraction patterns described in Section 7.2 of \cite{mondelli2017fundamental}. Here the signal $\bx$ is the image of Figure \ref{fig:ven}, and it can be viewed as a $d_1 \times d_2 \times 3$ array with $d_1 = 820$ and $d_2=1280$. 
The sensing vectors are given by 
\begin{equation}
\begin{split}
 a_r(t_1, t_2) & = d_{\ell}(t_1, t_2)\cdot e^{i2\pi k_1 t_1/d_1}\cdot e^{i2\pi k_2 t_2/d_2},
\end{split}
\label{eq:cdp_def}
\end{equation}
where $r\in [n]$, $t_1\in [d_1]$, $t_2\in [d_2]$, $i$ denotes the imaginary unit, $a_r(t_1, t_2)$ is the $(t_1, t_2)$-th component of $\ba_r\in \mathbb C^{d}$, and the $(d_{\ell}(t_1, t_2))$'s are i.i.d. and uniform in $\{1, -1, i, -i\}$. The index $r\in [n]$ is associated to a pair $(\ell, k)$, with $\ell\in [L]$; the index $k\in [d]$ is associated to a pair $(k_1, k_2)$ with $k_1\in [d_1]$ and $k_2\in [d_2]$. Thus, $n=L\cdot d$ and, therefore, $\delta=L\in \mathbb N$. To obtain non-integer values of $\delta$, we set to $0$ a suitable fraction of the vectors $\ba_r$, chosen uniformly at random. 

In this model, the scalar product $\langle \bx_j, \ba_r\rangle$ can be computed with an FFT algorithm. Furthermore, in order to evaluate the principal eigenvector for the spectral initialization, we use a power method which stops if either the number of iterations reaches the maximum value of $100000$ or the modulus of the scalar product between the estimate at the current iteration $T$ and at the iteration $T-10$ is larger than $1-10^{-7}$. 

The  GAMP algorithm with spectral initialization for the complex-valued setting is described in Appendix \ref{sec:complex_GAMP}.
Figure \ref{fig:venrec} shows a visual representation of the results. The improvement achieved by the GAMP algorithm over the spectral estimator is impressive, with GAMP achieving full recovery already at $\delta=2.4$. 
A numerical comparison of the performance of the two methods is given in Figure \ref{fig:cdp} in Appendix \ref{sec:complex_GAMP}.
We emphasize that the state evolution result of Theorem \ref{thm:GLM_spec} is only valid for Gaussian sensing matrices. Extending it to structured matrices such as coded diffraction patterns is an interesting direction for future work.


\section{Sketch of the Proof of Theorem \ref{thm:GLM_spec}} \label{sec:sketch}

We give an outline of the proof here, and provide the technical details in the appendices.

\vspace{1em}

\noindent \textbf{The artificial GAMP algorithm.} 
We construct an artificial GAMP algorithm, whose iterates are denoted by $\tbx^t, \tbu^t$, for $t \geq 0$. Starting from an initialization $(\tbx^0, \tbu^0)$,  for $t \geq 0$ we iteratively compute:
\begin{align}
\tbx^{t+1} &= \frac{1}{\sqrt{\delta}}\bA^\sT \th_t(\tbu^t; \by) - \tsc_t \tf_t(\tbx^t),  \label{eq:txk_update} \\
\tbu^{t+1} &= \frac{1}{\sqrt{\delta}}\bA \tf_{t+1}(\tbx^{t+1})-\tsb_{t+1} \th_t(\tbu^t; \by). \label{eq:tuk_update} 
\end{align}
For $t \geq 0$, the functions $\tf_t: \reals \to \reals$ and $\th_t: \reals \times \reals \to  \reals$ are Lipschitz, and will be specified below. The scalars $\tsc_t$ and $\tsb_{t+1}$ are defined as 
\begin{equation}
\tsc_t = \frac{1}{n}\sum_{i=1}^n \th_t'(\tu_i^t; y_i), \qquad \tsb_{t+1} =\frac{1}{n}\sum_{i=1}^d \tf_{t+1}'(\tx_i^{t+1}),
\label{eq:GAMP_onsager_tk}
\end{equation}
where $\th_t'$ denotes the derivative with respect to the first argument.   The iteration is initialized as follows.  Choose any $\alpha \in (0,1)$,  and a standard Gaussian vector $\bn \sim \normal(0, \bI_d)$ that is independent of $\bx$ and $\bA$. Then,
\begin{align}
\tbx^0 & =  \alpha \, \bx + \sqrt{1- \alpha^2} \, \bn, \qquad \tbu^0  = \frac{1}{\sqrt{\delta}} \bA \tf_0(\tbx^0). \label{eq:tbx0_tbu0}
\end{align}

The artificial GAMP is divided into two phases. In the first phase, which lasts up to iteration $T$, the functions $\tf_t, \th_t$ for $0 \leq t \leq (T-1)$, are chosen such that as $T\to\infty$, the iterate $\tbx^T$ approaches  the initialization $\bx^0$ of the true GAMP algorithm defined in \eqref{eq:x0_spec}. In the second phase, the functions $\tf_t, \th_t$ for $t \geq T$, are chosen to match those of the true GAMP.  The key observation is that a state evolution result for the artificial GAMP follows directly from the standard analysis of GAMP \cite{javanmard2013state} since the initialization $\tbx^0$ is independent of $\bA$. By showing that as $T \to \infty$, the iterates and the state evolution parameters of the artificial GAMP approach the corresponding quantities of the true GAMP, we prove that the state evolution result of Theorem \ref{thm:GLM_spec} holds.

We now specify the functions used in the artificial GAMP. For $0 \leq t \leq (T-1)$, we set
\begin{align}
\tf_t(x) = \frac{x}{\beta_t}, \qquad  \th_t(x; \, y) = \sqrt{\delta} \, x \, \frac{\cT_s(y)}{\lambda_\delta^*- \cT_s(y)},
\label{eq:tftg_leT}
\end{align}
where $\cT_s$ is the pre-processing function used for the spectral estimator, $\lambda_\delta^*$ is the unique solution of $\zeta_{\delta}(\lambda) = \phi(\lambda)$ for $\lambda > \tau$ (also given by \eqref{eq:corr_cond}),  and  $(\beta_t)_{t \geq 0}$ are constants coming from the state evolution recursion defined below. Furthermore, for $t\geq T$, we set
\begin{align}
\tf_t(x) = f_{t -T}(x), \qquad \th_{t}(x; \, y) = h_{t-T}(x; \, y).
\label{eq:tftg_geT}
\end{align}
With these choices of $\tf_t, \th_t$, the coefficients $\tsc_t$ and $\tsb_{t}$ in \eqref{eq:GAMP_onsager_tk} become:
\beq
\label{eq:onsager_art}
\begin{split}
&  \tsc_{t} = \frac{\sqrt{\delta}}{n} \sum_{i=1}^n \frac{\cT_s(y_i)}{\lambda_\delta^*- \cT_s(y_i)}, \,\, \tsb_t = \frac{1}{\delta\beta_t}, \,\,\,\, 0 \leq t \leq (T-1), \\
&  \tsc_t =  \frac{1}{n} \sum_{i=1}^n h_{t-T}'(\tu^t_i; \, y_i),  \,\, \tsb_t =  \frac{1}{n} \sum_{i=1}^d f_{t-T}'(\tx^t_i), \,\,\,\, t \geq T.
\end{split}
\eeq

Since the initialization $\tbx^0$ in \eqref{eq:tbx0_tbu0} is independent of $\bA$, the state evolution result of \cite{javanmard2013state} can be applied to the artificial GAMP. This result, formally stated in Proposition \ref{prop:tilSE} in Appendix \ref{app:artGAMP}, implies that for $t \geq 0$, the empirical distributions of $\tbx^t$ and $\tbu^t$  converge in $W_2$ distance to the laws of the random variables $\tX_t$ and $\tU_t$, respectively, with 
\begin{equation}
    \tX_t  \equiv \mu_{\tX,t} X + \sigma_{\tX,t} W_{\tX,t},    \quad 
        \tU_t  \equiv \mu_{\tU,t} G + \sigma_{\tU,t} W_{\tU,t}.  \label{eq:tXtUt_def}
\end{equation}
Here $W_{\tX,t}, W_{\tU,t}$ are standard normal and independent of $X$ and $G$, respectively. 
The state evolution recursion defining the parameters $(\mu_{\tX,t}, \sigma_{\tX,t}, \mu_{\tU,t}, \sigma_{\tU,t}, \beta_t)$ has the same form as \eqref{eq:SE_def}, except that we use the functions defined in \eqref{eq:tftg_leT} for $0 \leq t \leq (T-1)$, and the functions in \eqref{eq:tftg_geT} for $t \geq T$. The detailed expressions are given in Appendix \ref{app:artGAMP}.

\vspace{1em}

\noindent \textbf{Analysis of the first phase.} The first phase of the artificial GAMP is designed so that its output vectors after $T$ iterations $(\tbx^T, \tbu^T)$ are close to the initialization $(\bx^0, \bu^0)$ of the true GAMP algorithm given by  \eqref{eq:x0_spec}-\eqref{eq:u0_spec}. This part of the algorithm is similar to the GAMP used in \cite{mondelli2020optimal} to approximate the spectral estimator $\bxs$. In particular, the state evolution recursion of the first phase (given in \eqref{eq:defbetat1}) converges as $T \to \infty$ to the following fixed point: 
\beq
\lim_{T \to \infty}\mu_{\tX, T} =\frac{a}{\sqrt{\delta}}, \qquad  \lim_{T \to \infty}\sigma^2_{\tX, T}= \frac{1-a^2}{\delta},
\eeq
where $a$ is the limit (normalized) correlation between the spectral estimator $\bxs$ and the signal, see \eqref{eq:a2_explicit}. Furthermore, the GAMP iterate $\tbx^T$ approaches $\bxs$, i.e., 
\beq
\lim_{T \to \infty} \lim_{d \to \infty} \, \frac{\| \sqrt{d} \,  \bxs
     - \sqrt{\delta} \, \tbx^T \|^2}{d} =0 \ \text{ a.s.}
\eeq
These results are formally stated in Lemma \ref{lem:SE_FP_phase1} and \ref{lem:xsxT}, respectively, contained in Appendix \ref{app:firstphase}.

\vspace{1em}

\noindent \textbf{Analysis of the second phase.} The second phase of the artificial GAMP is designed so that its iterates $\tbx^{T+t}, \tbu^{T+t}$ are close to $\bx^t, \bu^t$,  respectively for $t \geq 0$,  and the corresponding state evolution parameters are also close. In particular, in order to prove Theorem \ref{thm:GLM_spec}, we first analyze a slightly modified version of the true GAMP algorithm in \eqref{eq:xt_update}-\eqref{eq:ut_update} where the `memory' coefficients $\sb_t$ and $\sc_t$ in \eqref{eq:GAMP_onsager} are replaced by deterministic values obtained from state evolution. The iterates of this modified GAMP, denoted by $\hbx^t, \hbu^t$, are as follows. Start with the  initialization
\begin{align}
&  \hbx^0=\bx^0= \sqrt{d} \, \, \frac{1}{\sqrt{\delta}}\bxs, \label{eq:x0_spec_mod} \\
& \hbu^0 = \frac{1}{\sqrt{\delta}} \bA f_{0}(\hbx^0) - \bsb_0\frac{\sqrt{\delta}}{\lambda_\delta^*} \,  \bZ_s  \bA \hbx^0, \label{eq:u0_spec_mod} 
\end{align}
 where $\bsb_0 = \frac{1}{\delta} \E\{ f_0'(X_0) \}$. Then, for $t \geq 0$:
\begin{align}
\hbx^{t+1} &= \frac{1}{\sqrt{\delta}}\bA^\sT h_t(\hbu^t; \by) - \bsc_t f_t(\hbx^t),  \label{eq:xt_update_mod} \\
\hbu^{t+1} &= \frac{1}{\sqrt{\delta}}\bA f_{t+1}(\hbx^{t+1})-\bsb_{t+1} h_t(\hbu^{t}; \by). \label{eq:ut_update_mod} 
\end{align}
Here, for $t \geq 0$,  the deterministic memory coefficients $\bsb_t$ and $\bsc_t$ are
\beq
\bsc_t = \E\{ h'_t(U_t; \, Y)\}, \qquad \bsb_t= \E\{ f_t'(X_t) \}/\delta,
\label{eq:bar_ons_terms}
\eeq
where  $X_t, U_t$ are defined in \eqref{eq:Xt_def}-\eqref{eq:Ut_def}.

Let us now summarize our approach. 
We have defined three different GAMP iterations: \emph{(i)} the \emph{true GAMP} with iterates
$(\bx^t, \bu^t)$ given by \eqref{eq:xt_update}-\eqref{eq:ut_update} and initialization $(\bx^0, \bu^0)$ given by \eqref{eq:x0_spec}-\eqref{eq:u0_spec}, \emph{(ii)} the \emph{modified GAMP} with iterates $(\hbx^t, \hbu^t)$ given by \eqref{eq:xt_update_mod}-\eqref{eq:ut_update_mod} and initialization $(\hbx^0, \hbu^0)$ given by \eqref{eq:x0_spec_mod}-\eqref{eq:u0_spec_mod}, and \emph{(iii)} the \emph{artificial GAMP} with iterates $(\tbx^t, \tbu^t)$ given by \eqref{eq:txk_update}-\eqref{eq:tuk_update} and initialization $(\tbx^0, \tbu^0)$ given by \eqref{eq:tbx0_tbu0}. We recall that the \emph{true GAMP} is the algorithm with spectral initialization that is actually implemented and whose performance we want to study. 
As the true GAMP is initialized with the spectral estimator $\bxs$ which depends on $\boldsymbol A$, its performance cannot be characterized using the existing theory. To solve this problem, we introduce the \emph{artificial GAMP} purely as a proof technique.
In fact, the initialization of the artificial GAMP assumes knowledge of the signal, which makes it impractical. Finally, the \emph{modified GAMP} is a slight modification of the true GAMP to simplify the proof.

Lemma \ref{lem:ArtTrueconn} in Appendix \ref{app:secondphase} proves that, for each $t\ge 0$, \emph{(i)} the iterates
$(\tbx^{t+T}, \tbu^{t+T})$ are close to $(\hbx^t, \hbu^t)$ for sufficiently large $T$, and \emph{(ii)}  the corresponding state evolution parameters are also close.  We then use this lemma to prove Theorem \ref{thm:GLM_spec} by showing that the iterates of the \emph{true GAMP} have the same asymptotic empirical distribution as those of the \emph{modified GAMP}. In particular, we show in in Appendix \ref{app:mainproof} that, almost surely,
\beq
\begin{split}
\lim_{d \to \infty} & \, \frac{1}{d} \sum_{i=1}^d \psi (x_{i},x^t_i) 
= \lim_{d \to \infty} \, \frac{1}{d} \sum_{i=1}^d \psi (x_{i},\hx^t_i) \\
&= \E \left\{ \psi( X, \, \mu_{X,t} X   + \sigma_{X,t} \, W) \right\}.
\end{split}
\eeq

\section{Discussion}

A major shortcoming in existing AMP theory for GLMs like  phase retrieval is the unrealistic assumption that  the initialization of the algorithm is correlated with the ground-truth signal and, at the same time, independent of the measurement matrix. This paper solves this problem by providing a rigorous analysis of AMP with a spectral initialization. Spectral initializations have been widely studied in recent years, and have two attractive features. First, for phase retrieval, they meet the information theoretic threshold for weak recovery \cite{mondelli2017fundamental}. This means that, when the spectral initialization fails, no other method can work. Second, for a large class of GLMs, if the spectral method is unsuccessful, then AMP has an attractive fixed point at $0$, see Theorem 5 in \cite{mondelli2017fundamental}. This is a strong indication that, when the spectral initialization fails, the problem is computationally hard. An interesting future direction is to analyze the fixed points of AMP with spectral initialization, and compare  with those of other algorithms that can be initialized with a spectral  estimator, e.g., gradient descent.

Our analysis is based on an artificial AMP that first closely approximates the spectral estimator and then the true AMP algorithm. This technical tool is  versatile and could be used beyond GLMs with Gaussian sensing matrices. Examples include more general measurement models \cite{fan2020approximate,emami2020generalization}, other message passing algorithms, e.g., Vector AMP \cite{schniter2016vector}, or the design of an artificial AMP that leads to a different estimator. We also highlight that the AMP analyzed here is rather general, and it includes as special cases both the Bayes-optimal AMP for GLMs and AMPs designed to optimize objective functions tailored to the signal prior.

\section*{Acknowledgements}

The authors would like to thank Andrea Montanari for  helpful discussions. M. Mondelli was partially supported by the 2019 Lopez-Loreta Prize. R. Venkataramanan was partially supported by the Alan Turing Institute under the EPSRC grant EP/N510129/1.

{\small{
\bibliographystyle{amsalpha}
\bibliography{all-bibliography}
\addcontentsline{toc}{section}{References}
}}

\appendix

\section{Proof of Proposition \ref{prop:opt_gt}} \label{app:proof_opt_gt}

From assumption \textbf{(B2)} on p. \pageref{assump:h_zv}, we recall that $h_t(u; \, y) = h_t(u; \, q(g, v))$. We write 
$\partial_g h_t(u; \, q(g,v))$ for the partial derivative with respect to $g$. We will show that $\mu_{X,t+1}$ in \eqref{eq:SE_def} can be written as: 
\begin{align}
\mu_{X,t+1} & = \sqrt{\delta}\E\{ \partial_g h_t(U_t; \, q(G,V)) \},
\label{eq:muXt1_alt1} \\
& = \sqrt{\delta} \E \left\{ h_t(U_t; Y) \left(  \frac{\E\{ G | U_t, Y \} - \E\{ G | U_t \} }{{\sf Var}\{G  | U_t \}} \right) \right\}.
\label{eq:muXt1_alt2}
\end{align}
From \eqref{eq:muXt1_alt2}, we have that 
\beq
\frac{\mu_{X,t+1}}{\sigma_{X,t+1}} = \frac{\sqrt{\delta}}{\sqrt{\E\{ h_t(U_t; Y)^2 \}}}
\E \left\{ h_t(U_t; Y) \left(  \frac{\E\{ G | U_t, Y \} - \E\{ G | U_t \} }{{\sf Var}\{G  | U_t \}} \right) \right\}.
\eeq
The absolute value of the RHS is maximized when $h_t = c \,  h_t^*$, for $c \neq  0$ and $h_t^*$ is given in \eqref{eq:opt_gt}. To obtain the alternative expression in \eqref{eq:opt_gt_alt2} 
from \eqref{eq:opt_gt}  we recall that $U_t$ is Gaussian with zero mean and variance $(\mu_{U,t}^2 + \sigma_{U,t}^2)$. Furthermore,  the conditional distribution of $G$ given $U_t=u$ is Gaussian with $\E\{ G \mid U_t =u\} = \rho_t u$ and ${\sf Var}(G \mid U_t=u) = (1 - \rho_t \mu_{U,t})$. Therefore, with $W \sim \normal(0,1)$ we have
\begin{align}
\E\{ G \mid  U_t =u, \, Y =y  \} & = 
\frac{\E_W\{ (\rho_t u + \sqrt{1-\rho_t\, \mu_{U,t}}\, W)\, p_{Y|G}( y \, | \, \rho_t u + \sqrt{1-\rho_t\, \mu_{U,t}}\, W) \}}{\E_W\{ p_{Y|G}( y \, | \, \rho_t u + \sqrt{1-\rho_t \, \mu_{U,t}}\, W)\}} \nonumber \\
& = \rho_t u \, +  \, \sqrt{1-\rho_t\, \mu_{U,t}} \, \frac{\E\{ W p_{Y|G}( y \, | \, \rho_t u + \sqrt{1-\rho_t\, \mu_{U,t}}\, W) \}}{\E_W\{ p_{Y|G}( y \, | \, \rho_t u + \sqrt{1-\rho_t \, \mu_{U,t}}\, W)\}}. \label{eq:alt_exp1}
%
\end{align}
Substituting \eqref{eq:alt_exp1}  in \eqref{eq:opt_gt}  yields \eqref{eq:opt_gt_alt2}.

It remains to show \eqref{eq:muXt1_alt2}, which we do by first showing \eqref{eq:muXt1_alt1}. Define  $e_t: \reals^3 \to \reals$ by
\beq
e_t(g, w, v) = h_t(\mu_{U,t}g + \sigma_{U,t} w; \, q(g, v)).
\eeq
Then, using the chain rule, the partial derivative of $e_t(g, w, v)$ with respect to $g$ is
\beq
\partial_g e_t(g, w, v) = \mu_{U,t} h_t'(\mu_{U,t}g + \sigma_{U,t} w; \, q(g, v))
\, +  \, \partial_g h_t(u; \, q(g,v)).
\label{eq:et_partg}
\eeq
The parameter  $\mu_{X,t+1}$ in  \eqref{eq:SE_def} can be written as
\begin{align}
\mu_{X, t+1} & = \sqrt{\delta}\left[ \E\{G \, e_t(G, W_{U,t}, V) \} - \mu_{U,t} \E\{ h_t'(\mu_{U,t}G+ \sigma_{U,t}W_{U,t}; \, Y)\} \right] \nonumber \\
& \stackrel{{\rm (i)}}{=} 
\sqrt{\delta}\left[\E \big\{ \partial_g e_t(G, W_{U,t}, V) \big\} - \mu_{U,t} \E\{ h_t'(\mu_{U,t}G+ \sigma_{U,t}W_{U,t}; \, Y)\} \right] \nonumber \\
& = \sqrt{\delta} \, \E\{  \partial_g h_t(U_t; \, q(G,V)) \},
\end{align}
where the last equality is due to \eqref{eq:et_partg}, and {\rm (i)} holds due to Stein's lemma. Finally, we obtain \eqref{eq:muXt1_alt2} from \eqref{eq:muXt1_alt1} as follows:
\beq
\begin{split}
 \E\{ \partial_g h_t( U_t; \, q(G, V) )\} & = \E \left\{ \E_{G|U_t} \big[  \partial_g  h_t( U_t; \, q(G, V) ) \mid U_t  \big]   \right\} \\
& \stackrel{\rm (ii)}{=} \E \left\{ \E_{G|U_t}\big[  h_t( U_t; \, q(G, V)) \cdot (G - \E\{G|U_t \})/{\sf Var}\{G  | U_t \} \mid U_t \big] \right\} \\
& = \E \left\{ \E_{G|U_t, Y} \big[   h_t( U_t; \, Y ) \cdot (G - \E\{G|U_t \})/{\sf Var}\{G  | U_t \} \mid U_t, Y \big] \right \} \\
& = \E \left\{ h_t(U_t; Y) \cdot \left(  (\E\{ G | U_t, Y \} - \E\{ G | U_t \})/{\sf Var}\{G  | U_t \} \right) \right\}.
\end{split}
\eeq
Here step {\rm (ii)} holds due to Stein's lemma. This completes the proof of the proposition.  \qed

\section{Proof of the Main Result}

\subsection{The Artificial GAMP Algorithm}\label{app:artGAMP}

The state evolution parameters for the artificial GAMP are recursively defined as follows.
Recall from \eqref{eq:tXtUt_def} that $\tX_t  = \mu_{\tX,t} X + \sigma_{\tX,t} W_{\tX,t}$ and $\tU_t  \equiv \mu_{\tU,t} G + \sigma_{\tU,t} W_{\tU,t}$. 
Using \eqref{eq:tbx0_tbu0}, the state evolution initialization is
\begin{align}
\mu_{\tX, 0} = \alpha, \qquad   \sigma^2_{\tX, 0} = 1- \alpha^2, \qquad  \beta_0 =\sqrt{\mu_{\tX, 0}^2 + \sigma^2_{\tX, 0} } =1.
\label{eq:SE_tinit}
\end{align}
For $0 \leq t \leq (T-1)$, the state evolution parameters are iteratively computed by using the functions defined in \eqref{eq:tftg_leT} in \eqref{eq:SE_def}:
\begin{align}
    & \mu_{\tU,t} = \frac{\mu_{\tX,t}}{\sqrt{\delta} \beta_t}, \qquad \sigma_{\tU, t}^2 = \frac{\sigma_{\tX,t}^2}{ \delta \, \beta^2_t}, \nonumber \\
    &\mu_{\tX,t+1} =   \frac{\mu_{\tX,t}}{\sqrt{\delta}\beta_t} ,\qquad \sigma_{\tX,t+1}^2 = \frac{1}{\beta_t^2} \, \E\left\{ \frac{Z_s^2 (G^2\mu_{\tX,t}^2+\sigma_{\tX,t}^2)}{(\lambda_\delta^*-Z_s)^2}\right\} ,     \nonumber    \\
    & \beta_{t+1} = \sqrt{\mu_{\tX,{t+1}}^2 + \sigma_{\tX,{t+1}}^2}. \label{eq:defbetat1}
\end{align}
Here we recall that $G \sim \normal(0,1)$, $Y \sim p_{Y|G}(\cdot \mid G)$, $Z_s=\cT_s(Y)$, and the equality in \eqref{eq:corr_cond} which is used to obtain the expression for $\mu_{\tX,t+1}$.
For $t \geq T$,  the state evolution parameters are:
\begin{align}
    \mu_{\tU,t} & = \frac{1}{\sqrt{\delta}} \E\{ X f_{t-T}(\tX_t) \}, \nonumber \\
     \sigma_{\tU,t}^2 & = 
     \frac{1}{\delta} \E\big\{ f_{t-T}(\tX_t)^2 \big\} - \mu_{\tU,t}^2, \nonumber \\
     \mu_{\tX,t+1}& =\sqrt{\delta} \E\{ G h_{t-T}(\tU_t; Y) \} \, - \, \E\{ h_{t-T}'(\tU_t;Y) \} \E\{ X f_{t-T}(\tX_t) \}, \nonumber \\
     \sigma_{\tX,t+1}^2 & = \E\{ h_{t-T}(\tU_t; Y)^2 \}. \label{eq:SE_def_til}
\end{align}

\begin{proposition}[State evolution for artificial GAMP]
\label{prop:tilSE}
Consider the setting of Theorem \ref{thm:GLM_spec}, the artificial GAMP iteration described in \eqref{eq:txk_update}-\eqref{eq:onsager_art}, and the corresponding state evolution parameters defined in \eqref{eq:SE_tinit}-\eqref{eq:SE_def_til}.

For any \PL(2) function $\psi: 
\reals^{2}  \to \reals$, the following holds almost surely for $t \geq 1$:
\begin{align}
& \lim_{d \to \infty} \frac{1}{d} \sum_{i=1}^d \psi(x_i, \tx^{t}_i) = \E\{ \psi(X, \, \tX_t) \}, \label{eq:psiXt}\\
& \lim_{n \to \infty}\frac{1}{n} \sum_{i=1}^n \psi(y_i, \tu^t_i) = \E\{ \psi( Y,  \, \tU_t) \}. \label{eq:psiGt}
\end{align}
Here $X\sim P_X$ and  $Y \sim P_{Y|G}$, with $G \sim \normal(0,1)$.  The random variables $\tX_t, \tU_t$ are defined in \eqref{eq:tXtUt_def}. 
\end{proposition}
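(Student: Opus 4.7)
The plan is to apply the standard state evolution theorem for GAMP (Javanmard and Montanari \cite{javanmard2013state}) directly. The whole reason for introducing the artificial GAMP is that its initialization $\tbx^0 = \alpha \bx + \sqrt{1-\alpha^2}\, \bn$ is independent of the sensing matrix $\bA$ by construction---precisely the hypothesis that the reference theorem requires and that the true GAMP with spectral initialization does not satisfy.

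To apply \cite{javanmard2013state}, I must verify the Lipschitz continuity of the functions appearing in the iteration. For $t \geq T$, $\tf_t = f_{t-T}$ and $\th_t = h_{t-T}$ are Lipschitz with derivatives continuous a.e.\ by the assumptions of Theorem~\ref{thm:GLM_spec}. For $0 \leq t \leq T-1$, $\tf_t(x) = x/\beta_t$ is linear, and an induction using the recursion \eqref{eq:defbetat1} shows $\beta_t > 0$ for all $t$: indeed $\beta_0 = 1$, and assumption \textbf{(A1)} (which gives $\P(Z_s \ne 0) > 0$) together with the formula for $\sigma_{\tX,t+1}^2$ in \eqref{eq:defbetat1} yields strict positivity of $\beta_{t+1}$ at each step. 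The function $\th_t(x;y) = \sqrt{\delta}\, x\, \cT_s(y)/(\lambda_\delta^* - \cT_s(y))$ is Lipschitz in $x$ with constant uniformly bounded in $y$: by \textbf{(B3)}, $\cT_s$ is bounded, and the supremum $\tau$ of the support of $Z_s$ satisfies $\tau < \lambda_\delta^*$, so the coefficient of $x$ is bounded by $\sqrt{\delta}\, \tau/(\lambda_\delta^* - \tau)$.

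Finally, I check that specializing the general state evolution recursion of \cite{javanmard2013state} to the artificial GAMP's functions reproduces the recursion \eqref{eq:defbetat1}--\eqref{eq:SE_def_til}. For $t \geq T$, this is an immediate re-indexing of \eqref{eq:SE_def}. For $0 \leq t \leq T-1$, substituting $\tf_t(x) = x/\beta_t$ into \eqref{eq:SE_def} yields $\mu_{\tU,t} = \mu_{\tX,t}/(\sqrt{\delta}\beta_t)$ and $\sigma_{\tU,t}^2 = \sigma_{\tX,t}^2/(\delta \beta_t^2)$ upon using $\E\{X^2\}=1$. Plugging in $\th_t(u;y) = \sqrt{\delta}\, u\, Z_s/(\lambda_\delta^* - Z_s)$ together with $\tU_t = \mu_{\tU,t} G + \sigma_{\tU,t} W_{\tU,t}$, where $W_{\tU,t}$ is independent of $(G,Y)$, gives
\begin{equation*}
\mu_{\tX,t+1} = \delta\, \mu_{\tU,t}\, \E\!\left\{\frac{G^2 Z_s}{\lambda_\delta^* - Z_s}\right\} \, - \, \sqrt{\delta}\,\E\!\left\{\frac{Z_s}{\lambda_\delta^* - Z_s}\right\} \frac{\mu_{\tX,t}}{\beta_t},
\end{equation*}
which collapses to $\mu_{\tX,t+1} = \mu_{\tX,t}/(\sqrt{\delta}\beta_t)$ after applying the identity $\E\{Z_s(G^2-1)/(\lambda_\delta^* - Z_s)\} = 1/\delta$ from \eqref{eq:corr_cond}; the $\sigma_{\tX,t+1}^2$ update is a direct computation of $\E\{\th_t(\tU_t;Y)^2\}$ using independence of $W_{\tU,t}$ from $(G,Z_s)$. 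There is no substantive obstacle: the \PL(2) convergence in \eqref{eq:psiXt}--\eqref{eq:psiGt} is the conclusion of \cite{javanmard2013state} applied to this setting, since that result yields $W_2$-convergence of the joint empirical distributions of $(\bx, \tbx^t)$ and $(\by, \tbu^t)$ to the laws of $(X, \tX_t)$ and $(Y, \tU_t)$, and $W_2$-convergence is equivalent to convergence of integrals against all \PL(2) test functions.
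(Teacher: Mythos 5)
Your proposal is correct and follows exactly the paper's route: the paper's entire proof of Proposition \ref{prop:tilSE} is the one-line observation that the state evolution result of \cite{javanmard2013state} applies because $\tbx^0$ is independent of $\bA$, and your additional verifications (positivity of $\beta_t$, uniform Lipschitz constant of $\th_t$ via \textbf{(A2)}--\textbf{(B3)}, and the specialization of the generic recursion to \eqref{eq:defbetat1}--\eqref{eq:SE_def_til} using the identity in \eqref{eq:corr_cond}) are exactly the details the paper leaves implicit. The only nitpick is that the Lipschitz constant of $\th_t$ in $x$ should be bounded using $\sup_y \abs{\cT_s(y)}/(\lambda_\delta^* - \tau)$ rather than $\tau/(\lambda_\delta^* - \tau)$, since $\tau$ only bounds $Z_s$ from above; boundedness of $\cT_s$ from \textbf{(B3)} still gives the needed uniform bound.
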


The proposition follows directly from the state evolution result of \cite{javanmard2013state} since the initialization $\tbx^0$ of the artificial GAMP is independent of $\bA$.

\subsection{Analysis of the First Phase}\label{app:firstphase}

\begin{lemma}[Fixed point of state evolution for first phase] Consider the setting of Theorem \ref{thm:GLM_spec}. 
Then, the state evolution recursion for the first phase, given by \eqref{eq:SE_tinit}-\eqref{eq:defbetat1}, converges as $T \to \infty$ to the following fixed point: 
\beq
\mu_{\tX} \triangleq  \lim_{T \to \infty}\mu_{\tX, T} =\frac{a}{\sqrt{\delta}}, \qquad  \sigma_{\tX}^2 \triangleq\lim_{T \to \infty}\sigma^2_{\tX, T}= \frac{1-a^2}{\delta},
\label{eq:musig_a}
\eeq
where $a$ is defined in \eqref{eq:a2_explicit}.
\label{lem:SE_FP_phase1}
\end{lemma}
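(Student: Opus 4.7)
The plan is to first verify that $(\mu_{\tX}, \sigma_{\tX}^2) = (a/\sqrt{\delta}, (1-a^2)/\delta)$ is indeed a fixed point of the recursion \eqref{eq:defbetat1}, and then reduce the two-dimensional update to a scalar one that can be linearized and shown to be a contraction around this fixed point.

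For the fixed point check, a constant solution of $\mu_{\tX,t+1} = \mu_{\tX,t}/(\sqrt{\delta}\beta_t)$ forces $\beta^2 = \mu_{\tX}^2 + \sigma_{\tX}^2 = 1/\delta$, which is consistent with the claimed values $\mu_{\tX}^2 = a^2/\delta$ and $\sigma_{\tX}^2 = (1-a^2)/\delta$. Plugging these into the update for $\sigma_{\tX,t+1}^2$ and simplifying reduces, after rearranging, to an identity equivalent to the explicit formula \eqref{eq:a2_explicit} for $a^2$ (equivalently, the two relations in \eqref{eq:corr_cond}).

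For global convergence, I would introduce the normalized variable $u_t := \mu_{\tX,t}^2/\beta_t^2$, so that $\sigma_{\tX,t}^2/\beta_t^2 = 1 - u_t$ by the definition of $\beta_t$. Setting
\[
A := \E\left\{\frac{Z_s^2}{(\lambda_\delta^*-Z_s)^2}\right\}, \qquad
B := \E\left\{\frac{Z_s^2\, G^2}{(\lambda_\delta^*-Z_s)^2}\right\},
\]
direct substitution into \eqref{eq:defbetat1} gives $\mu_{\tX,t+1}^2 = u_t/\delta$ and $\sigma_{\tX,t+1}^2 = u_t B + (1-u_t) A$, which combine into the one-dimensional recursion
\[
u_{t+1} \,=\, \frac{u_t}{u_t\bigl(1 + \delta(B-A)\bigr) + \delta A}.
\]
The change of variable $v_t := 1/u_t$ linearizes this to the affine recursion $v_{t+1} = \delta A\, v_t + \bigl(1 + \delta(B-A)\bigr)$, whose unique fixed point is $v^* = (1 + \delta(B-A))/(1-\delta A) = 1/a^2$ by \eqref{eq:a2_explicit}. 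Since $\alpha \in (0,1)$ yields $u_0 = \alpha^2 \in (0,1)$ and hence finite $v_0$, and since the contraction factor $\delta A$ lies in $[0,1)$ thanks to the non-negativity of $Z_s^2/(\lambda_\delta^*-Z_s)^2$ and the second inequality in \eqref{eq:corr_cond}, the sequence $v_t$ converges geometrically to $v^*$. This gives $u_t \to a^2$ and $\beta_t^2 = u_t/\delta + (1-u_t)A + u_t B \to 1/\delta$, which together yield the two limits claimed in \eqref{eq:musig_a}.

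The real work is in finding the right scalar quantity $u_t = \mu_{\tX,t}^2/\beta_t^2$ that exposes the contractive structure; after that reduction, the linearization via $v_t = 1/u_t$ and the bound $\delta A < 1$ from \eqref{eq:corr_cond} finish the argument with no delicate analysis. The assumption $\alpha > 0$ in the artificial GAMP initialization is precisely what avoids the trivial uninformative fixed point $u = 0$ (corresponding to $\mu_{\tX}=0$), so nothing further is required about the choice of $\alpha$.
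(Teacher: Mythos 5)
Your proposal is correct, and I verified the computations: with $u_t=\mu_{\tX,t}^2/\beta_t^2$ one indeed gets $\mu_{\tX,t+1}^2=u_t/\delta$, $\sigma_{\tX,t+1}^2=u_tB+(1-u_t)A$, hence $u_{t+1}=u_t/\bigl(u_t(1+\delta(B-A))+\delta A\bigr)$, and the reciprocal $v_t=1/u_t$ satisfies the affine recursion with contraction factor $\delta A\in[0,1)$ (the bound $\delta A<1$ being exactly the inequality in \eqref{eq:corr_cond}, which holds under the assumption $\psi_\delta'(\lambda_\delta^*)>0$); the fixed point $v^*=(1+\delta(B-A))/(1-\delta A)$ equals $1/a^2$ by \eqref{eq:a2_explicit}, and the limits $\beta_t^2\to1/\delta$, $\mu_{\tX,t}\to a/\sqrt{\delta}$, $\sigma_{\tX,t}^2\to(1-a^2)/\delta$ follow. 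The route differs from the paper's in presentation rather than substance: the paper does not carry out the convergence analysis at all, but instead verifies two conditions --- namely $\E\{Z(G^2-1)\}=1/\delta$ (the equality in \eqref{eq:corr_cond}) and $\E\{Z^2\}/(\E\{Z(G^2-1)\})^2<\delta$ (equivalently your $\delta A<1$), together with $\mu_{\tX,0}=\alpha>0$ --- and then invokes Lemma 5.2 of \cite{mondelli2020optimal}, which encapsulates precisely the contraction argument you reconstructed. Your version has the advantage of being self-contained and of making the geometric rate and the role of each hypothesis explicit (the inequality gives the contraction, the positivity of $\alpha$ avoids the uninformative fixed point $u=0$, and $1/\delta-A>0$ ensures $a^2>0$ so that $v^*$ is finite); the paper's version is shorter and avoids duplicating an argument already published. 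Either is a complete proof.
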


\begin{proof}
Recall that $\lambda_\delta^*$ denotes the unique solution of $\zeta_{\delta}(\lambda) = \phi(\lambda)$ for $\lambda > \tau$ (also given by \eqref{eq:corr_cond}), and define $Z=Z_s/(\lambda_\delta^*-Z_s)$, where $Z_s=\cT_s(Y)$. Note that
\beq \label{eq:simplify1}
\E\{ Z(G^2-1)\} = \E \left\{ \frac{Z_s(G^2-1)}{\lambda_\delta^*- Z_s} \right\} = \frac{1}{\delta},
\eeq
where the second equality follows from the equality in \eqref{eq:corr_cond}. Moreover, the inequality in \eqref{eq:corr_cond} implies that 
\beq\label{eq:simplify2}
\frac{\E\{ Z^2\}}{ (\E\{ Z (G^2-1) \} )^2} = \delta^2 \E\left\{ \frac{Z_s^2}{(\lambda_\delta^* - Z_s)^2}\right\} < \delta.
\eeq
Thus, by recalling that the state evolution initialization $\mu_{\tX, 0} = \alpha$ is strictly positive, the result follows from Lemma 5.2 in \cite{mondelli2020optimal}.
\end{proof}

\begin{lemma}[Convergence to spectral estimator]
Consider the setting of Theorem \ref{thm:GLM_spec}, and consider the first phase of the artificial GAMP iteration, given by \eqref{eq:txk_update}-\eqref{eq:tuk_update} with $\tf_t$ and $\th_t$ defined in \eqref{eq:tftg_leT}. Then, 
\beq
\lim_{T \to \infty} \lim_{d \to \infty} \, \frac{\| \sqrt{d} \,  \bxs
     - \sqrt{\delta} \, \tbx^T \|^2}{d} =0 \ \text{ a.s.}
     \label{eq:xstxT_diff}
\eeq
Furthermore, for any \PL(2) function $\psi: \reals \times \reals \to \reals$, almost surely we have:
\begin{align}
& \lim_{d \to \infty} \,   \frac{1}{d} \sum_{i=1}^d \psi(x_i,  \sqrt{d} \, \hat{x}^{\rm s}_{i}) = \lim_{T \to \infty}  \lim_{d \to \infty}  \frac{1}{d} \sum_{i=1}^d \psi(x_i, \sqrt{\delta} \, \tx_i^{T}) =  
\E\{ \psi(X  ,\,  \sqrt{\delta}\, (\mu_{\tX} X + \sigma_{\tX} W )) \}.
 \label{eq:psiX1_sp}
\end{align}
Here $X \sim P_X$ and $W \sim \normal(0,1)$  are independent.
\label{lem:xsxT}
\end{lemma}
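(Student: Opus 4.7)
The plan is to combine the state evolution result for the first phase (Proposition \ref{prop:tilSE}) with an explicit identification of the limiting iterate $\tbx^T$ as the (properly scaled) spectral estimator. The key structural observation is that in the first phase, $\tf_t(x) = x/\beta_t$ is linear in $x$ and $\th_t(u;y) = \sqrt{\delta}\, u\, \cT_s(y)/(\lambda_\delta^* - \cT_s(y))$ is linear in $u$. As a result, the artificial GAMP recursion in \eqref{eq:txk_update}-\eqref{eq:tuk_update} collapses into a power-iteration-like scheme for the matrix $\bA^\sT \, \diag(\cT_s(y_i)/(\lambda_\delta^* - \cT_s(y_i))) \, \bA$ with appropriate normalization by the $\beta_t$ and Onsager corrections. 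The top eigenvector of this matrix coincides (up to sign) with that of $\bD_n$ in \eqref{eq:Dn_def}, namely $\bxs$. This is precisely the setting analyzed in \cite{mondelli2020optimal}.

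First, I would apply Proposition \ref{prop:tilSE} to the test function $\psi$: for every fixed $T \geq 1$, the empirical $\psi$-average of $(\bx, \sqrt{\delta}\, \tbx^T)$ converges almost surely to $\E\{\psi(X, \sqrt{\delta}(\mu_{\tX,T} X + \sigma_{\tX,T} W))\}$. Lemma \ref{lem:SE_FP_phase1} then guarantees that $(\mu_{\tX,T}, \sigma_{\tX,T}) \to (\mu_{\tX}, \sigma_{\tX})$ as $T \to \infty$, and the PL(2) regularity of $\psi$ together with uniform integrability (from the boundedness of the second moments of $\tX_T$ along the recursion) allows passage to the limit inside the expectation, yielding the right-hand equality of \eqref{eq:psiX1_sp}. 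The next and central step is to establish \eqref{eq:xstxT_diff}, i.e., the actual $L^2$ convergence of $\sqrt{\delta}\,\tbx^T$ to $\sqrt{d}\,\bxs$. Here I would invoke the construction and analysis of \cite{mondelli2020optimal} (specifically the argument leading to their Lemma 5.2): the equalities \eqref{eq:simplify1}-\eqref{eq:simplify2} verified in Lemma \ref{lem:SE_FP_phase1} place the recursion exactly in the regime where the power iteration converges, and the strict inequality in \eqref{eq:simplify2} (equivalent to $\psi_\delta'(\lambda_\delta^*) > 0$) provides the spectral gap responsible for geometric convergence to the principal eigenvector.

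Finally, to obtain the left-hand equality of \eqref{eq:psiX1_sp}, I would use the PL(2) bound in \eqref{eq:PL2_prop}, combined with Cauchy--Schwarz, to estimate
\[
\left| \frac{1}{d}\sum_{i=1}^d \psi(x_i,\sqrt{d}\,\hat{x}^{\rm s}_i) - \frac{1}{d}\sum_{i=1}^d \psi(x_i,\sqrt{\delta}\,\tx^T_i) \right| \leq \frac{C}{d} \, \big( 1 + \|\bx\|_2 + \sqrt{d}\,\|\bxs\|_2 + \sqrt{\delta}\,\|\tbx^T\|_2\big) \, \big\|\sqrt{d}\,\bxs - \sqrt{\delta}\,\tbx^T\big\|_2.
\]
Using $\|\bx\|_2^2 = d$, $\|\bxs\|_2 = 1$, and that $\|\tbx^T\|_2^2/d$ stays bounded by virtue of state evolution, the right-hand side tends to $0$ after taking $d \to \infty$ then $T \to \infty$, by \eqref{eq:xstxT_diff}. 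Hence the two double limits coincide. The main obstacle is \eqref{eq:xstxT_diff} itself: state evolution alone only identifies the empirical distribution of $\tbx^T$ with that of the spectral estimator, not the vectors themselves in $\ell^2$. Converting distributional convergence into vector convergence requires the power-method analysis of \cite{mondelli2020optimal}, which exploits both the linearity of the first-phase nonlinearities and the spectral gap guaranteed by the assumption $\psi_\delta'(\lambda_\delta^*) > 0$.
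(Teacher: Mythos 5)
Your proposal is correct and follows essentially the same route as the paper: the paper likewise verifies the conditions \eqref{eq:simplify1}--\eqref{eq:simplify2} and then invokes the power-method/GAMP analysis of \cite{mondelli2020optimal} (their Lemma 5.4, Eqs.~(5.87) and (5.31)) to obtain both \eqref{eq:xstxT_diff} and \eqref{eq:psiX1_sp}, after noting that the slightly different initialization \eqref{eq:tbx0_tbu0} does not affect that argument. Your additional derivation of \eqref{eq:psiX1_sp} from \eqref{eq:xstxT_diff} via the $\PL(2)$ bound and Cauchy--Schwarz is a correct, slightly more self-contained way to get what the paper reads off directly from the cited reference.
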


\begin{proof}
As in the proof of the previous result, let $Z=Z_s/(\lambda_\delta^*-Z_s)$ and note that \eqref{eq:simplify1}-\eqref{eq:simplify2} hold. Also define
\beq
Z' \triangleq \frac{Z}{Z+\delta \E\{ Z (G^2-1) \}}=\frac{Z}{Z+1}=\frac{Z}{\lambda_\delta^*}.
\eeq
Then, the assumptions of Lemma 5.4 in \cite{mondelli2020optimal} are satisfied, with the only difference of the initialization of the GAMP iteration (cf. \eqref{eq:tbx0_tbu0} in this paper and (5.4) in \cite{mondelli2020optimal}). However, it is straightforward to verify that the difference in the initialization does not affect the proof of Lemma 5.4 in \cite{mondelli2020optimal}. Thus, \eqref{eq:xstxT_diff} follows from (5.87) of \cite{mondelli2020optimal}, and \eqref{eq:psiX1_sp} follows by taking $k=2$ in (5.31) of \cite{mondelli2020optimal}.
\end{proof}

We will also need the following result on the convergence of the GAMP iterates.
\begin{lemma}[Convergence of GAMP iterates]
Consider the first phase of the artificial GAMP iteration, given by \eqref{eq:txk_update}-\eqref{eq:tuk_update} with $\tf_t$ and $\th_t$ defined in \eqref{eq:tftg_leT}. Then, the following limits hold almost surely:
\beq
\lim_{T \to \infty} \lim_{n \to \infty} \, \frac{1}{n} \| \tbu^{T-1} - \tbu^{T-2} \|_2^2 =0, \qquad \lim_{T \to \infty} \lim_{d \to \infty} \frac{1}{d} \| \tbx^{T} - \tbx^{T-1} \|_2^2 =0.
\eeq
\label{lem:diffs}
\end{lemma}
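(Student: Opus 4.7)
The plan is to reduce both limits to expectations under the artificial-GAMP state evolution, and then control these via Lemmas \ref{lem:SE_FP_phase1} and \ref{lem:xsxT}. The multi-time extension of the state evolution result of \cite{javanmard2013state} underlying Proposition \ref{prop:tilSE} implies that the joint empirical distributions of $(\bx,\tbx^T,\tbx^{T-1})$ and $(\by,\tbu^{T-1},\tbu^{T-2})$ converge almost surely in $W_2$ to the joint laws of $(X,\tX_T,\tX_{T-1})$ and $(Y,\tU_{T-1},\tU_{T-2})$ prescribed by the joint SE. Applying this with the \PL$(2)$ test function $\psi(a,b)=(a-b)^2$ reduces both claims to
\begin{equation*}
\lim_{T\to\infty}\E\bigl\{(\tX_T-\tX_{T-1})^2\bigr\}=0, \qquad \lim_{T\to\infty}\E\bigl\{(\tU_{T-1}-\tU_{T-2})^2\bigr\}=0.
\end{equation*}

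For the first limit I would anchor the argument on the spectral estimator via Lemma \ref{lem:xsxT}. The triangle inequality yields
\begin{equation*}
\frac{1}{d}\|\tbx^T-\tbx^{T-1}\|_2^2 \le \frac{2}{\delta d}\Bigl(\|\sqrt{\delta}\,\tbx^T-\sqrt{d}\,\bxs\|_2^2+\|\sqrt{\delta}\,\tbx^{T-1}-\sqrt{d}\,\bxs\|_2^2\Bigr),
\end{equation*}
and both summands on the right vanish under $\lim_{T\to\infty}\lim_{d\to\infty}$ by \eqref{eq:xstxT_diff}; combined with the SE identity above, this forces $\lim_T\E\{(\tX_T-\tX_{T-1})^2\}=0$, hence the first claim.

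For the second limit I would argue directly in the SE picture. By Lemma \ref{lem:SE_FP_phase1} the marginal SE parameters $(\mu_{\tX,T},\sigma_{\tX,T},\beta_T)$ converge to finite limits, and the induced parameters $(\mu_{\tU,T},\sigma_{\tU,T})$ do too by \eqref{eq:defbetat1}, so $\tU_{T-1}$ and $\tU_{T-2}$ share the same limiting marginal law. It thus suffices to show that the cross-moment $\E\{\tU_{T-1}\tU_{T-2}\}$ also converges to this common second moment. The joint-time SE supplies a cross-covariance recursion of the same structural form as \eqref{eq:defbetat1}, driven by $\E\{\tX_T\tX_{T-1}\}$; applying the SE identity to the \PL$(2)$ function $\psi(a,b)=ab$ together with the $\tbx$ claim just established gives $\E\{\tX_T\tX_{T-1}\}\to \mu_{\tX}^2+\sigma_{\tX}^2$, and the same fixed-point argument as in Lemma \ref{lem:SE_FP_phase1} propagates this to the $\tU$ cross-moment.

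The main obstacle is setting up and solving this cross-covariance recursion cleanly: it must be tracked alongside the coupled $\tbx$ cross-moment, and its fixed point matched to the marginal second moment. An alternative, more elementary route---subtracting the update rules \eqref{eq:tuk_update} at times $T-1$ and $T-2$ and bounding the difference via $\|\bA\|_{\rm op}=O(1)$ a.s.\ and the $\tbx$ claim---runs into a contraction issue on the Onsager-type term $\tfrac{\sqrt{\delta}\bZ_s}{\lambda_\delta^*\bI-\bZ_s}(\tbu^{T-2}-\tbu^{T-3})$, whose operator norm need not lie below $1$; this is most cleanly resolved by the SE computation outlined above.
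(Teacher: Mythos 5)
Your overall strategy---pass to the two-time joint state evolution, reduce both squared differences to $\E\{(\tX_T-\tX_{T-1})^2\}$ and $\E\{(\tU_{T-1}-\tU_{T-2})^2\}$, and show the cross-moments converge to the common marginal second moments---is exactly the right one, and for the $\tbu$ part it is essentially the argument of Lemma 5.3 in \cite{mondelli2020optimal}, which is all the paper's proof invokes (noting only that the changed initialization is harmless because $\mu_{\tX,0}=\alpha>0$). The cited lemma carries out precisely the coupled cross-covariance recursion you describe as ``the main obstacle,'' showing that the normalized correlation between successive iterates is an increasing sequence converging to one.

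However, your treatment of the first limit has a genuine circularity. You anchor the $\tbx$ claim on Lemma \ref{lem:xsxT}, i.e.\ on \eqref{eq:xstxT_diff}, but that result (Lemma 5.4 in \cite{mondelli2020optimal}) is itself proved \emph{from} the convergence of successive iterates: one uses $\|\tbx^{T}-\tbx^{T-1}\|_2/\sqrt{d}\to 0$ and $\|\tbu^{T-1}-\tbu^{T-2}\|_2/\sqrt{n}\to 0$ to show that $\tbx^T$ approximately satisfies the eigenvector equation for $\bD_n$ at eigenvalue $\lambda_\delta^*$, and then invokes the spectral gap guaranteed by $\psi_\delta'(\lambda_\delta^*)>0$. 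So \eqref{eq:xstxT_diff} is downstream of the present lemma, not available as an input to it. Since your second limit also leans on ``the $\tbx$ claim just established'' to control $\E\{\tX_T\tX_{T-1}\}$, the circularity infects the whole argument. The fix is to drop the appeal to Lemma \ref{lem:xsxT} entirely and run the two-time covariance recursion for $\E\{\tX_T\tX_{T-1}\}$ and $\E\{\tU_{T-1}\tU_{T-2}\}$ jointly from scratch: the first-phase functions \eqref{eq:tftg_leT} are linear in the iterate, so the cross-covariance obeys a recursion of the same form as \eqref{eq:defbetat1}, and the same monotonicity/fixed-point analysis as in Lemma \ref{lem:SE_FP_phase1} (using $\mu_{\tX,0}=\alpha>0$ and the conditions \eqref{eq:simplify1}--\eqref{eq:simplify2}) shows both cross-moments converge to the marginal second moments. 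Your closing remark about the elementary subtraction route is apt---the Onsager-type operator need not be a contraction---which is exactly why the SE covariance computation is the way the cited proof goes.
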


Though the initialization of the GAMP in \cite{mondelli2020optimal} is different from \eqref{eq:tbx0_tbu0}, the proof of Lemma \ref{lem:diffs} is the same as that of Lemma 5.3 in \cite{mondelli2020optimal} since it only relies on $\mu_{\tX,0} = \alpha$ being strictly non-zero.

\subsection{Analysis of the Second Phase}\label{app:secondphase}

\begin{lemma}
Assume the setting of Theorem \ref{thm:GLM_spec}. Consider the artificial GAMP algorithm \eqref{eq:txk_update}-\eqref{eq:tuk_update} with the related state evolution recursion \eqref{eq:defbetat1}-\eqref{eq:SE_def_til}, and  the modified version of the true GAMP algorithm \eqref{eq:xt_update_mod}-\eqref{eq:ut_update_mod}. Fix any $\eps >0$. Then, for $t \geq 0$ such that $\sigma_{X,k}^2 >0$ for $0 \leq k \leq t$,  the following statements hold:
\begin{enumerate}
\item  
\begin{align}
& \lim_{T \to \infty} \abs{\mu_{\tU,t+T} - \mu_{U,t} } =0,  \qquad \qquad \lim_{T \to \infty} \abs{\sigma^2_{\tU,t+T} - \sigma^2_{U,t} } =0,
\label{eq:SE_gapU}\\
& \lim_{T \to \infty} \abs{\mu_{\tX,T+ t+1} - \mu_{X,t+1} }=0, \qquad \lim_{T \to \infty} \abs{\sigma^2_{\tX,T+t+1} - \sigma^2_{X,t+1} } =0. \label{eq:SE_gapX} 
\end{align}

\item  Let $\psi: \reals \times \reals \to \reals$ be a \PL(2) function. Then,  almost surely,
\begin{align}
&  \lim_{T \to  \infty} \lim_{n \to \infty} \abs{  \frac{1}{n} \sum_{i=1}^n \psi(y_i, \tu^{T+t}_i)  - \frac{1}{n} \sum_{i=1}^n \psi(y_i, \hu^t_i) } =0,  \label{eq:uhat_til}  \\
 & \lim_{T \to  \infty} \lim_{d \to \infty} \abs{ \frac{1}{d} \sum_{i=1}^d \psi(x_i, \tx^{T+t+1}_i)  - \frac{1}{d} \sum_{i=1}^d \psi(x_i, \hx^{t+1}_i) } =0. \label{eq:xhat_til}
\end{align}
\end{enumerate}
The limits in \eqref{eq:SE_gapX} and \eqref{eq:xhat_til} also hold for $t+1=0$.
\label{lem:ArtTrueconn}
\end{lemma}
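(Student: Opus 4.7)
The plan is an induction on $t$ that establishes the state evolution convergences \eqref{eq:SE_gapU}-\eqref{eq:SE_gapX} and the iterate closeness statements \eqref{eq:uhat_til}-\eqref{eq:xhat_til} together. The base case corresponds to the "$t+1=0$" clause of the lemma: the SE convergence $\mu_{\tX,T}\to\mu_{X,0}$ and $\sigma^2_{\tX,T}\to\sigma^2_{X,0}$ follows from Lemma \ref{lem:SE_FP_phase1} combined with the initialization \eqref{eq:SE_init}, while $\|\tbx^T-\hbx^0\|_2^2/d\to 0$ is Lemma \ref{lem:xsxT} after recalling $\hbx^0=\sqrt{d/\delta}\,\bxs$. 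From there, each step of the induction chains $\tbx^{T+t}\approx\hbx^t$ $\Rightarrow$ $\tbu^{T+t}\approx\hbu^t$ $\Rightarrow$ $\tbx^{T+t+1}\approx\hbx^{t+1}$, with the SE-parameter convergences feeding into the iterate parts through the Onsager coefficients. The empirical-distribution conclusions \eqref{eq:uhat_til}-\eqref{eq:xhat_til} follow from normalized $L^2$ closeness via a standard $\PL(2)$/Cauchy--Schwarz argument, exploiting a.s.\ boundedness of $\|\bx\|_2/\sqrt{d}$, $\|\by\|_2/\sqrt{n}$, and the analogous iterate norms.

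For the inductive step, the SE convergences follow from the continuity of the one-step map \eqref{eq:SE_def_til} in its inputs: expectations such as $\E\{f_t(\tX_{T+t})^2\}$ and $\E\{X f_t(\tX_{T+t})\}$ are continuous in $(\mu_{\tX,T+t},\sigma^2_{\tX,T+t})$ since $f_t$ is Lipschitz, and analogously on the $h_t$-side. For the iterate step I write
\begin{equation*}
\tbu^{T+t}-\hbu^t=\tfrac{1}{\sqrt{\delta}}\bA\bigl[f_t(\tbx^{T+t})-f_t(\hbx^t)\bigr]-\bigl[\tsb_{T+t}\,\th_{T+t-1}(\tbu^{T+t-1};\by)-\bsb_t h_{t-1}(\hbu^{t-1};\by)\bigr],
\end{equation*}
bound the first bracket using the Lipschitz property of $f_t$ and the a.s.\ bound $\|\bA\|_\op=O(1)$, and bound the second (for $t\ge 1$, where $\th_{T+t-1}=h_{t-1}$) using the inductive $L^2$ closeness $\|\tbu^{T+t-1}-\hbu^{t-1}\|_2^2/n\to 0$, Lipschitz continuity of $h_{t-1}$, and $\tsb_{T+t}\to\bsb_t$ (obtained from Proposition \ref{prop:tilSE} applied to $f_t'$ together with the SE convergence). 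The $\tX$-side step $\tbx^{T+t+1}\approx\hbx^{t+1}$ is handled symmetrically using \eqref{eq:txk_update} and \eqref{eq:xt_update_mod}.

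The main obstacle is the $t=0$ bridge for $\bu$: the modified GAMP's $\hbu^0$ carries the non-standard spectral Onsager term $\bsb_0(\sqrt{\delta}/\lambda_\delta^*)\bZ_s\bA\hbx^0$, while the artificial GAMP's $\tbu^T$ carries the usual Onsager term $\tsb_T\sqrt{\delta}\,\diag(Z_s/(\lambda_\delta^*-Z_s))\,\tbu^{T-1}$. Reconciling them is exactly what the first phase of the artificial algorithm was designed to enable. Using the first-phase recursion \eqref{eq:tuk_update}, the iterate stabilization $\|\tbu^{T-1}-\tbu^{T-2}\|_2^2/n\to 0$ from Lemma \ref{lem:diffs}, and the fixed-point values $\beta_{T-1}\to 1/\sqrt{\delta}$ and $\tsb_{T-1}\sqrt{\delta}\to 1$ implied by \eqref{eq:musig_a}, a component-wise rearrangement yields the asymptotic identity $\tu_i^{T-1}\cdot\lambda_\delta^*/(\lambda_\delta^*-Z_{s,i})\approx(\bA\tbx^{T-1})_i$ in normalized $L^2$. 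Substituting this into the Onsager term of $\tbu^T$ and using $\tbx^{T-1}\approx\tbx^T\approx\hbx^0$ (from Lemmas \ref{lem:diffs} and \ref{lem:xsxT}) together with $\tsb_T\to\bsb_0$ converts it into exactly the spectral Onsager term of $\hbu^0$, up to an $o(1)$ error in $\|\cdot\|_2/\sqrt{n}$. This closes the base case on the $\bu$-side and launches the induction.
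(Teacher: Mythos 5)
Your proposal is correct and follows essentially the same route as the paper's proof: the same joint induction on $t$, the same reliance on Lemmas \ref{lem:SE_FP_phase1}, \ref{lem:xsxT} and \ref{lem:diffs} for the base case, the same $\PL(2)$/Cauchy--Schwarz reduction of the empirical sums to normalized $L^2$ closeness, and the same resolvent-type rearrangement of the first-phase recursion to reconcile the spectral Onsager term of $\hbu^0$ with the standard Onsager term of $\tbu^T$. The only details you elide are that passing to the limit in expectations of derivatives (e.g.\ $\E\{h_t'(\tU_{T+t};Y)\}$ and the Onsager coefficients $\tsb_{T+t}$, $\tsc_{T+t}$) requires the auxiliary Lemma \ref{lem:lipderiv} rather than Proposition \ref{prop:tilSE} applied to $f_t'$ directly, and that one must check $\sigma_{U,t}>0$ (which the paper does via a Jensen argument) to control the difference of standard deviations from the difference of variances.
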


\begin{proof}

We will use $\kappa_t, \kappa'_t, c_t, \gamma_t$ to denote generic positive constants which depend on $t$, but not on $n, d$, or $\eps$. The values of these constants may change throughout the proof. 


\vspace{1em}

\noindent  \textbf{Proof of \eqref{eq:SE_gapU} and  \eqref{eq:SE_gapX}.}
We prove the result by induction, starting from the base case $\abs{\mu_{\tX,T} - \mu_{X,0} }$,  $| \sigma^2_{\tX,T} - \sigma^2_{X,0} |$. From Lemma \ref{lem:SE_FP_phase1}, we  have
\beq
\lim_{T \to \infty} \mu_{\tX, T} = \mu_{\tX} = \frac{a}{\sqrt{\delta}}, \qquad 
\lim_{T \to \infty} \sigma^2_{\tX, T} = \sigma^2_{\tX} = \frac{1-a^2}{\delta}.
\label{eq:TFP_SE}
\eeq
Recalling from \eqref{eq:SE_init} that $\mu_{X,0} =\frac{a}{\sqrt{\delta}}$, $\sigma_{X,0}^2 = \frac{1-a^2}{\delta}$, \eqref{eq:TFP_SE} implies that 
\beq
\lim_{T \to \infty} \, \abs{\mu_{\tX,T} - \mu_{X,0} }=0, \qquad 
\lim_{T \to \infty} \, \abs{\sigma^2_{\tX,T} - \sigma^2_{X,0} }=0.
\label{eq:base_step}
\eeq

Assume towards induction that \eqref{eq:SE_gapX} holds with $(t+1)$ replaced by $t$, and that $\sigma_{X,k}^2 >0$ for $0 \leq k \leq t$. We will show that \eqref{eq:SE_gapU} holds, and then that \eqref{eq:SE_gapX} holds.

For brevity, we write $\Delta_{\mu,t}, \Delta_{\sigma,t}$ for $(\mu_{X,t} - \mu_{\tX, t+T})$ and $(\sigma_{X,t} - \sigma_{\tX, t+T})$, respectively. By the induction hypothesis, given any $\eps >0$, for $T$ sufficiently large we have
\beq
\abs{\Delta_{\mu,t}} < \kappa_t \eps, \qquad \abs{\Delta_{\sigma,t}}< \frac{\kappa_t}{\sigma_{X,t} + \sigma_{\tX, t+T}} \eps = \kappa'_t \eps.
\label{eq:Delmusig_bnds}
\eeq
Since $\sigma_{X,t}$ is  strictly positive, $\kappa_t'$ is finite and bounded above.

From \eqref{eq:SE_def} we have
\begin{align}
\mu_{U,t} & =  \frac{1}{\sqrt{\delta}}\E\{ X f_t( \mu_{X,t}X + \sigma_{X,t} W_{X,t} ) \} = \frac{1}{\sqrt{\delta}}\E\{ X f_t( \mu_{\tX,T+t}X + \sigma_{\tX,T+t} W_{X,t} 
\, + \,  \Delta_{\mu, t}X + \Delta_{\sigma,t} W_{X,t} ) .
\label{eq:mu_Del1}
\end{align}
Recalling that $f_t$ is Lipschitz and letting $L_t$ denote its Lipschitz constant, we have
\beq
\begin{split}
& \abs{f_t( \mu_{\tX,T+t}X + \sigma_{\tX,T+t} W_{X,t} 
\, + \,  \Delta_{\mu, t}X + \Delta_{\sigma,t} W_{X,t} ) 
\, - \, f_t( \mu_{\tX,T+t}X + \sigma_{\tX,T+t} W_{X,t})} \\
&\hspace{27em}\leq L_t \abs{\Delta_{\mu, t}X + \Delta_{\sigma,t} W_{X,t}}.
\end{split}
\label{eq:ft_Lip}
\eeq
Using \eqref{eq:ft_Lip} in \eqref{eq:mu_Del1}, we obtain
\beq
\begin{split}
\sqrt{\delta}\mu_{U,t}\geq & \, \E\{ X f_t( \mu_{\tX,T+t}X + \sigma_{\tX,T+t} W_{X,t}) \} \, - \, 
L_t \E\{\abs{X} \abs{\Delta_{\mu, t}X + \Delta_{\sigma,t} W_{X,t}} \}, \\
\sqrt{\delta}\mu_{U,t}  
 \leq & \,\E\{ X f_t( \mu_{\tX,T+t}X + \sigma_{\tX,T+t} W_{X,t}) \} \, + \, 
L_t \E\{\abs{X} \abs{\Delta_{\mu, t}X + \Delta_{\sigma,t} W_{X,t}} \}.
\end{split}
\label{eq:muUt_bnd1}
\eeq
Since $W_{X,t} \stackrel{\de}{=} W_{\tX,t+T}$ and independent of $X$, we have that $\E\{ X f_t( \mu_{\tX,T+t}X + \sigma_{\tX,T+t} W_{X,t}) \} = \sqrt{\delta} \mu_{\tU,t+T}$. Therefore, \eqref{eq:muUt_bnd1} implies
 \beq
 \sqrt{\delta}\abs{\mu_{U,t} - \mu_{\tU,t+T}} \, \leq \,  L_t(\Delta_{\mu,t} \, +  \, \Delta_{\sigma,t} \E\{ \abs{W_{X,t}} \} ),
\eeq
where we have used $\E\{ \abs{X}^2 \} <  \sqrt{\E\{ X^2 \}} =1$. Noting that  $\E\{ \abs{W_{X,t}} \} = \sqrt{2/\pi}$, from \eqref{eq:Delmusig_bnds} it follows that for sufficiently large $T$:
\beq
\abs{\mu_{U,t} - \mu_{\tU,t+T}} \leq  \frac{L_t}{\sqrt{\delta}} ( \kappa_{t}   +  \kappa'_{t} \sqrt{2/\pi} ) \,  \eps \,  < \, \gamma_t \, \eps.\label{eq:bdmunew}
\eeq

Next consider $\sigma_{U,t}^2$. From \eqref{eq:SE_def}, we have
\beq
\sigma_{U,t}^2 = \frac{1}{\delta} \E\{ f_t(\mu_{X,t} X + \sigma_{X,t} W_{X,t})^2 \} - \mu_{U,t}^2.
\label{eq:sigUtexp}
\eeq
Furthermore, as $W_{X,t} \stackrel{\de}{=} W_{\tX,t+T}$ and independent of $X$, we also have that
\begin{equation}
    \sigma_{\tU,t+T}^2 = \frac{1}{\delta} \E\{ f_t(\mu_{\tX,t+T} X + \sigma_{\tX,t+T} W_{X,t})^2 \} - \mu_{\tU,t+T}^2.\label{eq:sigUtexp2}
\end{equation}
Using the reverse triangle inequality, we have
\beq
\begin{split}
& \abs{ f_t( \mu_{\tX,T+t}X + \sigma_{\tX,T+t} W_{X,t} 
\, + \,  \Delta_{\mu, t}X + \Delta_{\sigma,t} W_{X,t})} \\
&\hspace{3em}\geq \lvert f_t( \mu_{\tX,T+t}X + \sigma_{\tX,T+t} W_{X,t}) \rvert\\  &\hspace{6em}-\abs{f_t( \mu_{\tX,T+t}X + \sigma_{\tX,T+t} W_{X,t} 
\, + \,  \Delta_{\mu, t}X + \Delta_{\sigma,t} W_{X,t} ) 
\, - \, f_t( \mu_{\tX,T+t}X + \sigma_{\tX,T+t} W_{X,t})} \\
& \hspace{3em} \geq \lvert f_t( \mu_{\tX,T+t}X + \sigma_{\tX,T+t} W_{X,t}) \rvert \, - \, 
L_t \abs{\Delta_{\mu, t}X + \Delta_{\sigma,t} W_{X,t}},
\end{split}
\label{eq:absft_LB}
\eeq
where the last inequality follows from \eqref{eq:ft_Lip}. Similarly, 
\beq
\begin{split}
& \abs{ f_t( \mu_{\tX,T+t}X + \sigma_{\tX,T+t} W_{X,t} 
\, + \,  \Delta_{\mu, t}X + \Delta_{\sigma,t} W_{X,t})} \\
& \hspace{3em} \leq \lvert f_t( \mu_{\tX,T+t}X + \sigma_{\tX,T+t} W_{X,t}) \rvert \, + \, 
L_t \abs{\Delta_{\mu, t}X + \Delta_{\sigma,t} W_{X,t}}.
\end{split}
\label{eq:absft_UB}
\eeq
Using \eqref{eq:absft_LB}, we obtain the bound 
\beq
\begin{split}
& \E\{ f_t(\mu_{X,t} X + \sigma_{X,t} W_{X,t})^2 \}    \geq \, \E\{ f_t( \mu_{\tX,T+t}X + \sigma_{\tX,T+t} W_{X,t}) ^2 \} 
- L_t^2 \E\{ \abs{\Delta_{\mu, t}X + \Delta_{\sigma,t} W_{X,t}}^2\} \\
& \qquad \qquad 
- 2 L_t \sqrt{ \E\{f_t( \mu_{X,t}X + \sigma_{X,t} W_{X,t}) ^2 \} 
\cdot \E\{ \abs{\Delta_{\mu, t}X + \Delta_{\sigma,t} W_{X,t}}^2\} }.
\end{split}
\label{eq:absft_LB2}
\eeq
Similarly, using \eqref{eq:absft_UB} we get 
\beq
\begin{split}
& \E\{ f_t(\mu_{X,t} X + \sigma_{X,t} W_{X,t})^2 \}    \leq \, \E\{ f_t( \mu_{\tX,T+t}X + \sigma_{\tX,T+t} W_{X,t}) ^2 \} 
+ L_t^2 \E\{ \abs{\Delta_{\mu, t}X + \Delta_{\sigma,t} W_{X,t}}^2\} \\
& \qquad \qquad 
+ 2 L_t \sqrt{ \E\{f_t( \mu_{\tX,T+t}X + \sigma_{\tX,T+t} W_{X,t}) ^2 \} 
\cdot \E\{ \abs{\Delta_{\mu, t}X + \Delta_{\sigma,t} W_{X,t}}^2\} }.
\end{split}
\label{eq:absft_UB2}
\eeq
Furthermore,
\[ \E\{ \abs{\Delta_{\mu, t}X + \Delta_{\sigma,t} W_{X,t}}^2\}  \leq 
2\abs{\Delta_{\mu, t}}^2 \E\{ X^2 \} + 2\abs{\Delta_{\sigma, t}}^2 \E\{W_{X,t}\}^2 
=2 (\abs{\Delta_{\mu, t}}^2 + \abs{\Delta_{\sigma, t}}^2). \]
From \eqref{eq:SE_def} and \eqref{eq:SE_def_til}, we note that 
\beq
\begin{split}
& \E\{ f_t( \mu_{X,t}X + \sigma_{X,t} W_{X,t}) \} ^2 = \delta (\mu_{U,t}^2 + \sigma_{U,t}^2),  \\
& \E\{ f_t( \mu_{\tX, T+t}X + \sigma_{\tX,T+t} W_{\tX,T+t})\}^2 = \delta (\mu_{\tU,T+t}^2 + \sigma_{\tU, T+t}^2).
\end{split}
\label{eq:Eft2}
\eeq
Therefore, Eqs. \eqref{eq:absft_LB2} and \eqref{eq:absft_UB2} imply that
\beq
\begin{split}
& \abs{\E\{ f_t(\mu_{X,t} X + \sigma_{X,t} W_{X,t})^2 \}  - 
\E\{ f_t( \mu_{\tX,T+t}X + \sigma_{\tX,T+t} W_{X,t}) ^2 \}} \\ 
&  \leq 2L_t^2 (\abs{\Delta_{\mu, t}}^2 + \abs{\Delta_{\sigma, t}}^2) 
+ 2L_t \sqrt{ 2\delta (\mu_{U, t}^2 + \sigma_{U, t}^2 +\mu_{\tU, T+t}^2 + \sigma_{\tU, T+t}^2) (\abs{\Delta_{\mu, t}}^2 + \abs{\Delta_{\sigma, t}}^2)}.
\end{split}
\label{eq:ft2_bnd}
\eeq
Using this in \eqref{eq:sigUtexp}-\eqref{eq:sigUtexp2}, we have 
\begin{equation}
    \begin{split}
        & \abs{\sigma^2_{U,t} - \sigma^2_{\tU,t+T}} \leq \abs{\mu_{\tU, T+t} - \mu_{U, t}}\cdot \abs{\mu_{\tU, T+t} + \mu_{U, t}} \\
        & \qquad \quad  + \bigg(  \frac{2}{\delta} L_t^2 (\abs{\Delta_{\mu, t}}^2 + \abs{\Delta_{\sigma, t}}^2) 
        + \frac{2}{\sqrt{\delta}} L_t \sqrt{2 (\mu_{U, t}^2 + \sigma_{U, t}^2 +\mu_{\tU, T+t}^2 + \sigma_{\tU, T+t}^2) (\abs{\Delta_{\mu, t}}^2 + \abs{\Delta_{\sigma, t}}^2)} \, \bigg). \\
    \end{split}
    \label{eq:sig2_diff}
\end{equation}
From \eqref{eq:Delmusig_bnds}, we obtain 
\beq
\abs{\Delta_{\mu, t}}^2 + \abs{\Delta_{\sigma, t}}^2 < \big(\kappa_t^2 + (\kappa'_t)^2\big) \eps^2.
\label{eq:sumDelsig}
\eeq
Furthermore, as $f_t$ is Lipschitz, from \eqref{eq:Eft2} and the induction hypothesis we have
\begin{equation}
    |\mu_{\tU, T+t}|+\abs{\mu_{U, t}}+\sigma_{U, t}+\sigma_{\tU, T+t}\le c_t, \label{eq:bdfinite}
\end{equation}
for some constant $c_t$. Using \eqref{eq:bdmunew}, \eqref{eq:sumDelsig} and \eqref{eq:bdfinite} in \eqref{eq:sig2_diff}, we conclude that for sufficiently large $T$:
\beq
\abs{\sigma^2_{U,t} - \sigma^2_{\tU,T+t}} < \gamma_t \eps.
\eeq

Next, we show that if \eqref{eq:SE_gapU} holds for some $t \geq 0$ and $\sigma_{X,k}^2 >0$ for $k \leq t$, then :
\begin{align}
 & \lim_{T \to \infty}\abs{\mu_{\tX,T+t+1} - \mu_{X,t+1} } =0, \qquad \lim_{T \to \infty}\abs{\sigma^2_{\tX,T+t+1} - \sigma^2_{X,t+1} } =0. \label{eq:SE_gapXt1}
\end{align}
 We denote the Lipschitz constant of $h_t$ by $\olL_t$, and write $\olDel_{\mu, t}$, $\olDel_{\sigma, t}$ for $(\mu_{U,t} - \mu_{\tU, t+T})$ and $(\sigma_{U,t} - \sigma_{\tU, t+T})$, respectively.  Using this notation, we have 
\begin{equation}
    \begin{split}
    & \abs{h_t(\mu_{\tU, T+t} G + \sigma_{\tU ,T+t} W_{U,t} + 
    \olDel_{\mu,t} G + \olDel_{\sigma,t} W_{U,t}; \, Y) - h_t(\mu_{\tU, T+t} G + \sigma_{\tU ,T+t} W_{U,t}; \, Y) }  \\
    & \leq \olL_t \abs{ \olDel_{\mu,t} G + \olDel_{\sigma,t} W_{U,t} }.
    \label{eq:ht_lip}
    \end{split}
\end{equation}
The induction hypothesis \eqref{eq:SE_gapU} implies that for sufficiently large $T$:
\beq
\abs{\olDel_{\mu,t}} < \gamma_t \eps, \qquad \abs{\olDel_{\sigma,t}}< \frac{\gamma_t}{\sigma_{U,t} + \sigma_{\tU, t+T}} \eps = \gamma_t \eps.
\label{eq:olDelmusig_bnds}
\eeq
We note that $\sigma_{U,t} >0$ since $\sigma_{X,t} >0$. Indeed, from the discussion leading to \eqref{eq:muU_Bopt}, for a fixed $\mu_{X,t}, \sigma_{X,t}$ the smallest possible ratio $\sigma_{U,t}^2/\mu_{U,t}^2$ is achieved by the Bayes-optimal choice $f_t = c f_t^*$, where $f_t^*(X_t) = E\{X | X_t\}$.  Furthermore, from \eqref{eq:muU_Bopt}, in order for $\sigma_{U,t}=0$, we need $\E\{ \E\{X |X_t\}^2 \}=1$. From Jensen's inequality, we also have 
$
\E\{ \E\{X |X_t\}^2 \} \leq \E\{\E\{X^2 |X_t\}  \}=1.
$
Therefore, $\E\{ \E\{X |X_t\}^2 \}=1$ only if $X$ is a deterministic function of $X_t= \mu_{X,t} X + \sigma_{X,t}W$. But this is impossible when $\sigma_{X,t} >0$. Therefore $\sigma_{U,t} >0$, and $\gamma_t$ in \eqref{eq:olDelmusig_bnds} is strictly positive.

From \eqref{eq:ht_lip}, we obtain
\beq
\begin{split}
& \E\{ G  h_t(\mu_{\tU, T+t} G + \sigma_{\tU ,T+t} W_{U,t}; \, Y)  \}
    - \olL_t \E\{  \abs{\olDel_{\mu,t}} G^2 + \abs{\olDel_{\sigma,t}}\cdot \abs{G}\cdot\abs{ W_{U,t}} \} \\
    &  \quad  \leq \E\{ G  h_t(\mu_{U, t} G + \sigma_{U ,t} W_{U,t}; \, Y) \} \\
   &  \quad \leq   \E\{ G  h_t(\mu_{\tU, T+t} G + \sigma_{\tU ,T+t} W_{U,t}; \, Y) \}
    + \olL_t \E\{  \abs{\olDel_{\mu,t}} G^2 + \abs{\olDel_{\sigma,t}}\cdot \abs{G}\cdot\abs{ W_{U,t}} \}.
\end{split}
\label{eq:Ght_bnds2}
\eeq
Now, using  \eqref{eq:SE_def} and \eqref{eq:SE_def_til}, we have:
\begin{equation}
    \begin{split}
    \frac{1}{\sqrt{\delta}} \abs{\mu_{\tX,T+t+1} - \mu_{X,t+1}} & = 
         \Big \vert \E\{G(h_t(\tU_{T+t}; \, Y) - h_t(U_t; \, Y) ) \} \\
         & \hspace{-3em}  - \mu_{U,t} \big( \E\{ h'_t(\tU_{T+t}; Y) \} - \E\{ h'_t(U_t; Y) \} \big) \, - \, \E\{ h_t'(\tU_{T+t}; Y) \}(\mu_{\tU,T+t} - \mu_{U,t}) \Big \vert \\
        & \hspace{-5em}\leq 
          \olL_t \big( \abs{\olDel_{\mu,t}} + \abs{\olDel_{\sigma,t}} (2/\pi) \big)  \, + \, \abs{\mu_{U,t}}\cdot  \lvert \E\{ h_t'(\tU_{T+t}; Y) \} - \E\{ h_t'(U_t; Y) \} \rvert \, + \, \olL_t \abs{\olDel_{\mu,t}}.
    \end{split}
    \label{eq:muXt1_diff2}
\end{equation}
For the inequality above, we used \eqref{eq:Ght_bnds2} (noting that $\E\{ \abs{W_{U,t}} \} =\E\{ \abs{G} \} = \sqrt{2/\pi}$ and $\E\{ G^2 \} =1$), and the fact that $\lvert h_t' \rvert$ is bounded by $\olL_t$,  the Lipschitz constant of $h_t$. Now, 
\beq
\begin{split}
& \abs{ \E\{ h_t'(U_t; Y)\ - \E\{ h_t'(\tU_{T+t}; Y) \} } \\
&\hspace{4em}= \abs{\E\{ h_t'( \mu_{U,t} G + \, \sigma_{U,t} W_{U,t} ; Y) \} - \E\{ h_t'(\mu_{\tU,T+t} G + \, \sigma_{\tU,T+t} W_{U,t}\, ; Y) \}}.
\end{split}
\eeq
By the induction hypothesis \eqref{eq:SE_gapU}, we have 
\beq
\lim_{T \to \infty} \mu_{\tU, T+t} = \mu_{U, t}, \qquad \lim_{T \to \infty} \sigma_{\tU, T+t} = \sigma_{U, t}.
\label{eq:musig_limits}
\eeq
Thus, as $T \to \infty$, the random variable $(\mu_{\tU,T+t} G + \, \sigma_{\tU,T+t} W_{U,t})$ converges in distribution to $\mu_{U,t} G + \, \sigma_{U,t} W_{U,t}$. Then, Lemma \ref{lem:lipderiv} in Appendix \ref{app:aux_Lip_lemma} implies that
\beq
\lim_{T \to \infty} \,  \abs{ \E\{ h_t'(U_t; Y)\ - \E\{ h_t'(\tU_{T+t}; Y) \} } = 0.
\label{eq:htpr_bnd}
\eeq
Using \eqref{eq:htpr_bnd}, \eqref{eq:olDelmusig_bnds} and \eqref{eq:bdfinite} in \eqref{eq:muXt1_diff2} proves that the first limit in \eqref{eq:SE_gapXt1} holds. 

Finally, we prove the second limit in \eqref{eq:SE_gapXt1}. From \eqref{eq:SE_def}, \eqref{eq:SE_def_til} and arguments along the same lines as \eqref{eq:absft_LB2}-\eqref{eq:ft2_bnd}, we obtain the bound
\beq
\begin{split}\label{eq:fub}
   &  \abs{ \sigma^2_{X,t+1} - \sigma^2_{\tX,T+ t+1}}  = \abs{ \E\{ h_t(U_t; Y)^2 \} \, - \, 
   \E\{ h_t(\tU_{t+T}; Y)^2 \}} \\
   & \leq 2\olL_t^2 (\abs{\olDel_{\mu, t}}^2 + \abs{\olDel_{\sigma, t}}^2) 
+ 2\olL_t \sqrt{ (\sigma_{X,t+1}^2+\sigma^2_{\tX,T+ t+1})(\abs{\olDel_{\mu, t}}^2 + \abs{\olDel_{\sigma, t}}^2)}.
\end{split}
\eeq
Furthermore, as $h_t$ is Lipschitz, the formulas for $\sigma_{X,t+1}^2$ and $\sigma_{\tX,T+t+1}$ (in \eqref{eq:SE_def} and \eqref{eq:SE_def_til}) along with the induction hypothesis \eqref{eq:musig_limits} imply that
\begin{equation}
    \sigma_{X,t+1}^2+\sigma^2_{\tX,T+ t+1}\le c_t,\label{eq:bdfinite2}
\end{equation}
for some constant $c_t$. By using \eqref{eq:bdfinite2} and \eqref{eq:olDelmusig_bnds}, we can upper bound the RHS of \eqref{eq:fub} with $  \kappa_{t+1} \eps$, for sufficiently large $T$. This completes the proof of the second limit in \eqref{eq:SE_gapXt1}. 

\vspace{1em}

\noindent \textbf{Proof of \eqref{eq:uhat_til} and \eqref{eq:xhat_til}.}

Since $\psi \in \PL(2)$, for $i \in [d]$ we have
\beq
 \abs{ \psi(x_i, \tx^{T+t+1}_i) - \psi(x_i, \hx^{t+1}_i)}
\leq C\left(1+ \abs{x_i} + | \tx^{T+t+1}_i | + |\hx^{t+1}_i| \right) \abs{\tx^{T+t+1}_i - \hx^{t+1}_i },
\eeq
for a universal constant $C >0$. Therefore, 
\beq
\begin{split}
& \abs{ \frac{1}{d} \sum_{i=1}^d \psi(x_i, \tx^{T+t+1}_i)  - \frac{1}{d} \sum_{i=1}^d \psi(x_i, \hx^{t+1}_i) }  \leq   \frac{C}{d} \sum_{i=1}^d \left(1+ \abs{x_i} + | \tx^{T+t+1}_i | + |\hx^{t+1}_i| \right) \abs{\tx^{T+t+1}_i - \hx^{t+1}_i } \\
&  \quad \leq  4C \Bigg[ 1+ \frac{1}{d} \sum_{i=1}^d \Big( \abs{x_i}^{2} + | \tx^{T+t+1}_i |^{2} + |\hx^{t+1}_i|^{2}\Big) \Bigg]^{1/2} \,  \frac{\| \tbx^{T+t+1}  - \hbx^{t+1} \|_2}{\sqrt{d}},
\end{split}
\label{eq:psi_xtil_xhat}
\eeq
where  the second inequality follows from Cauchy-Schwarz. By the same argument,
\beq
\begin{split}
  &   \abs{  \frac{1}{n} \sum_{i=1}^n \psi(y_i, \tu^{T+t}_i)  - \frac{1}{n} \sum_{i=1}^n \psi(y_i, \hu^t_i) } 
     \leq 4C \Bigg[ 1+ \frac{1}{n} \sum_{i=1}^n \Big( \abs{y_i}^{2} + | \tu^{T+t}_i |^{2} + |\hu^t_i|^{2}\Big) \Bigg]^{\frac{1}{2}} \,  \frac{\| \tbu^{T+t}  - \hbu^t \|_2}{\sqrt{n}}.
\end{split}
\label{eq:psi_util_uhat}
\eeq
We will show via induction that as $d \to \infty$: \emph{i)} the terms inside the square brackets in \eqref{eq:psi_xtil_xhat} and \eqref{eq:psi_util_uhat} converge almost surely to finite deterministic values, and  \emph{ii)} as $T\to\infty$ (with the limit in $T$ taken after the limit in $d$), the terms $\frac{\| \tbx^{T+t}  - \hbx^t \|_2}{\sqrt{d}}$ and $\frac{\| \tbu^{T+t+1}  - \hbu^{t+1} \|_2}{\sqrt{d}}$ converge to $0$ almost surely.

\underline{Base case $t=0$}: The result \eqref{eq:xhat_til} for $t+1=0$ directly follows from Lemma \ref{lem:xsxT}. Next,  using \eqref{eq:psi_util_uhat}, the LHS  of \eqref{eq:uhat_til} for $t=0$ can be bounded as
\beq
\begin{split}
  &   \abs{  \frac{1}{n} \sum_{i=1}^n \psi(y_i, \tu^{T}_i)  - \frac{1}{n} \sum_{i=1}^n \psi(y_i, \hu^0_i) } 
     \leq 4C \Bigg[ 1+ \frac{\| \by \|_2^2}{n}  + \frac{\| \tbu^T \|_2^2}{n}
    + \frac{\| \hbu^0 \|_2^2}{n} \Bigg]^{\frac{1}{2}} \,  \frac{\| \tbu^{T}  - \hbu^0 \|_2}{\sqrt{n}}.
\end{split}
\label{eq:psi_util_uhat_0}
\eeq
From the definition of  the artificial  GAMP \eqref{eq:txk_update}-\eqref{eq:tftg_geT}, we have 
\beq
    \tbu^T = \frac{1}{\sqrt{\delta}} \bA f_0(\tbx^T) -  \sqrt{\delta} \, \tsb_T \,  \bZ \tbu^{T-1},
\label{eq:uTexp}
\eeq
where we define 
\begin{equation}\label{eq:defZproof}
  \bZ=\bZ_s(\lambda_\delta^*\bI-\bZ_s)^{-1},  
\end{equation}
with $\bZ_s = \diag(\cT_s(y_1), \ldots, \cT_s(y_n) )$. Similarly, defining
\begin{equation}
    \be_1 := \tbu^{T-1} - \tbu^{T-2},
    \label{eq:e1_def}
\end{equation}
we obtain $\tbu^{T-1} = \frac{1}{\sqrt{\delta} \beta_{T-1}} \left[ \bA \tbx^{T-1} - \bZ \tbu^{T-1} + \bZ\be_1 \right]$, or 
\beq
\tbu^{T-1} =  \frac{1}{\sqrt{\delta} \beta_{T-1}} \Bigg( \bI + \frac{1}{\sqrt{\delta} \beta_{T-1}}\bZ \Bigg)^{-1}  \left[ \bA \tbx^{T-1} + \bZ\be_1  \right].
\label{eq:uT1exp}
\eeq
Substituting \eqref{eq:uT1exp} in \eqref{eq:uTexp}, we obtain
\begin{equation}
    \tbu^T = \frac{1}{\sqrt{\delta}} \bA f_0(\tbx^T) - \frac{\tsb_T}{\beta_{T-1}} \bZ
    \Bigg( \bI + \frac{1}{\sqrt{\delta} \beta_{T-1}}\bZ \Bigg)^{-1} \bA \tbx^{T-1}
    - \frac{\tsb_T}{\beta_{T-1}}  \bZ^2 \Bigg( \bI + \frac{1}{\sqrt{\delta} \beta_{T-1}}\bZ \Bigg)^{-1} \be_1.
    \label{eq:tuT_exp}
\end{equation}
Using  \eqref{eq:tuT_exp} and the expression for $\hbu^0$ from \eqref{eq:u0_spec_mod}, we have
\begin{align}
    \frac{1}{d} \| \tbu^T - \hbu^0 \|_2^2  & \leq 3\,  \frac{\| \bA f_0(\tbx^T) -\bA f_0(\hbx^0) \|_2^2}{\delta \, d }  \, + 3\,  \norm{ \frac{\tsb_T}{\beta_{T-1}}  \bZ^2 \Bigg( \bI + \frac{1}{\sqrt{\delta} \beta_{T-1}}\bZ \Bigg)^{-1} \frac{\be_1}{\sqrt{d}} }_2^2 \nonumber \\ 
    & \quad \, + \,  \frac{3}{d} \norm{  \frac{\bsb_0 \sqrt{\delta}}{\lambda_\delta^*} \bZ_s \bA \hbx^0 \, - \,  \frac{\tsb_T}{\beta_{T-1}} \bZ \Bigg( \bI + \frac{1}{\sqrt{\delta} \beta_{T-1}}\bZ \Bigg)^{-1} \bA \tbx^{T-1} }_2^2  \nonumber \\
    & := 3(S_1 + S_2 + S_{3}).
    \label{eq:S1S2S3_split}
\end{align}
We now bound each of the three terms. By Cauchy-Schwarz inequality,
\beq
\begin{split}
    S_1 & \leq \| \bA \|_{\op}^2 \, \frac{\| f_0(\tbx^T) - f_{0}(\hbx^0) \|_2^2}{\delta\, d} \leq
     \| \bA \|_{\op}^2 \,\frac{L_0^2}{\delta} \cdot  \frac{\| \tbx^T - \hbx^0 \|_2^2}{d},
\end{split}
\eeq
where $L_0$ is the Lipschitz constant of $f_0$.  Since the entries of $\bA$ are i.i.d. $\normal(0, 1/d)$, almost surely the operator norm of $\bA$ is bounded  by a universal constant for sufficiently large $d$ \cite{AGZ}. From Lemma \ref{lem:xsxT} and the definition of $\hbx^0$ in \eqref{eq:x0_spec_mod}, we also have
\begin{align}
    \lim_{T \to \infty} \lim_{d \to  \infty} \, \frac{\| \tbx^T - \hbx^0 \|_2^2}{d}
    = \frac{1}{\delta} \cdot \frac{\| \sqrt{\delta} \tbx^T  - \sqrt{d}\bxs \|_2^2}{d} = 0 \quad 
    \text{a.s.}
\end{align}
Therefore,
\beq
\lim_{T \to \infty} \lim_{d \to \infty} S_1 =0 \  \text{ a.s. }
\label{eq:S10}
\eeq
Next, recalling the definition of $\be_1$ from \eqref{eq:e1_def} we bound $S_2$ as follows:
\begin{equation}
    \begin{split}
        S_2 \leq \frac{\tsb_T^2}{\beta_{T-1}^2}  \| \bZ^2 \big( \bI + \bZ/({\sqrt{\delta} \beta_{T-1}}) \big)^{-1}  \|^2_{\op}
        \cdot \frac{\| \tbu^{T-1} - \tbu^{T-2} \|_2^2}{d}.
    \end{split}
    \label{eq:S2_bnd0}
\end{equation}
From Lemma \ref{lem:diffs},  we know that 
$\lim_{T \to \infty} \lim_{d \to \infty} \, \frac{\| \tbu^{T-1} - \tbu^{T-2} \|_2^2}{d}=0$ almost surely. We now show that the other terms on the RHS of \eqref{eq:S2_bnd0} are bounded almost surely. Recall from \eqref{eq:onsager_art} that $\tsb_T = \frac{1}{n} \sum_{i=1}^d f'_0(\tx_i^T)$.
Proposition \ref{prop:tilSE} guarantees that the empirical distribution of $\tbx^t$ converges to the law of $\tX_t \equiv \mu_{\tX,t} X + \sigma_{\tX,t} W$. Since $f_0$ is Lipschitz, Lemma \ref{lem:lipderiv} in Appendix \ref{app:aux_Lip_lemma} therefore implies that almost surely:
\beq
    \lim_{ d \to \infty} \tsb_T = \frac{1}{\delta} \E\{ f_0'(\mu_{\tX,T} X + \sigma_{\tX,T} W)\}.
\eeq
From Lemma \ref{lem:SE_FP_phase1}, we know that $ \lim_{T \to \infty} \mu_{\tX,T} = \frac{a}{\sqrt{\delta}}$ and 
$\lim_{T \to \infty} \, \sigma_{\tX,T}^2 =\frac{1-a^2}{\delta}$. Therefore, letting $T \to \infty$ and applying Lemma \ref{lem:lipderiv} again, we obtain
\beq
\lim_{T \to \infty} \lim_{ d \to \infty} \tsb_T  = \frac{1}{\delta} \, \E\left\{ f_0'\Big( \frac{a}{\sqrt{\delta}} X  \, +  \, \frac{\sqrt{1-a^2}}{\sqrt{\delta}}  W \Big)  \right\} \ \text{ a.s. }
\label{eq:tsbT_lim}
\eeq
From Lemma \ref{lem:SE_FP_phase1}, we have $\beta_{T-1} \to 1/\sqrt{\delta}$ as $T \to \infty$. Also recall from assumption \textbf{(A2)} on p. \pageref{assump:A2} that  $\tau$ is the supremum of the support of $Z_s$, and that $\lambda_\delta^* > \tau$. Therefore,
$Z = \frac{Z_s}{\lambda^*_\delta -Z_s}$ has bounded support, due to which 
$\| \bZ^2 \big( \bI + \bZ/({\sqrt{\delta} \beta_{T-1}}) \big)^{-1}  \|^2_{\op} < C$
for a universal constant $C >0$. 
 Hence,
\beq
\lim_{T \to \infty} \lim_{d \to \infty} S_2 =0 \  \text{ a.s.}
\label{eq:S20}
\eeq

To bound $S_3$, we first write the term inside the norm on the second line of \eqref{eq:S1S2S3_split} as 
\begin{align*}
    \frac{ \sqrt{\delta}}{\lambda_\delta^*} \bZ_s \bA \hbx^0(\bsb_0 - \tsb_T)   + 
    \frac{\tsb_T}{\lambda_\delta^*} \bZ_s \bA  \left( \sqrt{\delta}\hbx^0 - \frac{\tbx^{T-1}}{\beta_{T-1}} \right)  
     +   \frac{\tsb_T}{\beta_{T-1}} \left( \frac{\bZ_s}{\lambda_\delta^*} - \bZ \Big( \bI + \frac{1}{\sqrt{\delta} \beta_{T-1}}\bZ \Big)^{-1} \right)  \bA \tbx^{T-1}.
\end{align*}
Then, using  triangle inequality and Cauchy-Schwarz,  we have
\begin{equation}
    \begin{split}
        S_3 &  \leq  \frac{3\delta}{(\lambda_\delta^*)^2}\frac{\| \bZ_s \bA \hbx^0 \|_2^2}{d}
        (\bsb_0 - \tsb_T)^2 \, + \, \frac{3\tsb_T^2}{(\lambda_\delta^*)^2} \| \bZ_s  \bA \|_{\op}^2  \frac{\|\sqrt{\delta} \hbx^0 - {\tbx^{T-1}}/\beta_{T-1} \|_2^2}{d} \\
        & \quad + \, \frac{3\tsb_T^2}{\beta_{T-1}^2} \, \frac{\| \bA \tbx^{T-1} \|_2^2}{d}
        \,  \norm{\frac{1}{\lambda_\delta^*}\bZ_s \, - \, \bZ \Bigg( \bI + \frac{1}{\sqrt{\delta} \beta_{T-1}}\bZ \Bigg)^{-1} }_{\op}^2 \ := 3( S_{3a} + S_{3b} + S_{3c}).
    \end{split}
    \label{eq:S3split}
\end{equation}
Using the expression for $\hbx^0$ from \eqref{eq:x0_spec_mod} and applying Cauchy-Schwarz, we can bound $S_{3a}$ as:
\beq
S_{3a} \leq  \frac{1}{(\lambda_\delta^*)^2} \| \bZ_s \|^2_{\op} \| \bA \|_{\op}^2 \| \bxs \|_2^2(\bsb_0 - \tsb_T)^2.
\eeq
We note that $Z_s$ is bounded, $\| \bxs \|_2=1$, and $\| \bA \|_{\op}^2$ is bounded almost surely by a universal constant for sufficiently large $d$. Moreover, recalling the definitions of $\bsb_0$ and $X_0 = \mu_{X,0} X  + \sigma_{X,0}W_{X,0}$ from  \eqref{eq:bar_ons_terms} and \eqref{eq:SE_init}, we see that $\bsb_0 = \frac{1}{\delta} \E\{ f_0'( X_0 )\}$ is the limit of $\tsb_T$ in \eqref{eq:tsbT_lim}. Therefore $\lim_{T \to \infty} \lim_{d \to \infty} S_{3a}=0$ almost surely. 

Next, we bound $S_{3b}$. Recalling that $\hbx^0 = \sqrt{d} \bxs / \sqrt{\delta}$, we have
\beq
\begin{split}
& \frac{\|\sqrt{\delta} \hbx^0 - {\tbx^{T-1}}/\beta_{T-1} \|_2^2}{d}
 = \frac{\| \sqrt{d} \bxs - \sqrt{\delta}\tbx^T  + \sqrt{\delta}\tbx^T 
- \sqrt{\delta}\tbx^{T-1} + \sqrt{\delta}\tbx^{T-1} - \tbx^{T-1}/\beta_{T-1} \|_2^2}{d} \\ 
& \qquad \leq \frac{3\| \sqrt{d} \bxs - \sqrt{\delta}\tbx^T  \|_2^2}{d} 
+ \frac{3\| \sqrt{\delta}\tbx^T  - \sqrt{\delta}\tbx^{T-1}\|_2^2}{d}
+ \frac{3\| \tbx^{T-1} \|_2^2}{d}(\sqrt{\delta} \, - \, 1/\beta_{T-1})^2.
\end{split}
\label{eq:S1b_split}
\eeq
Lemmas  \ref{lem:xsxT} and \ref{lem:diffs} imply that the first two terms on the RHS of \eqref{eq:S1b_split} tend to zero in the iterated limit $T\to \infty, d \to \infty$. Furthermore, from Lemma \ref{lem:SE_FP_phase1}, we have $ \lim_{T \to \infty} \beta_{T-1} = 1/\sqrt{\delta}$. From Proposition \ref{prop:tilSE}, we also  have 
\beq
\lim_{d \to \infty} \frac{\| \tbx^{T-1} \|_2^2}{d} =  \mu_{\tX, T-1}^2 + \sigma^2_{\tX, T-1} = \beta_{T-1}^2 \quad \text{a.s.}
\eeq
Therefore, $\lim_{T \to \infty} \lim_{d \to \infty} S_{3b}=0$ almost surely. 

To bound $S_{3c}$, recalling from \eqref{eq:defZproof} that $Z = \frac{Z_s}{\lambda^*_\delta - Z_s}  $, we have
\begin{equation}
    \frac{1}{\lambda_\delta^*}\bZ_s \, - \, \bZ \Bigg( \bI + \frac{1}{\sqrt{\delta} \beta_{T-1}}\bZ \Bigg)^{-1} = \frac{1}{\beta_{T-1}}
    \bZ_s^2 \left(\lambda_\delta^* \bI + \bZ_s \Big( \frac{1}{\sqrt{\delta} \beta_{T-1}} -1 \Big) \right)^{-1} \frac{(\frac{1}{\sqrt{\delta}}  - \beta_{T-1})}{\lambda_\delta^*}.
\end{equation}
Since $\lim_{T \to \infty} \beta_{T-1} = \frac{1}{\sqrt{\delta}}$, almost surely
\beq
\lim_{T \to \infty} \, \norm{\frac{1}{\lambda_\delta^*}\bZ_s \, - \, \bZ \Bigg( \bI + \frac{1}{\sqrt{\delta} \beta_{T-1}}\bZ \Bigg)^{-1} }_{\op}^2 = 0.
\eeq
Thus  $\lim_{T \to \infty} \lim_{d \to \infty} S_{3c}=0$ almost surely. Using the results above in \eqref{eq:S3split}, we have shown that 
\beq
\lim_{T \to \infty} \lim_{d \to \infty} S_{3}=0 \quad \text{a.s.} 
\label{eq:S30}
\eeq

Using \eqref{eq:S10}, \eqref{eq:S20} and \eqref{eq:S30} in \eqref{eq:S1S2S3_split}, and recalling that $n/d \to \delta$, we obtain
\beq
\lim_{T \to \infty} \lim_{n \to \infty} \, \frac{\| \tbu^T - \hbu^0\|_2}{\sqrt{n}} =0.
\label{eq:uTu0_lim}
\eeq
To complete the proof for the base case, we show that the term inside the brackets in \eqref{eq:psi_util_uhat_0} is finite almost surely as $n \to \infty$.  First, by assumption \textbf{(B2)} on p. \pageref{assump:h_zv},  we have $\lim_{n \to \infty} \| \by \|_2^2/n = \E\{ Y^2\}$ almost surely. Furthermore, by Proposition  \ref{prop:tilSE}, we almost surely have
\beq
\lim_{n \to \infty} \| \tbu^T \|_2^2/n = \mu_{\tU, T}^2 + \sigma_{\tU, T}^2.
\eeq
Next, using the triangle inequality, we have
\beq
\| \tbu^T \|_2 \, -  \, 
 \| \tbu^T  - \hbu^0\|_2 \leq  \| \hbu^0 \|_2 \leq  \| \tbu^T \|_2 \, +  \, 
 \| \tbu^T  - \hbu^0\|_2.
\eeq
Combining this with \eqref{eq:uTu0_lim}, we obtain
\beq
\lim_{T \to \infty} \lim_{n \to \infty} \frac{\| \hbu^0 \|_2^2}{n} = \lim_{T \to \infty}\mu_{\tU, T}^2 + \sigma_{\tU, T}^2 = \mu_{U, 0}^2 + \sigma_{U, 0}^2 \quad \text{a.s.}
\eeq
Therefore, using \eqref{eq:psi_util_uhat_0}, we have shown that
\beq
\lim_{T \to \infty} \lim_{n \to \infty} \,  \abs{  \frac{1}{n} \sum_{i=1}^n \psi(y_i, \tu^{T}_i)  - \frac{1}{n} \sum_{i=1}^n \psi(y_i, \hu^0_i) } = 0 \quad \text{ a.s.}
\eeq

\underline{Induction step}: Assume  that \eqref{eq:uhat_til} holds for some $t$, and that \eqref{eq:xhat_til} holds with $t+1$ replaced by $t$. Also assume towards induction that almost surely
\beq
\lim_{T \to \infty} \lim_{d \to \infty} \frac{\| \tbx^{T+t} - \hbx^t \|_2^2}{d}=0, \qquad \lim_{T \to \infty} \lim_{n \to \infty} \frac{\| \tbu^{T+t} - \hbu^{t} \|_2^2}{n}=0.
\label{eq:sq_diffs}
\eeq
The limits in \eqref{eq:sq_diffs} hold for $t=0$, as established in the proof of the base case  (see \eqref{eq:S1b_split}, \eqref{eq:uTu0_lim}). 

From \eqref{eq:psi_xtil_xhat}, we have the bound
\begin{align}
& \abs{ \frac{1}{d} \sum_{i=1}^d \psi(x_i, \tx^{T+t+1}_i)  - \frac{1}{d} \sum_{i=1}^d \psi(x_i, \hx^{t+1}_i) }    \nonumber  \\
&  \leq  4C \Bigg[ 1+ \frac{\| \bx \|_2^2}{d} + \frac{\| \tbx^{T+t+1} \|_2^2}{d} + \frac{\| \hbx^{t+1} \|_2^2}{d}
\Bigg]^{\frac{1}{2}} \,  \frac{\| \tbx^{T+t+1}  - \hbx^{t+1} \|_2}{\sqrt{d}}. 
\label{eq:psi_xtil_xhat_t1}
\end{align}

Using \eqref{eq:txk_update}, \eqref{eq:tftg_geT}, \eqref{eq:xt_update_mod} and the triangle inequality, we obtain:
\beq
\begin{split}
& \frac{\| \tbx^{T+t+1}  - \hbx^{t+1} \|_2^2}{d}
 \leq  \frac{2}{\delta d} 
\| \bA^{\sT} h_t(\tbu^{T+t}; \by) - \bA^{\sT} h_t(\hbu^{t}; \by)  \|_2^2 \,  + 2\, \frac{\| \tsc_{T+t}f_t(\tbx^{T+t}) - \bsc_t f_t(\hbx^t) \|_2^2}{d} \\
& \leq  \frac{2}{\delta d} 
\| \bA^{\sT} h_t(\tbu^{T+t}; \by) - \bA^{\sT} h_t(\hbu^{t}; \by)  \|_2^2 \,  + 4\, \frac{\| f_t(\tbx^{T+t}) \|_2^2}{d} (\tsc_{T+t} - \bsc_t)^2 \, + 4\, \bsc_t^2 \frac{\| f_t(\tbx^{T+t}) - f_t(\hbx^t) \|_2^2}{d} \,  \\
& :=2 \,S_1 +  4\,S_2 + 4\,S_3.
\end{split}
\label{eq:xTt1_split}
\eeq
The term $S_1$ can be bounded as 
\beq
S_1 \leq \| \bA \|_{\op}^2 \, \frac{\| h_t(\tbu^{T+t}; \by) - h_t(\hbu^{t}; \by)\|_2^2}{\delta d} \leq \| \bA \|_{\op}^2 \, \bar{L}_t^2 \frac{\|\tbu^{T+t} -  \hbu^t \|_2^2}{\delta d},
\eeq
where $\bar{L}_t$ is the Lipschitz constant of the function $h_t$.  Since the operator norm of $\bA$ is bounded almost surely as $d \to \infty$, by the induction hypothesis \eqref{eq:sq_diffs} we have $\lim_{T \to  \infty} \lim_{d \to \infty} \frac{\|\tbu^{T+t} -  \hbu^t \|^2}{\delta d} =0$ almost surely. Therefore, 
\beq
\lim_{T \to \infty} \lim_{d \to \infty} S_1 =0 \quad \text{a.s. }
\label{eq:S1t_step}
\eeq
 
 To bound $S_2$, we recall from \eqref{eq:onsager_art} that 
 $\tsc_{T+t} = \frac{1}{n} \sum_i h'_t(\tu_i^t; y_i)$. Proposition \ref{prop:tilSE} guarantees that the joint empirical distribution of $(\tbu^{T+t}, \by)$ converges to the law of $(\tU_{T+t}, Y) \equiv (\mu_{\tU,T+t} G + \sigma_{\tU,T+t} W_{U,T+t}, \, Y)$. Since $h_t$ is Lipschitz, Lemma \ref{lem:lipderiv} in Appendix \ref{app:aux_Lip_lemma} implies that
 \beq
 \lim_{n \to \infty} \tsc_{T+t} = \E\{ h_t'(\mu_{\tU,T+t} G + \sigma_{\tU,T+t} W_{U,T+t}, \, Y) \} \ \ \text{ a.s.}
 \eeq
 From \eqref{eq:SE_gapU}, we know that $\lim_{T \to \infty} \mu_{\tU,T+t} = \mu_{U,t}$ and $\lim_{T \to \infty} \sigma^2_{\tU,T+t} = \sigma^2_{U,t}$. Therefore  applying Lemma \ref{lem:lipderiv} in Appendix \ref{app:aux_Lip_lemma} again, we obtain:
 \beq
  \lim_{T \to \infty} \lim_{n \to \infty} \tsc_{T+t} = \E\{ h_t'(\mu_{U,t} G + \sigma_{U,t} W_{U,t}, \, Y) \} = \bsc_t \quad \text{ a.s.}
 \label{eq:limtsc_Tt}
 \eeq
 Next, using the result in  Proposition \ref{prop:tilSE} with the test function $\psi(x, \tx) = (f_t(\tx))^2$,  we almost surely have 
 \beq 
 \lim_{T \to \infty} \lim_{d \to \infty}\frac{\| f_t(\tbx^{T+t}) \|_2^2}{d} = \lim_{T\to \infty}\E\{ f_t(\tX_{T+t})^2\} = \E\{ f_t(X_{t})^2\}, 
\eeq 
where the last equality follows from  \eqref{eq:SE_gapU} since $f_t$ is Lipschitz. Combining the above with \eqref{eq:limtsc_Tt}, we obtain 
 \beq
 \lim_{T \to \infty} \lim_{d \to \infty} S_2 = 0 \quad \text{ a.s. }
 \label{eq:S2t_step}
 \eeq
 For the third term $S_3$ in \eqref{eq:xTt1_split}, since $f_t$ is Lipschitz (with Lipschitz constant denoted by $L_t$), we  have the bound:
 \beq
 S_3 \leq  \bsc_t^2 L_t^2 \frac{\| \tbx^{T+t} - \bx^t \|_2^2}{d}. 
 \eeq
Thus, by the induction hypothesis \eqref{eq:sq_diffs}, we obtain
\beq
\lim_{T \to \infty} \lim_{d \to \infty} S_3 =0 \quad \text{ a.s. }
\label{eq:S3t_step}
\eeq
We have therefore shown that 
\beq
\lim_{T\to \infty} \lim_{d \to \infty} \, \frac{\| \tbx^{T+t+1}  - \hbx^{t+1} \|^2}{d} = 0 \quad \text{a.s.}
\label{eq:xTt1_hbt1}
\eeq
Next, we show that the terms inside the brackets on the RHS of \eqref{eq:psi_xtil_xhat_t1} are finite almost surely as $d \to \infty$.
Using the pseudo-Lipschitz test function $\psi(x, \tx) = x^2 + \tx^2$, Proposition \ref{prop:tilSE} implies  that almost surely
\beq
\lim_{d \to \infty} \frac{1}{d} \sum_{i=1}^d  \Big( \abs{x_i}^2 +  | \tx^{T+t+1}_i |^2 \Big) 
= \E \{ X^2 \} + \mu_{\tX, T+t+1}^2 + \sigma_{\tX, T+t+1}^2.
\eeq
Moreover, \eqref{eq:SE_gapX} implies that $\lim_{T \to \infty} \mu_{\tX, T+t+1}^2 + \sigma_{\tX, T+t+1}^2 = \mu_{X,t+1}^2+ \sigma_{X,t+1}^2$. Using the triangle inequality, we have 
\beq 
\| \tbx^{T+t+1}\|_2 - \|  \tbx^{T+t+1} - \hbx^{t+1} \|_2 \leq \| \hbx^{t+1} \|_2 \leq \| \tbx^{T+t+1}\|_2 +
\| \hbx^{t+1} -  \tbx^{T+t+1}  \|_2. 
\label{eq:hbx_tbx_treq}
\eeq
Hence, using \eqref{eq:xTt1_hbt1} and Proposition \ref{prop:tilSE}, we almost surely have  
\beq
\lim_{T \to \infty}  \lim_{d \to \infty} \frac{\| \hbx^{t+1} \|_2^2}{d} = \lim_{T \to \infty} \, 
\lim_{ d \to \infty} \frac{\| \tbx^{T+t+1}\|_2^2}{d} = \lim_{T \to \infty} \Big( \mu_{\tX, T+t+1}^2 + \sigma^2_{\tX,T+ t+1} \Big) = \mu^2_{X, t+1} + \sigma^2_{X,t+1}.
\eeq
We have thus shown via \eqref{eq:psi_xtil_xhat_t1} that almost surely
\beq
\lim_{T \to \infty} \lim_{d \to \infty}  \abs{ \frac{1}{d} \sum_{i=1}^d \psi(x_i, \tx^{T+t+1}_i)  - \frac{1}{d} \sum_{i=1}^d \psi(x_i, \hx^{t+1}_i) } = 0.
\label{eq:psixtil_xhat}
\eeq

To complete the proof via induction, we  need to show that if \eqref{eq:xTt1_hbt1} and \eqref{eq:psixtil_xhat} hold with $(t+1)$ replaced by $t$ for some $t >0$, then almost surely 
\beq
 \lim_{T \to \infty} \lim_{n \to \infty} \frac{\| \tbu^{T+t} - \hbu^{t} \|_2^2}{n}=0,  \qquad 
 \lim_{T \to \infty} \lim_{n \to \infty} \abs{  \frac{1}{n} \sum_{i=1}^n \psi(y_i, \tu^{T+t}_i)  - \frac{1}{n} \sum_{i=1}^n \psi(y_i, \hu^{t}_i)} =0. 
 \label{eq:eq:psi_uTt1_ts}
\eeq
From \eqref{eq:psi_util_uhat}, we have the bound
\begin{align}
& \abs{  \frac{1}{n} \sum_{i=1}^n \psi(y_i, \tu^{T+t}_i)  - \frac{1}{n} \sum_{i=1}^n \psi(y_i, \hu^{t}_i)} \nonumber  \\
& \leq 
4C \Bigg[ 1+ \frac{ \| \by \|_2^2}{n}  + \frac{\|\tbu^{T+t} \|_2^2}{n} + 
\frac{\|\hbu^{t} \|_2^2}{n} \Bigg]^{\frac{1}{2}} \,  \frac{\| \tbu^{T+t}  - \hbu^{t} \|_2}{\sqrt{n}} \, .
\label{eq:psi_uTt1}
\end{align}
Using \eqref{eq:tuk_update}, \eqref{eq:tftg_geT}, \eqref{eq:ut_update_mod} and the triangle inequality, we obtain
\beq
\begin{split}
&\frac{\| \tbu^{T+t}  - \hbu^{t} \|_2^2}{n}
\leq  \frac{2}{\delta n} 
\| \bA f_{t}(\tbx^{T+t}) - \bA f_{t}(\hbx^{t})  \|_2^2   + 2 \frac{\| \tsb_{T+t}h_{t-1}(\tbu^{T+t-1} ; \by) - \bsb_{t} h_{t-1}(\hbu^{t-1} ; \by) \|_2^2}{n} \\
& \leq  \frac{2}{\delta n} 
\| \bA f_{t}(\tbx^{T+t}) - \bA f_{t}(\hbx^{t})  \|_2^2 \,  + 4\, 
\frac{ \| h_{t-1}(\hbu^{t-1} ; \by) \|_2^2}{n} (\tsb_{T+t} - \bsb_{t})^2   \\
& \quad + 4\bsb_{t}^2 \,  \frac{\| h_{t-1}(\tbu^{T+t-1} ; \by) - h_{t-1}(\hbu^{t-1} ; \by) \|_2^2}{n} \,  := 2\,S_1 + 4\, S_2 + 4\,S_3.
\end{split}
\label{eq:uTt1_split}
\eeq
Using arguments along the same lines as \eqref{eq:S1t_step}-\eqref{eq:S3t_step} (omitted for brevity), we can show that almost surely  
$$
\lim_{T \to \infty} \lim_{n \to \infty} S_1 = \lim_{T \to \infty} \lim_{n \to \infty} S_2 = \lim_{T \to \infty} \lim_{n \to \infty} S_3 =0.
$$
Hence $\lim_{T \to \infty} \lim_{n \to \infty} \frac{\| \tbu^{T+t}  - \hbu^{t} \|_2}{\sqrt{n}} =0$ almost surely.
Furthermore, using a triangle inequality argument as in \eqref{eq:hbx_tbx_treq}, we obtain
$\lim_{T \to \infty} \lim_{n \to \infty} \frac{\|\tbu^{T+t} \|_2^2}{n} =
\lim_{T \to \infty} \lim_{n \to \infty} \frac{\|\hbu^{t} \|_2^2}{n}$ almost surely. By Proposition \ref{prop:tilSE} and \eqref{eq:SE_gapU}, the latter limit equals $\mu_{U,t}^2 + \sigma_{U,t}^2$. Using these limits in \eqref{eq:psi_uTt1} yields the result \eqref{eq:eq:psi_uTt1_ts}, and completes the proof of the lemma. \end{proof}

\subsection{Putting Everything Together: Proof of Theorem \ref{thm:GLM_spec}} \label{app:mainproof}
We will first use Lemma \ref{lem:ArtTrueconn} to show that the result of the theorem holds for the GAMP iteration $(\hbx^t, \hbu^t)$, i.e., under the assumptions of Theorem \ref{thm:GLM_spec}, we almost surely have
\begin{align}
 & \lim_{n \to \infty} \frac{1}{n} \sum_{i=1}^n \psi (y_{i},\hu^{t}_i) 
 = \E \left\{ \psi( Y, \,  \mu_{U,t} G   + \sigma_{U,t}  W_{U,t}) \right\}, \quad t \geq 0,
 \label{eq:SE_main_hatU} \\
 & \lim_{d \to \infty} \frac{1}{d}\sum_{i=1}^d \psi (x_{i},\hx^{t+1}_i)  =  \E \left\{ \psi( X, \, \mu_{X, t+1} X   + \sigma_{X, t+1} \, W_{X,t+1} \right\}, \quad  t+1 \geq 0.
\label{eq:SE_main_hat} 
\end{align}
Consider the LHS of \eqref{eq:SE_main_hat}. Using the triangle inequality, for any $T >0$, we have
\beq
\begin{split}
&\left \vert \frac{1}{d} \sum_{i=1}^d \psi (x_{i},\hx^{t+1}_i)  -  \E \left\{ \psi( X, \, \mu_{X,t+1} X   + \sigma_{X,t+1} \, W_{X,t+1} \right\} \right \vert \\
& \leq \Bigg \vert \frac{1}{d} \sum_{i=1}^d \psi (x_{i},\hx^{t+1}_i)  -
\frac{1}{d} \sum_{i=1}^d \psi(x_i, \tx^{T+t+1}_i) \Bigg \vert 
\\
&\hspace{3em}+  \Bigg \vert \frac{1}{d} \sum_{i=1}^d \psi(x_i, \tx^{T+t+1}_i) - \E\{ \psi(X, \mu_{\tX,T+t+1}X + \sigma_{\tX, T+t+1} W_{\tX,T+t+1}) \} \Bigg \vert  \\
&\hspace{3em} + \left \vert  \E\{ \psi(X, \mu_{\tX,T+t+1}X + \sigma_{\tX, T+t+1} W_{\tX,T+t+1}) \} 
- \E\{ \psi(X, \mu_{X,t+1}X + \sigma_{X, t+1} W_{X,t+1}) \} \right \vert \,\\
& :=  T_1 + T_2 + T_3.
\end{split}
\label{eq:T1T2T3_split}
\eeq
 We first bound $T_3$ using the pseudo-Lipschitz property of $\psi$, noting that $W_{\tX,T+t}$ and $W_{X,t}$ are both $\sim \normal(0,1)$:
\beq
\begin{split}
T_3 & \leq \E\left\{  \abs{\psi(X, \mu_{\tX,T+t+1}X + \sigma_{\tX, T+t+1} W) - \psi(X, \mu_{X,t+1}X + \sigma_{X,t+1} W) }\right\}, \qquad W \sim \normal(0,1) \\
&\leq 
C \E\Bigg\{ \Big(1 + \Big[ X^2 + \mu_{\tX,T+t+1}^2 X^2 + \sigma_{\tX, T+t+1}^2 W^2\Big]^{1/2}  \, + \, \Big[ X^2 + \mu_{X,t+1}^2 X^2 + \sigma_{X,t+1}^2 W^2\Big]^{1/2}\Big)  \\
& \qquad \qquad \cdot \Big( X^2(\mu_{\tX,T+t+1} - \mu_{X,t+1})^2 +  
W^2( \sigma_{\tX,T+t+1} - \sigma_{X,t+1})^2 \Big)^{1/2}   \Bigg \} \\
& \leq 3C \left( 3 + \mu_{\tX,T+t+1}^2 + \sigma_{\tX, T+t+1}^2 + \mu_{X,t+1}^2 +  \sigma_{X,t+1}^2  \right)^{1/2} \\
&\hspace{3em}\cdot\left(   (\mu_{\tX,T+t+1} - \mu_{X,t+1})^2 + ( \sigma_{\tX,T+t+1} - \sigma_{X,t+1})^2   \right)^{1/2},
\end{split} 
\eeq
where we have used Cauchy-Schwarz inequality in the last line.  From Lemma \ref{lem:ArtTrueconn} (Eq. \eqref{eq:SE_gapX}), we know that 
$\lim_{T \to \infty} \abs{\mu_{\tX,T+t+1} - \mu_{X,t+1}} =0$ and 
$\lim_{T \to \infty} \abs{\sigma_{\tX,T+t+1} - \sigma_{X,t+1}} =0$. Therefore, 
$\lim_{T \to \infty} T_3 =0$.  Next, from \eqref{eq:xhat_til} we have that $\lim_{T \to \infty} \lim_{d \to \infty} T_1 =0$ almost surely. Furthermore, by Proposition \ref{prop:tilSE}, for any $T >0$ we almost surely have $\lim_{d \to \infty} T_2 =0$. Letting $T, d \to\infty$  (with the limit in $d$ taken first) and noting that the LHS of \eqref{eq:T1T2T3_split} does not depend on $T$, we obtain  that \eqref{eq:SE_main_hat} holds.  

The proof of \eqref{eq:SE_main_hatU} uses a bound similar to \eqref{eq:T1T2T3_split} and arguments along the same lines. It is omitted for brevity.

Next, we prove the main result by showing that under the assumptions of the theorem,  almost surely 
\begin{align}
& \lim_{n \to \infty} \abs{ \frac{1}{n} \sum_{i=1}^n \psi(y_i, u^{t}_i)  - \frac{1}{n} \sum_{i=1}^n \psi(y_i, \hu^{t}_i) } = 0,
\qquad  
\lim_{n \to \infty} \frac{\| \bu^{t} - \hbu^{t} \|_2^2}{n} =0, \quad t \geq 0
\label{eq:psi_limits_U} \\
& \lim_{d \to \infty} \abs{ \frac{1}{d} \sum_{i=1}^d \psi(x_i, x^{t+1}_i)  - \frac{1}{d} \sum_{i=1}^d \psi(x_i, \hx^{t+1}_i) } = 0, \qquad \lim_{d \to \infty}\frac{\| \bx^{t+1} - \hbx^{t+1} \|_2^2}{d}=0, \quad t+1 \geq 0.
\label{eq:psi_limits_X}
\end{align} 
Combining \eqref{eq:psi_limits_X}-\eqref{eq:psi_limits_U} with \eqref{eq:SE_main_hat}-\eqref{eq:SE_main_hatU} yields the results in   \eqref{eq:SE_main} and \eqref{eq:SE_main_U}.

 The proof of \eqref{eq:psi_limits_X} and \eqref{eq:psi_limits_U} is via induction and uses arguments very similar to those to prove \eqref{eq:uhat_til}-\eqref{eq:xhat_til}. To avoid repetition we only provide a few steps. Noting that $\bx^0 = \hbx^0$, we now show \eqref{eq:psi_limits_X}, under the induction hypothesis that \eqref{eq:psi_limits_U} holds and also that \eqref{eq:psi_limits_X} holds with $t+1$ replaced by $t$.

 Since $\psi \in \PL(2)$, we have
\begin{align}
& \abs{ \frac{1}{d} \sum_{i=1}^d \psi(x_i, x^{t+1}_i)  - \frac{1}{d} \sum_{i=1}^d \psi(x_i, \hx^{t+1}_i) }    
  \leq  4C \Bigg[ 1+ \frac{\| \bx \|_2^2}{d} + \frac{\| \bx^{t+1} \|_2^2}{d} + \frac{\| \hbx^{t+1} \|_2^2}{d}
\Bigg]^{\frac{1}{2}} \,  \frac{\| \bx^{t+1}  - \hbx^{t+1} \|_2}{\sqrt{d}}. 
\label{eq:psi_x_xhat_t1}
\end{align}
Furthermore, using the definitions of $\bx^{t+1}$ and $\hbx^{t+1}$, and the triangle inequality we have 
\begin{align}
&    \frac{\| \bx^{t+1}  - \hbx^{t+1} \|_2^2}{d} \leq  \frac{2}{\delta d} 
\| \bA^{\sT} h_{t}(\bu^{t}; \by) - \bA^{\sT} h_{t}(\hbu^{t}; \by)  \|_2^2 \,  +4 \, \frac{\| f_{t}(\bx^{t}) \|_2^2}{d} (\sc_{t} - \bsc_{t})^2  
 \, + \, 4   \bsc_{t}^2 \frac{\| f_{t}(\bx^{t}) - f_{t}(\hbx^{t}) \|_2^2}{d} \nonumber \\
& \leq  \frac{2\bar{L}_{t}^2}{\delta} \| \bA \|_{\op}^2  \frac{\| \bu^{t} - \hbu^{t} \|_2^2}{d} \,  + 4\, \frac{\| f_{t}(\bx^{t}) \|_2^2}{d} (\sc_{t} - \bsc_{t})^2 \, + \, 4\bsc_{t}^2  L_{t}^2\frac{\| \bx^{t} - \hbx^{t} \|_2^2}{d},
\label{eq:x_xh_decomp}
\end{align}
where $L_{t}, \bar{L}_{t}$ are the Lipschitz constants of $f_{t}, h_{t}$, respectively. By the induction hypothesis and Lemma \ref{lem:lipderiv}, the terms $\frac{\| \bu^{t} - \hbu^{t} \|_2^2}{d}$, $\frac{\| \bx^{t} - \hbx^{t} \|_2^2}{d}$, and $(\sc_{t} - \bsc_{t})^2$ tend to zero. Furthermore,  by the induction hypothesis, we almost surely have $\frac{\| f_{t}(\bx^{t}) \|_2^2}{d} \to \E\{ f_{t}(X_{t})^2\}$, and by \eqref{eq:SE_main_hat},  $\frac{\| \hbx^{t+1} \|_2^2}{d} \to  (\mu_{X,t+1}^2 + \sigma_{X,t+1}^2)$ as $d \to \infty$. Finally, by a triangle inequality argument analogous to \eqref{eq:hbx_tbx_treq}, we also have 
$$\lim_{d \to \infty}\frac{\| \bx^{t+1} \|_2^2}{d} = \lim_{d \to \infty}\frac{\| \hbx^{t+1} \|_2^2}{d} = (\mu_{X,t+1}^2 + \sigma_{X,t+1}^2) \quad \text{ a.s.}$$
Using these limits in \eqref{eq:psi_x_xhat_t1} proves \eqref{eq:psi_limits_X}. The proof of  \eqref{eq:psi_limits_U} (under the induction hypothesis that \eqref{eq:psi_limits_X} holds with $(t+1)$ replaced by $t$) is along the same lines: we use a bound similar to \eqref{eq:psi_x_xhat_t1} and a decomposition of $\frac{\| \bu^{t} - \hbu^{t} \|_2^2}{n}$ similar to \eqref{eq:x_xh_decomp}. This completes the proof of the theorem.
\qed

\section{An Auxiliary Lemma} \label{app:aux_Lip_lemma}

The following result is proved in \cite[Lemma 6]{BM-MPCS-2011}.
\begin{lemma}
\label{lem:lipderiv}
Let $F \colon \reals^2 \to\reals$ be a Lipschitz function, and let $F'(u,v)$ denote its derivative with respect to the first argument at $(u,v) \in \reals^2$. Assume that  $F'( \cdot, v)$ is continuous almost everywhere in the first argument, for each $v \in \reals$. Let $(U_m, V_m)$  be a sequence of random vectors in $\reals^2$ converging in distribution to the random vector $(U,V)$ as $m \to \infty$. Furthermore, assume that the distribution of $U$ is absolutely continuous with respect to the Lebesgue measure. Then,
\[ \lim_{m \to \infty}  \E\{ F'(U_m, V_m) \} = \E\{ F'(U,V) \}. \]
\end{lemma}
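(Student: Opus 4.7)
The plan is to approximate $F$ by a family of partial mollifications whose derivatives are bounded and jointly continuous, apply the portmanteau theorem to each smoothed derivative, and close the argument with a three-$\varepsilon$ estimate. The essential obstacle is that $F'$ is merely a bounded measurable function (bounded by the Lipschitz constant $L$ of $F$), so the continuous mapping theorem does not apply to $F'$ directly; the a.e.\ continuity of $F'(\cdot, v)$ and the absolute continuity of $U$ are used precisely to control the approximation error.

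Concretely, I would fix a smooth, compactly supported probability density $\rho$ on $\reals$ and set $F_k(u, v) = \int F(u - s/k, v)\,\rho(s)\,ds$. Each $F_k$ is $L$-Lipschitz, and its derivative in the first argument admits the two equivalent forms
\[
\partial_u F_k(u, v) \;=\; \int F'(u - s/k, v)\,\rho(s)\,ds \;=\; -k\int F(u - s/k, v)\,\rho'(s)\,ds.
\]
The second form involves only the continuous function $F$, so $\partial_u F_k$ is bounded by $L$ and jointly continuous on $\reals^2$. Since $(U_m, V_m) \Rightarrow (U, V)$, the portmanteau theorem yields $\lim_{m \to \infty}\E\{\partial_u F_k(U_m, V_m)\} = \E\{\partial_u F_k(U, V)\}$ for every fixed $k$.

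For the passage $k \to \infty$ at the limit point, I would use the first representation of $\partial_u F_k$: by the assumed a.e.\ continuity of $F'(\cdot, v)$, the shift $F'(U - s/k, V)$ converges to $F'(U, V)$ almost surely as $k \to \infty$ for each fixed $s$, because the set where $F'(\cdot, V)$ is discontinuous has Lebesgue measure zero and hence zero probability under the absolutely continuous law of $U$. Two applications of bounded convergence (first in $s$ against $\rho(s)\,ds$, then in $\omega$) deliver $\E\{\partial_u F_k(U, V)\} \to \E\{F'(U, V)\}$.

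The main technical obstacle is the corresponding uniform control in $m$, namely $\lim_{k \to \infty}\sup_m |\E\{F'(U_m, V_m) - \partial_u F_k(U_m, V_m)\}| = 0$, since the laws of $U_m$ are not assumed to be absolutely continuous and the argument of the previous paragraph cannot be reused. The plan is to exploit the $F$-only representation of $\partial_u F_k$ together with tightness of $\{(U_m, V_m)\}$ coming from weak convergence: on a large compact set $K$ on which $F$ is uniformly continuous, the modulus of continuity of $F$ controls the error $|\partial_u F_k(u,v) - F'(u,v)|$ uniformly in $(u,v) \in K$, while outside $K$ the pointwise bound $|F' - \partial_u F_k| \le 2L$ combined with tightness makes the contribution to $\E\{\cdot\}$ negligible uniformly in $m$. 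Combining the three limits by a standard three-$\varepsilon$ argument — first pick $k$ large enough that the $k$-errors are small both at $(U, V)$ and uniformly in $m$, then let $m \to \infty$ — concludes the proof.
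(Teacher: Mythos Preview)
The paper does not prove this lemma; it only records the statement and cites \cite[Lemma~6]{BM-MPCS-2011}, so there is no in-paper argument to compare against.

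Your mollification-plus-three-$\varepsilon$ architecture is natural, and the portmanteau step for fixed $k$ as well as the $k\to\infty$ step at the limit point $(U,V)$ are both fine. The genuine gap is in the uniform-in-$m$ step. You claim that on a compact $K$ ``the modulus of continuity of $F$ controls the error $|\partial_u F_k(u,v)-F'(u,v)|$ uniformly in $(u,v)\in K$.'' That is false. Take $F(u,v)=|u|$, so $F'=\mathrm{sign}(u)$; at the point $u_k=c/(2k)$ (with $c$ the support radius of $\rho$) one has $F'(u_k)=1$, while
\[
\partial_u F_k(u_k)=\int \mathrm{sign}\!\Big(u_k-\frac{s}{k}\Big)\,\rho(s)\,ds
\]
is a fixed number strictly less than $1$, \emph{independent of $k$}. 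The modulus of continuity of $F$ (here $\omega(\delta)=\delta$) says nothing about pointwise differences of \emph{derivatives}: $|\partial_u F_k-F'|$ remains of order one on an $O(1/k)$-neighbourhood of every discontinuity of $F'$, a set that shrinks in Lebesgue measure but not in sup norm. Tightness of $\{(U_m,V_m)\}$ only confines mass to a compact; it gives no bound on how much mass $(U_m,V_m)$ can put on those shrinking bad sets, so you cannot convert ``small Lebesgue measure'' into ``small expectation uniformly in $m$.''

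This is precisely the obstruction the lemma is built to handle, and mollification does not sidestep it. The standard route (as in the cited reference) avoids the uniform-in-$m$ problem altogether: since $F'$ is bounded by the Lipschitz constant of $F$, the extended continuous-mapping theorem applies directly to $F'$ once one checks that the law of $(U,V)$ assigns zero mass to the discontinuity set of $F'$ --- and the hypotheses on a.e.\ continuity of $F'(\cdot,v)$ together with absolute continuity of $U$ are tailored exactly to deliver that null-set condition. If you want to rescue the mollification approach, you would have to replace the false sup-norm bound by an argument that the mollification error is small \emph{in expectation} uniformly in $m$; but any such argument will end up using the same null-set input, at which point the direct continuous-mapping route is both shorter and cleaner.
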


\section{Complex-valued GAMP} \label{sec:complex_GAMP}

Consider a complex sensing matrix $\bA$ with rows  distributed as $( \ba_i) \ \sim_{\rm i.i.d.} \ \cnormal(0, \bI_d/d))$, for $i \in [n]$.  The output of the GLM $\by  \in \mathbb{C}^n$ is generated as $p_{Y|G}(\by \mid \bg )$, where $\bg = \bA \bx$. The GAMP algorithm for the complex setting has been  studied in the context of phase retrieval by \cite{schniter2014compressive, ma2019optimization}. Here, we briefly review the complex GAMP and present some numerical results for complex GAMP with spectral initialization.

As in Section \ref{sec:simulations},  we take $f_t$ to be the identity function, and $h_t = \sqrt{\delta} h_t^*$, where $h_t^*$ is given in \eqref{eq:opt_gt}. To obtain a compact state evolution recursion, we initialize with a scaled version of the  spectral estimator $\bxs$: 
 \begin{align}
 & \bx^0= \sqrt{d} \, \, \frac{a}{1-a^2}\, \bxs, 
  \qquad \bu^0 = \frac{1}{\sqrt{\delta}} \bA \bx^0 -  \frac{1}{\sqrt{\delta} \lambda_\delta^*} \,  \bZ_s  \bA \bx^0. \label{eq:x0_u0_spec_comp} 
 \end{align}
The iterates are then computed as:
\begin{align}
\bx^{t+1} &= \bA^{\sf H} h_t^*(\bu^t; \by) - \sc_t f_t(\bx^t),  \label{eq:xt_update_comp} \\
\bu^{t+1} &= \frac{1}{\sqrt{\delta}}\bA \bx^{t+1} - \frac{1}{\sqrt{\delta}} h_t^*(\bu^{t}; \by). \label{eq:ut_update_comp} 
\end{align}
Here, the Onsager coefficient $\sc_t$ is given by \cite{schniter2014compressive}:
\beq
\sc_t=  \frac{\sqrt{\delta}}{{\sf Var}(G \mid U_t =u)} \Bigg(\frac{{\sf Var}\{ G \mid U_t= u, \, Y=y \}}{{\sf Var}(G \mid U_t =u)} - 1 \Bigg).
\label{eq:sc_t_comp}
\eeq
For this choice of $f_t, h_t$, the state evolution iteration can be written in terms of a single parameter $\mu_t \equiv \mu_{X,t}$. For $t \geq 0$:
\beq
\begin{split}
& \mu_{U,t} = \frac{1}{\sqrt{\delta}} \mu_t, \qquad \sigma_{U,t}^2 = \frac{\mu_t}{\delta}, \qquad \sigma_{X,t}^2 = \mu_{X,t} = \mu_t,
\\
& \mu_{t+1} = \sqrt{\delta} \E\left\{ \abs{h_t^*(U_t; \, Y)}^2 \right\}.
\end{split}
\label{eq:comp_SE}
\eeq
The recursion is initialized with $\mu_0 = \frac{a^2}{1-a^2}$. Moreover, the parameter $\mu_{t+1}$ can be consistently estimated from the iterate $\bu^t$ as $\hat{\mu}_{t+1} = \sqrt{\delta} \| h^*(\bu^t; \by) \|^2_2/n$. It can also be estimated as the positive solution of the quadratic equation $\hat{\mu}_{t+1}^2 + \hat{\mu}_{t+1} = \| \bx^{t+1}\|^2_2/d$.

\begin{figure}[t!]
    \centering    
    \includegraphics[width=0.65\linewidth]{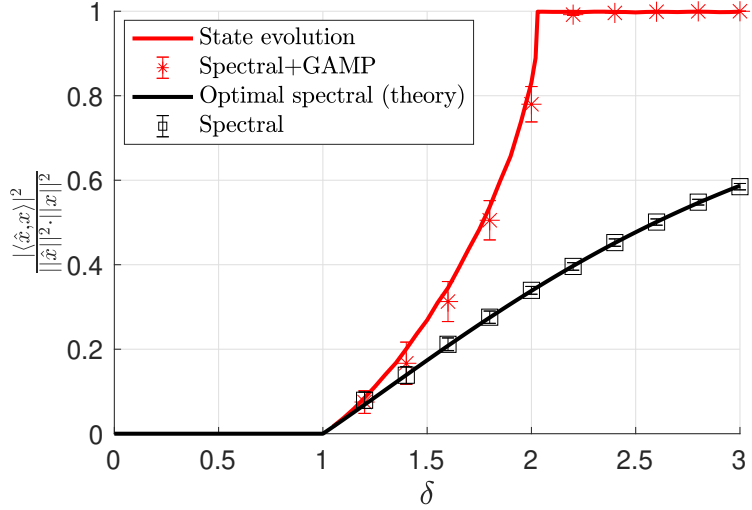}
\caption{Performance comparison between complex GAMP with spectral initialization (in red) and the spectral method alone (in black) for a Gaussian prior $P_X \sim \cnormal(0,1)$. On the $x$-axis, we have the sampling ratio $\delta = n/d$; on the $y$-axis, we have the normalized squared scalar product between the  signal  and the estimate. The experimental results ($*$ and $\square$ markers) are in excellent agreement with the theoretical  predictions (solid lines) given by state evolution for GAMP and Lemma \ref{lemma:pt} for the spectral method. Error bars indicate one standard deviation around the empirical mean.} \label{fig:complex}
\end{figure}

\begin{figure}[t!]
    \centering
    \includegraphics[width=0.65\linewidth]{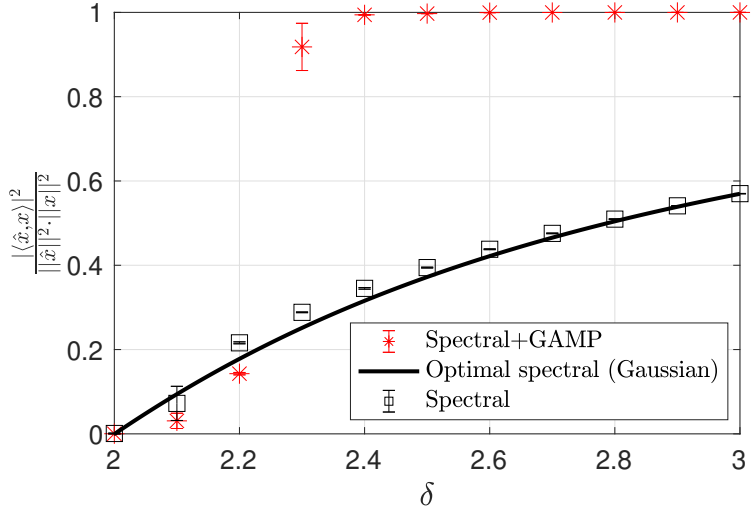}
    \caption{Performance comparison between complex GAMP with spectral initialization (in red) and the spectral method alone (in black) for a model of coded diffraction patterns.}
\label{fig:cdp}
\end{figure}

We now discuss some numerical results for noiseless (complex)  phase retrieval, where $y_i = \abs{(\bA \bx)_i}^2$, for $i \in [n]$.   For a given measurement matrix $\bA$, note that replacing $\bx$ by $e^{i \theta} \bx$ leaves the measurement $\by$ unchanged. Therefore the performance of any estimator is measured up to a constant phase rotation:
\beq 
\label{eq:comp_scalar_prod}
\min_{\theta \in [0, 2 \pi)} \, 
\frac{\Big \vert \langle \hbx, \, e^{i \theta} \bx \rangle \Big \vert^2 }{\| \bx \|^2_2 \, \| \hbx \|^2_2 }. 
\eeq
Figure \ref{fig:complex} shows the performance of GAMP with spectral initialization when the signal $\bx$ is uniform on the $d$-dimensional \emph{complex} sphere with radius $\sqrt{d}$, and the sensing vectors $( \ba_i) \ \sim_{\rm i.i.d.} \ \cnormal(0, \bI_d/d)$.

Figure \ref{fig:cdp} shows the performance  with coded diffraction pattern sensing vectors, given by \eqref{eq:cdp_def}.  The signal $\bx$ is the image in Figure \ref{fig:ven}, which is a $d_1 \times d_2 \times 3$ array with $d_1 = 820$ and $d_2=1280$.  The three components $\bx_j\in \mathbb R^{d}$ ($j\in\{1, 2, 3\}$ and $d=d_1\cdot d_2$) are treated separately, and the performance is measured via the average squared normalized scalar product $\frac{1}{3}\sum_{j=1}^3 \frac{|\langle\hat{\bx}_j, \bx_j\rangle|^2}{\norm{\hat{\bx}_j}_2^2 \norm{\bx_j}_2^2}$. 

The red points in Figure \ref{fig:cdp} are obtained by running the complex GAMP algorithm with spectral initialization, as given in \eqref{eq:x0_u0_spec_comp}-\eqref{eq:sc_t_comp}. We perform $n_{\rm sample} = 5$ independent trials and show error bars at 1 standard deviation. For comparison, the black points correspond to the empirical performance of the spectral method alone, and the black curve gives the theoretical prediction for the optimal squared correlation for Gaussian sensing vectors (see Theorem 1 of \cite{luo2019optimal}).

\end{document}